\def\eqref#1{equation~\ref{#1}}
\def\1{\bm{1}}
\DeclareMathAlphabet{\mathsfit}{\encodingdefault}{\sfdefault}{m}{sl}
\SetMathAlphabet{\mathsfit}{bold}{\encodingdefault}{\sfdefault}{bx}{n}
\pgfplotsset{compat=1.18}
\newtheoremstyle{tight}{0pt}{0pt}{}{}{\bfseries}{:}{0.5em}{}
\theoremstyle{tight}
\newtheorem{theorem}{Theorem}
\newtheorem{lemma}[theorem]{Lemma}
\definecolor{claude}{RGB}{31, 119, 180}
\definecolor{claudeThink}{RGB}{255, 127, 14}
\definecolor{gemini}{RGB}{44, 160, 44}
\definecolor{o4mini}{RGB}{214, 39, 40}
\definecolor{deepseek}{RGB}{148, 103, 189}
\theoremstyle{definition}
\newtheorem{definition}{Definition}[section]
\newcommand{\reasoningemoji}{\ding{72}}  
\newcommand{\unkemoji}{\ding{88}} 
\newcommand{\moeemoji}{\ding{110}}      
\lstdefinestyle{jsonstyle}{
    backgroundcolor=\color{white},
    basicstyle=\ttfamily\small,
    breaklines=true,
    frame=single,
    rulecolor=\color{black},
    showstringspaces=false,
    stringstyle=\color{orange},
    keywordstyle=\color{blue},
    commentstyle=\color{gray},
    morekeywords={true,false,null}
}
\title{PBEBench: A Multi-Step Programming by Examples Reasoning Benchmark inspired by Historical Linguistics}
\author{
 \textbf{Atharva Naik\textsuperscript{1}},
 \textbf{Darsh Agrawal\textsuperscript{1}},
 \textbf{Prakam\textsuperscript{3}},
 \textbf{Yash Mathur\textsuperscript{3}},
\\
 \textbf{Manav Kapadnis\textsuperscript{1}},
 \textbf{Yuwei An\textsuperscript{1}},
 \textbf{Clayton Marr\textsuperscript{2}},
 \textbf{Carolyn Rose\textsuperscript{1}},
\\
 \textbf{David Mortensen\textsuperscript{1}}
\\
\\
 \textsuperscript{1}Carnegie Mellon University,
 \textsuperscript{2}Ohio State University,
 \textsuperscript{3}Independent Researcher
\\
 \small{
   \texttt{\{arnaik,darsha,mkapadni,yuweia,cprose,dmortens\}@cs.cmu.edu}
 }
}
\begin{document}
\maketitle

\begin{abstract}
While many benchmarks evaluate the reasoning abilities of Large Language Models (LLMs), few isolate reasoning as a capability independent of domain knowledge. We introduce a new benchmark for inductive reasoning inspired by Sound Law Induction (SLI) in historical linguistics and formulated in a simple multi-step Programming by Example (PBE) framework. 
The task requires inducing a cascade of string rewrite programs that transform inputs into target outputs. 
We present PBEBench, a fully automated evaluation approach that generates such problems with controllable difficulty and ordering constraints, enabling scalable and contamination-resistant evaluation of sequential inductive reasoning. 
Using this approach, we construct three datasets that show a large gap between models that leverage test-time compute or long chain-of-thought reasoning and those that do not. 
Although recent models such as GPT-5 and gpt-oss-120b show promise, solve rates remain below 5\% on hard \textit{PBEBench} instances with long program cascades, even under computationally expensive scaling strategies. Finally, we show that \textit{PBEBench} scores are more predictive of performance on real SLI than are other inductive reasoning benchmarks. 
We will release code and data to support further research.
\end{abstract}

\section{Introduction}

In historical linguistics, explaining how ancestral word forms evolve into their modern reflexes requires positing an ordered sequence of sound changes. 
The order is critical: applying a change too early or too late can eliminate the conditions that enable other changes, producing incorrect outcomes. 
For example, if the change mapping Proto-Tangkhulic /\textipa{i}/ to /\textipa{W}/ word-finally in Tusom had preceded the deletion of word-final /\textipa{N}/, the Tusom word for `snail’ (from *\textipa{liN}) would be /\textipa{li}/ rather than the attested /\textipa{lW}/
Linguists say that the second rule \textsc{feeds} the first because it creates contexts where the first can apply.
There are three other ordering-sensitive relations between sound changes: \textsc{bleeding} (A removes contexts for B), \textsc{counter-feeding} (B would feed A if ordered first), and \textsc{counter-bleeding} (B would bleed A if ordered first).
Inferring such interactions has long been recognized as a core challenge in phonological theory \citep{Kiparsky1968-KIPLUA, kiparsky1971historical}.
At an abstract level, Sound Law Induction (SLI) can be viewed as a programming-by-example task: given input–output string pairs, the goal is to infer a sequence of local rewrite rules whose composition maps inputs to outputs \cite{naik2024can, naik2025programming}.
Similar ordering constraints arise in other domains, such as multi-file code refactoring \cite{gautamrefactorbench}, where transformations must be applied in a consistent order while reasoning over evolving intermediate states under partial observability.
Formally, given inputs $\vec{\imath}$ (ancestral words or input files) and corresponding outputs $\vec{o}$, the task is to infer any valid program \emph{cascade} $\vec{p}$, of incremental transformations that maps $\vec{\imath}$ to $\vec{o}$.

\cite{naik2025programming, naik2024can} show that LLMs struggle with SLI, but benefit more from training on logically consistent synthetic sound laws than on real sound laws from \emph{Index Diachronica}. 
However, these works do not explain why this is the case. 
We hypothesize that the difficulty arises not from individual rewrite operations, but from inducing sequences of interacting transformations whose correctness depends on precise ordering and unobserved intermediate states, explaining the advantage of structurally consistent synthetic data. 
Despite minimal formal machinery, the problem exhibits rich interaction structure, making it a natural testbed for multi-step inductive reasoning and sequential planning in a knowledge-free setting \cite{ma2024kor}.
We propose PBEBench, an automated approach towards generating such data automatically and scalably. 
PBEBench exhibits several desirable properties missing in existing reasoning benchmarks: (1) it does not rely on specialized background knowledge (see Section~\ref{sec:related-work}); (2) its difficulty emerges from a simple, formally specified, finite, problem space of sequential string rewrites; (3) it supports explicit control of fundamental interaction patterns---feeding, bleeding, counter-feeding, and counter-bleeding---studied in historical phonology \cite{Kiparsky1968-KIPLUA,kiparsky1971historical} (termed as BFCC relations here); (4) it is inherently resistant to data contamination and saturation, as new instances with controlled difficulty can be generated scalably; and (5) it evaluates models on practically and scientifically meaningful induction problems rather than human–model comparisons.

\phantomsection
\label{sec:introduction:research_questions}
Our study investigates the following hypotheses: \\
\textbf{H1: LLMs struggle with multi-step inductive reasoning.} We benchmark 13 reasoning and 8 non-reasoning models on three PBEBench datasets to study reasoning bottlenecks in LLMs. The knowledge-free setting suggests that reasoning models should outperform non-reasoning ones. \\
\textbf{H2: Difficulty in our dataset is driven by interaction structure.} We hypothesize that while inducing a single string rewrite is straightforward, LLMs struggle to compose long cascades, especially under complex BFCC ordering constraints. We test this by contrasting increased per-step complexity (via more examples) with longer cascades and richer BFCC interactions.
\\
\textbf{H3: Performance on PBEBench predicts performance on real SLI.} 
We evaluate all open-source LLMs on real SLI data and contemporary synthetic inductive reasoning benchmarks, and compare correlations to identify which benchmarks best predict real SLI performance. \\
\textbf{H4: Scaling the thinking budget is more effective than increasing the sampling budget.} We compare two scaling strategies: increasing the thinking budget (maximum sequence length) and the sampling budget, as described in Section~\ref{sec:program_induction:prompting}, to determine which yields more ``bang for the buck'' under a fixed compute budget.
 \section{Related Work}
 \label{sec:related-work}

\textbf{Programming By Example.}
Programming by Example (PBE) \citep{10.1145/1836089.1836091} is a program synthesis paradigm where programs are inferred from input–output pairs.
Early symbolic approaches rely on domain-specific languages and constraints: FlashFill uses a string-transformation DSL \citep{Gulwani2011}, while Syntax-Guided Synthesis (SyGuS) restricts program search via grammars \citep{Alur2013}.
DeepCoder \citep{Balog2017} learns function predictions to guide search, and RobustFill \citep{devlin2017robustfillneuralprogramlearning} trains sequence-to-sequence models to directly emit DSL programs. More recently, Large Language Models show few-shot capability in code generation \citep{Chen2021, guo2024deepseekcoderlargelanguagemodel} and test case generation \citep{li2024largelanguagemodelstest}, but on traditional PBE, they struggle with out-of-distribution data and typically improve after fine-tuning on the target distribution \citep{li2024is, naik2025programming}.

\textbf{Inducing Context-Sensitive Grammars in LLMs.} 
Attempts to induce string-rewrite rules from data have a long history \citep{gildea-jurafsky-1995-automatic}, with linguistic analyses of their formal properties dating back further \citet{Kiparsky1968-KIPLUA,kiparsky1971historical}.
\citet{naik2024can} show that LLMs can induce sound laws, extending this to full context-sensitive program synthesis \citep{naik2025programming}. 
However, these works do not provide provably correct algorithms for detecting feeding and bleeding relations, which is a unique contribution of this work (see Appendix~\ref{sec:theory}).

\textbf{Reasoning and Induction Benchmarks.} 
Several benchmarks have been proposed to evaluate reasoning and PBE capabilities. Code-centric suites such as HumanEval and MBPP assess function generation \citep{Chen2021,Austin2021}, while FlashFill-style datasets target example-driven string transformations \citep{Gulwani2011}. More recent benchmarks emphasize software-engineering workflows \citep{jain2025testgenevalrealworldunit,zhang2025dynamicbenchmarkconstructionevaluating, shao-etal-2025-case2code}. 
However, software engineering tasks often entangle general reasoning with domain-specific knowledge (of tools and libraries), leading to uneven prior exposure and complicating contamination-free evaluation.
Beyond code, multi-step, system-2, and mathematical reasoning are explored by HotpotQA, DROP, BIG-Bench Hard, and GSM8K \citep{yang2018hotpotqa,Dua2019DROPAR,suzgun2022challenging,Cobbe2021}.
Inductive rule learning and compositional generalization are probed via CLUTRR and SLR \citep{sinha2019clutrr,helff2025slr}, bAbI and KOR-Bench \citep{weston2015towards,ma2024kor}, and extrapolative splits such as SCAN and CFQ \citep{Lake2018,Keysers2020}; ARC provides a non-language analog \citep{chollet2019measureintelligence}. 
In contrast, PBEBench targets multi-step synthesis of \emph{string-rewriting cascades}, is easily scalable and leakage-resistant, and supports difficulty control through simple generation parameters.

\section{Methodology}
\label{sec:methodology}
Our approach comprises two components: (1) a problem proposer and (2) a problem solver (Fig.~\ref{fig:proposer_and_solver}). 
The proposer is a symbolic system that generates PBE instances with controllable difficulty and forms the core of our dynamic benchmarking framework. 
It scalably produces novel, contamination-free input, output, and program triples of controllable difficulty. 
The solver corresponds to any system under evaluation. 
We benchmark state-of-the-art LLMs on inductive reasoning using a program reordering, multi-step PBE task, and additionally evaluate them on human-curated, low-resource Sound Law Induction (SLI) data.

\begin{figure*}[!tbh]
\centering
\begin{subfigure}[t]{0.48\textwidth}
    \centering
    \includegraphics[width=\linewidth]{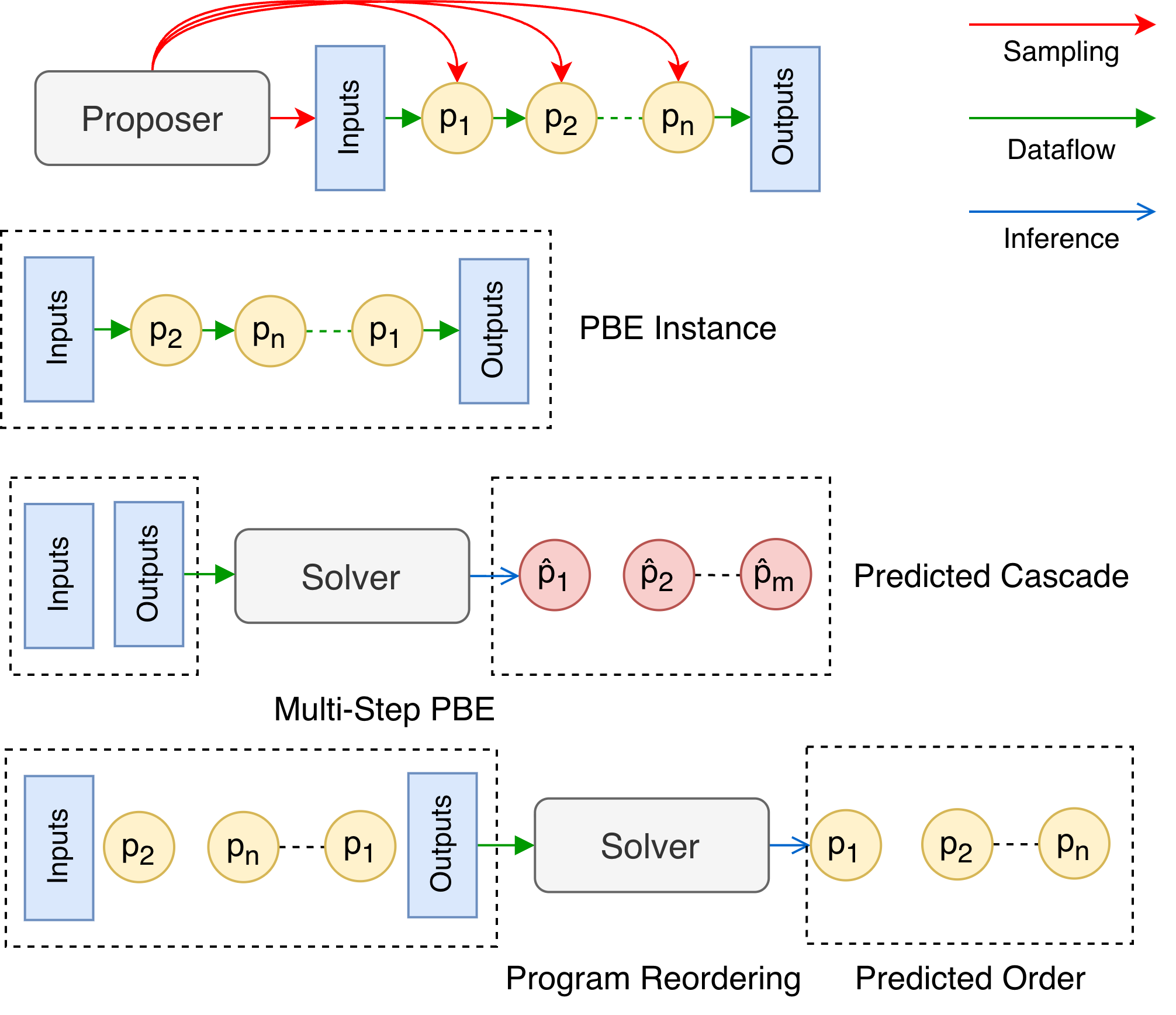}
    \caption{Proposer–solver architecture}
    \label{fig:proposer_and_solver}
\end{subfigure}
\hfill
\begin{subfigure}[t]{0.48\textwidth}
    \centering
    \includegraphics[width=\linewidth]{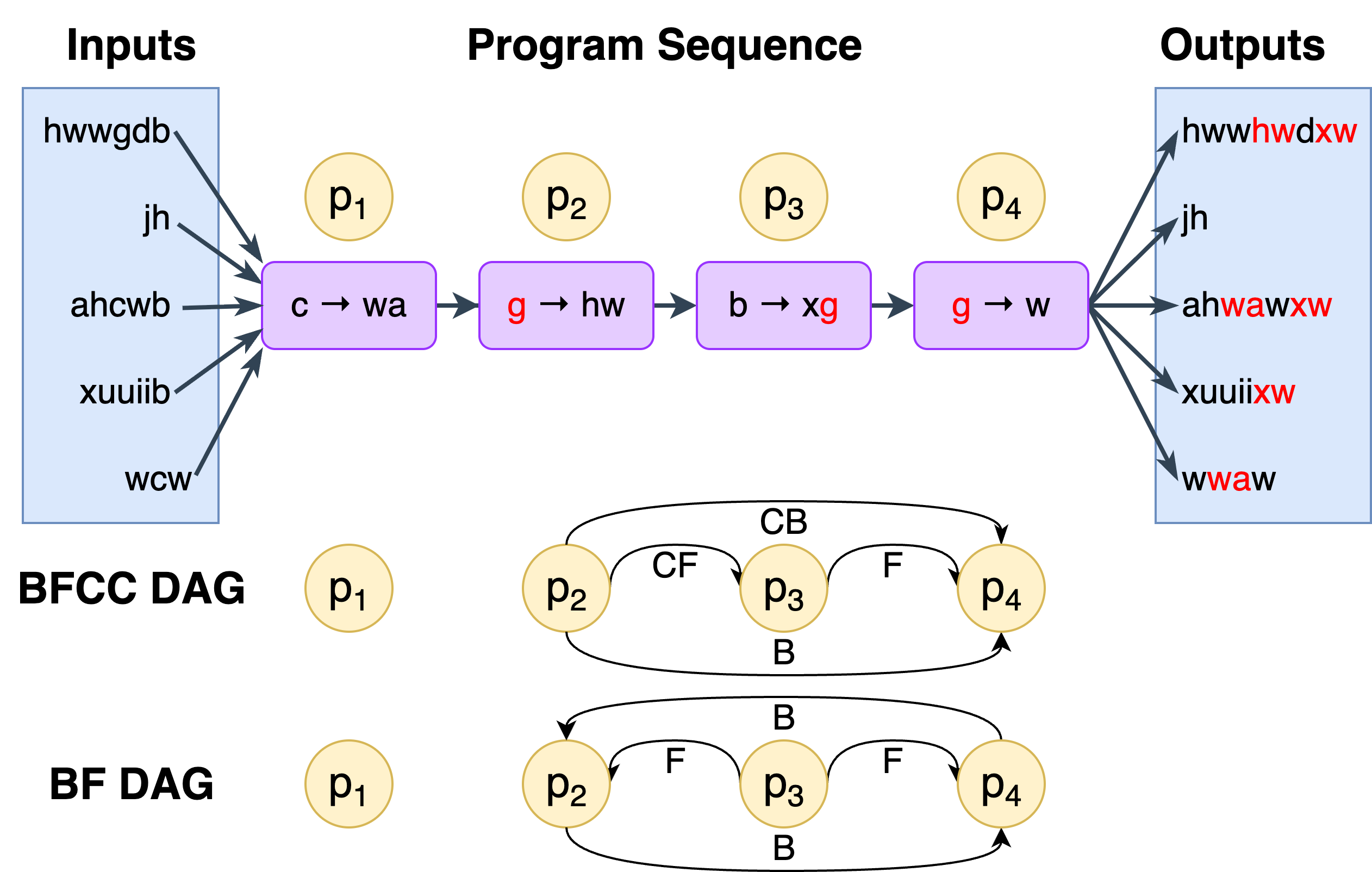}
    \caption{Structure of a PBE data instance}
    \label{fig:data_instance}
\end{subfigure}
\caption{Overview of PBEBench. (a) A symbolic proposer generates PBE instances of controllable difficulty, and a solver attempts to recover the underlying program sequence or ordering. (b) Each instance consists of <$\vec{\imath},\vec{p},\vec{o}$> triples, with program relations encoded by the BFCC DAG and a simplified BF variant that replaces counterfactual relations with a reversed link.}
\label{fig:pbe_overview}
\end{figure*}

\subsection{Problem Proposer}
\label{sec:problem_proposer}
The symbolic proposer (Fig~\ref{fig:proposer_and_solver}, top) constructs data for the multi-step PBE task by sampling inputs from a distribution of strings $\mathcal{I}$ and a sequence of string rewrite programs from a distribution $\mathcal{P}$.
Each program acts as a find-and-replace function, substituting all occurrences of a substring $\alpha$ with a substring $\beta$ (details in Section~\ref{sec:program_sampling_and_output_generation}).
The proposer applies the sampled \textit{cascade} of programs $\vec{p}=\langle p_1,\dots,p_m\rangle \in \mathcal{P}$ to the sampled inputs $\vec{\imath}=\langle i_1,\dots,i_n\rangle \in \mathcal{I}$, producing outputs
$\vec{o}=\langle p_m(\dots p_1(i_1)), \dots, p_m(\dots p_1(i_n))\rangle$.
For convenience, we write this transformation as $\vec{o}=\vec{p}(\vec{\imath})$.
Each dataset instance (Fig~\ref{fig:proposer_and_solver}, second from top) is defined by the triplet $\langle \vec{\imath}, \vec{p}, \vec{o} \rangle$, consisting of the inputs, the program cascade, and the corresponding outputs.
In addition, the proposer records metadata describing interactions between programs (e.g., \textsc{bleeding}, \textsc{feeding}), illustrated in Fig~\ref{fig:data_instance}.

The proposer is parameterized by the number of examples $n$ ($|\vec{\imath}| = |\vec{o}| = n$; $n=5$ in Fig~\ref{fig:data_instance}); the input alphabet $\Sigma$, used to generate both inputs and string rewrite programs; the minimum and maximum input string lengths $l_{min}$ and $l_{max}$ ($l_{min}=2$, $l_{max}=6$ in Fig~\ref{fig:data_instance}); the minimum and maximum cascade lengths $L_{min}$ and $L_{max}$ ($L_{min} \leq |\vec{p}| \leq L_{max}$; $|\vec{p}|=4$, $L_{min}=2$, $L_{max}=5$ in Fig~\ref{fig:data_instance}); the minimum and maximum substring lengths $s_{min}$ and $s_{max}$ for $\alpha$ and $\beta$ in the rewrite programs ($s_{min} \leq |\alpha|,|\beta| \leq s_{max}$); and the desired dataset size $D$.
We denote a benchmark snapshot generated with fixed parameter values as $\mathcal{D}(n, \Sigma, L_{min}, L_{max}, l_{min}, l_{max}, s_{min}, s_{max}, D)$.

For the program reordering task, we take PBE instances (Fig~\ref{fig:proposer_and_solver}, second from top) generated by the proposer and permute the program sequence by swapping program pairs that exhibit \textsc{feeding} or \textsc{bleeding} interactions.
This produces a permuted sequence $\sigma(\vec{p})$ and a different output vector $\vec{o}_{\sigma} = \sigma(\vec{p})(\vec{\imath}) \neq \vec{o}$.
The goal of the LLM is to recover the original program sequence $\vec{p}$ from the original inputs $\vec{\imath}$, original outputs $\vec{o}$, and the permuted program sequence $\sigma(\vec{p})$, such that $\vec{p}(\vec{\imath})=\vec{o}$.

\subsubsection{Input Sampling}
\label{sec:methodology:input_sampling}
Our input sampling procedure is parameterized by $(n, \Sigma, l_{min}, l_{max})$ as defined in Section~\ref{sec:problem_proposer}. 
We build the initial set of input strings $\vec{i_0} = \langle i^{0}_1, \dots, i^{0}_n \rangle$ ($hwwgdb, jh, \dots wcw$ in Fig~\ref{fig:data_instance}) by independently sampling each input $i^0_{j}$ for $1 \leq j \leq n$. 
For each $i^0_j$, we first sample its length from a uniform discrete distribution, $|i^0_j| \sim \mathrm{Unif}\{l_{min}, \dots, l_{max}\}$,
where $|i^0_j|$ denotes the length of the string $i^0_j$. 
We then generate the string itself by sampling $|i^0_j|$ characters independently with replacement from the alphabet $\Sigma$, i.e.,
$i^0_j \sim \mathrm{Unif}(\Sigma)^{|i^0_j|}$.
\subsubsection{Program Sampling and Output Generation}
\label{sec:program_sampling_and_output_generation}
Program sampling is parameterized by $(\Sigma, L_{min}, L_{max}, s_{min}, s_{max})$. 
We begin by programmatically selecting a cascade length $L$ such that $L_{min} \leq L \leq L_{max}$ ($L=4$ in Fig~\ref{fig:data_instance}). 
Next, we construct a sequence of $L$ programs $\vec{p} = \langle p_1, \dots, p_L \rangle$, where each program is of the form $p_k = \mathrm{replace}(\alpha_k, \beta_k)$. 
Here, $\mathrm{replace}$ has the same semantics as Python’s built-in \texttt{replace()} method for strings: the substring $\alpha_k$ is replaced by $\beta_k$, with the restriction that $\alpha_k$ is non-empty.
For example in Fig~\ref{fig:data_instance}, the first program $c \rightarrow wa$ is parameterized by $\alpha_1=c$ and $\beta_1=wa$.
To sample each program $p_k$, we generate $\alpha_k$ and $\beta_k$ independently, following a procedure analogous to input sampling:
\begin{enumerate}[itemsep=1pt, leftmargin=*, parsep=0pt, topsep=-2pt, partopsep=0pt]
    \item Sample the lengths $|\alpha_k|$ and $|\beta_k|$ from a discrete uniform distribution: $|\alpha_k|, |\beta_k| \sim \mathrm{Unif}\{s_{min}, \dots, s_{max}\}$

    \item Sample $\alpha_k$ uniformly from the set of substrings of length $|\alpha_k|$ in the intermediate input vector $\vec{\imath}_{k-1}=p_{k-1}(\dots p_1(\vec{\imath}))$ (where $\mathrm{Substr}_{s}(\vec{\imath}_{k-1})$ denotes the set of all substrings of length $s$ in $\vec{\imath}_{k-1}$): $ \alpha_k \sim \mathrm{Unif}(\mathrm{Substr}_{|\alpha_k|}(\vec{\imath}_{k-1}))$ (note that $\vec{\imath}_0 = \vec{\imath}$).
    For example in Fig~\ref{fig:data_instance}, for $\alpha_2$, we had $|\alpha_2|=1$, thus we would sample uniformly from substrings of length one (basically all characters) present in the intermediate inputs $\vec{i_1} = \langle hwwgdb, jh, ahwawb, xuuiib, wwaw \rangle$.

    \item Sample $\beta_k$ as a sequence of $|\beta_k|$ characters, each drawn independently from the uniform character distribution over the alphabet $\Sigma$ (same as in Section~\ref{sec:methodology:input_sampling}): $\beta_k \sim \mathrm{Unif}(\Sigma)^{|\beta_k|}$
\end{enumerate}
The outputs are then generated by executing the program cascade $\vec{p}$ over the inputs $\vec{\imath}$ as $\vec{o} = \vec{p}(\vec{\imath})$ which as stated above is just recursive application of the replace programs $p_k(p_{k-1}(\dots p_1(\vec{i_o})))$

\subsubsection{Enforcing Complexity Constraints}
\label{sec:rejection_sampling}
We control the complexity of cascades of programs $\vec{p}$ using rejection sampling guided by the classifiers developed in Section~\ref{sec:theory}. 
Complexity is balanced along two dimensions: 
(1) the cascade length $L=|\vec{p}|$, and 
(2) the relation types present between programs in $\vec{p}$. 
The second dimension is encoded as a binary category string per cascade $\vec{p}$, $c_{\vec{p}} \in \{0,1\}^4$.
Each digit is a binary indicator denoting the presence or absence of feeding, bleeding, counter-feeding, and counter-bleeding relations.
Instances are balanced across all 16 possible category strings, from $c_{\vec{p}}=0000$ (no relations present, arbitrary ordering possible) to $c_{\vec{p}}=1111$ (all relations present, ordering highly constrained).

\textbf{Computing Instance Complexity.} 
We hypothesize that the difficulty of a PBE problem $\langle \vec{\imath}, \vec{p}, \vec{o} \rangle$ is governed by (1) cascade length and (2) the types of relations between program pairs in $\vec{p}$ (relations formulated by phonologists and historical linguists; see \citet{Kiparsky1968-KIPLUA,kiparsky1971historical}). 
One of our key contributions is provably correct, automatic, symbolic classification of these relations for a given pair of programs (Appendix~\ref{sec:theory}). 
For a pair of programs ($p_i$,$p_j$) in $\vec{p}$, where $p_i$ is applied before $p_j$, the possible relations are: \\
\textbf{Feeding (F($p_i$, $p_j$)):} $p_i$ creates substrings that enable $p_j$ to apply. \\
\textbf{Bleeding (B($p_i$, $p_j$)):} $p_i$ removes substrings required by $p_j$. \\
\textbf{Counter-Feeding (CF($p_i$, $p_j$)):} $p_i$ could have fed $p_j$, but $p_j$ precedes $p_i$. \\
\textbf{Counter-Bleeding (CB($p_i$, $p_j$)):} $p_i$ could have bled $p_j$, but $p_j$ precedes $p_i$. \\
\textbf{No Relation:} $p_i$ and $p_j$ can be ordered arbitrarily. \\
Counter-relations need not be stored explicitly, since $CF(p_i, p_j)$ and $CB(p_i, p_j)$ are implied if $p_j$ precedes $p_i$ and $F(p_j, p_i)$ and $B(p_j, p_i)$, respectively. 
These relations can be visualized using a Directed Acyclic Graph (DAG), as illustrated in Figure~\ref{fig:data_instance}. 
We show both the full DAG with all relations and a simplified DAG that indirectly encodes counter-relations via this symmetry. \\
\textbf{Rejection Sampling to Control Complexity.} 
To enforce balanced complexity, we use rejection sampling.
Each PBE instance is first assigned a relation-type category string $c_{\vec{p}}$, and data is generated to approximate balance across the $2^4=16$ categories, as well as across cascade lengths between $L_{min}$ and $L_{max}$.
These constraints can conflict, since short cascades (e.g., $L=2$) rarely realize higher-order relation types, making some categories unattainable.
Following prior work on sound law induction \citep{naik2025programmingexamplesmeetshistorical}, we retain random sampling rather than deterministic generation or retrieval to promote diversity and avoid contamination, at the cost of high rejection rates.
We introduce a patience parameter $\tau$, denoting the number of sampling steps under both constraints; once $\tau$ ($=100{,}000$) is exceeded, we also accept instances satisfying just one constraint.
Early experiments prioritized relation-type balance, but later analysis (Appendix~\ref{sec:appendix:factorial_analysis}, \ref{sec:appendix:logistic_regression}) identified cascade length as a stronger predictor of difficulty, leading us to relax relation-type constraints after excceeding $\tau$ steps.
We show the effect of varying $\tau$ on relation-type balance and generation efficiency (Appendix~\ref{sec:appendix:efficiency}).

\subsection{Problem Solver}
\label{sec:program_induction:prompting}
\textbf{Prompting Strategy:} Once a benchmark snapshot $\mathcal{D}$ is generated, we evaluate each LLM $M$ by prompting it for multi-step PBE (second from bottom in Fig~\ref{fig:proposer_and_solver}; prompt in Appendix~\ref{sec:appendix:method_details:multi_step_pbe_prompt_template}) and program reordering (bottom in Fig~\ref{fig:proposer_and_solver}; prompt in Appendix~\ref{sec:appendix:method_details:program_reordering_prompt_template}). \\
For multi-step PBE, the prompt enforces the following constraints: each program must be a Python \texttt{replace} function; both arguments $\alpha_k$ and $\beta_k$ must be strings with $|\alpha_k|, |\beta_k| \leq s_{max}$; $\alpha_k$ must be non-empty; the cascade may contain at most $L_{max}$ programs (matching the longest ground-truth cascade); and outputs must follow a strict Markdown format for reliable extraction.
If a predicted cascade $\hat{\vec{p}}$ exceeds $L_{max}$ programs, only the first $L_{max}$ are evaluated; violations of any other constraint result in replacement with an identity program $p^I$, which leaves inputs unchanged.
Given $\hat{\vec{p}}$, predicted outputs are computed as $\hat{\vec{o}} = \hat{\vec{p}}(\vec{\imath})$.
Details of program extraction from LLM responses are provided in Appendix~\ref{sec:appendix:method_details:program_extraction}.
\\
For the program reordering task, the prompt defines \textsc{bleeding} and \textsc{feeding} relations, provides the ground-truth program cascade along with the inputs and outputs, and instructs the LLM to output only a fenced JSON block containing a permutation of indices (e.g., $[2,1,3]$) that recovers the original program order. 
This task evaluates the model’s ability to reason about ordering constraints induced by BFCC relations between programs.
\\
\textbf{Scaling Solution Search.} We investigate two solution-search scaling strategies, motivated by evidence of reasoning collapse and out-of-token failures in \citep{shojaee2025illusionthinkingunderstandingstrengths} and subsequent discussion \citep{lawsen2025commentillusionthinkingunderstanding}: (1) sampling-budget \citep{li2024is} and (2) test-time thinking-budget scaling \citep{muennighoff2025s1}. 
In the first, the LLM produces $K$ candidate solutions, and any candidate consistent with the input--output examples is accepted; if none succeed, we select the candidate with the highest edit-similarity reward (Section~\ref{sec:experiments:eval_metrics}). 
In the second, inspired by \cite{muennighoff2025s1}, we vary a thinking budget of $N$ tokens. 
However, because gpt-oss-120b does not reliably respect this constraint, even when forced to emit chain-of-thought termination tokens (Appendix~\ref{sec:cot_truncation_experiment}), we instead control the overall maximum sequence length.


\section{Experiments}
\label{sec:experiments}

\subsection{Dataset Creation}
\label{sec:benchmark_details}
Using the problem proposer described in Section~\ref{sec:problem_proposer}, we construct three synthetic benchmarks. \\ 
\textbf{PBEBench-Lite:} A simplified dataset with cascades of length 2 to 5 and 5 input--output pairs per instance, synthesized as 
$\mathcal{D}(n=5,\Sigma=\Sigma_{\text{lite}},L_{min}=2,L_{max}=5,l_{min}=2,l_{max}=6,s_{min}=1,s_{max}=3,D=1008)$,
with exactly 63 examples per relation category. The alphabet is restricted
$\Sigma_{\text{lite}}=\{a,\dots,k,u,v,w,x,y,z\}$ (lowercase letters excluding $l$–$t$). \\  
\textbf{PBEBench-Lite-Perm:} A program reordering dataset with 919 instances created from PBEBench-Lite using the swapping strategy (Section~\ref{sec:problem_proposer}).
\\
\textbf{PBEBench:} a larger dataset with 50 input-output pairs per instance, synthesized as 
$\mathcal{D}(n=50,\Sigma,L_{min}=2,L_{max}=20,l_{min}=2,l_{max}=6,s_{min}=1,s_{max}=3,D=1216)$, with 64 instances for each cascade length. 
Here $\Sigma=\{a,\dots,z,A,\dots,Z\}$ (full alphabet in both cases). \\
\textbf{Real-SLI:} We also evaluate all open-source LLMs on real SLI data drawn from 6 proto--attested language pairs, described in more detail in Appendix~\ref{sec:appendix:real_sli_benchmark}. 
To assess the predictive power of PBEBench for SLI performance, we additionally compare it against other inductive reasoning benchmarks, including CLUTRR \cite{sinha2019clutrr} and SLR-Bench \cite{helff2025slr}.
Some other datasets generated for some ablations are described in Appendix~\ref{sec:appendix:benchmark_stats}.


\subsection{Models Evaluated}
\label{sec:experiments:model_selection}
We evaluate a broad range of state-of-the-art LLMs spanning multiple model families and architectures. 
We cover models from leading developers including OpenAI (GPT, o-series, gpt-oss), Anthropic (Claude series), Google (Gemini), Qwen (Qwen2.5, Qwen3, QwQ), DeepSeek (R1-Distill-Qwen), Mistral (Codestral). 
The models differ in scale, reasoning specialization (thinking vs. non-thinking), architectural choices (dense vs. MoE), and source availability (open vs. closed). 
A full list of models with their attributes is provided in Table~\ref{tab:selected_models_details}, covering the breadth of model families and capabilities.
\subsection{Evaluation Metrics}
\label{sec:experiments:eval_metrics}
For the multi-step PBE task, since a given input vector $\vec{\imath}$ may be mapped to the output vector $\vec{o}$ by multiple program cascades $\vec{p}$, we evaluate model predictions using metrics based on functional equivalence, following prior work in programming by example \cite{li2024is} and historical linguistics \cite{hoenigswald1960language}. 
We execute the model-generated cascade $\hat{\vec{p}}$ on the inputs and compare the predicted outputs $\hat{\vec{o}}=\hat{\vec{p}}(\vec{\imath})$ with $\vec{o}$.

We use two main metrics: a coarse-grained solve rate (\texttt{Pass@1}) and a fine-grained normalized edit similarity (\texttt{Edit\_Sim}):
\[
\mathrm{pass@1} = \frac{1}{|\mathcal{D}|}\sum_{\vec{o},\vec{\imath} \in \mathcal{D}} \mathrm{1}_{\hat{\vec{p}}(\vec{\imath})=\vec{o}}
\]
\[
\mathrm{Edit\_Sim} = \frac{1}{|\mathcal{D}|} \sum_{\vec{o},\vec{\imath} \in \mathcal{D}} 1-\frac{\mathrm{dist}(\hat{\vec{p}}(\vec{\imath}),\vec{o})}{\mathrm{dist}(\vec{\imath},\vec{o})},
\]
where $\mathrm{dist}$ denotes the summed Levenshtein edit distance over corresponding strings.
We additionally report the \texttt{Valid\_Rate}, the proportion of generated programs that satisfy all prompt constraints and \texttt{Complexity} that sums up the length of the substrings $\alpha_k$ and $\beta_k$ in the model predicted program cascade and averages it across PBE instances.
For the program reordering task, we evaluate accuracy (Acc) by applying the predicted permutation $\hat{\sigma}$ to the permuted program sequence and checking whether $\hat{\sigma}(\sigma(\vec{p}))(\vec{\imath})=\vec{o}$ (Section~\ref{eq:perm_acc}).
We also report unique accuracy on instances with a single valid solution.

\begin{table}[!tbh]
\begin{tabular}{@{}lrr@{}}
\toprule
\textbf{Model} & \textbf{Acc} & \textbf{UAcc} \\ \midrule
Codestral-22B & 34.3 & 51.2 \\
Qwen2.5-32B-Instruct & 31 & 39.3 \\
Qwen2.5-Coder-32B-Instruct & 34.7 & 49.6 \\
Qwen3-32B & 36 & 43 \\
Qwen3-Coder-30B-A3B-Instruct \moeemoji & 44.1 & 63.2 \\
DeepSeek-R1-Distill-Qwen-32B \reasoningemoji & 75.5 & 92.1 \\
Qwen3-30B-A3B \reasoningemoji \moeemoji & 68 & 91.3 \\
QwQ-32B \reasoningemoji & 74.9 & 94.6 \\
Qwen3-32B (with CoT) \reasoningemoji & 77.9 & 93.8 \\
gpt-oss-20b \reasoningemoji \moeemoji & 86.3 & 92.6 \\
gpt-oss-120b \reasoningemoji \moeemoji & 97.5 & 98.8 \\ \midrule
Claude-4-Sonnet \unkemoji & 80.2 & 91.3 \\
Claude-4.5-Sonnet \unkemoji & 85.1 & 92.1 \\
Claude-4-Sonnet (Thinking) \reasoningemoji \unkemoji & 91.6 & 97.5 \\
Claude-4.5-Sonnet (Thinking) \reasoningemoji \unkemoji & 97.5 & 99.2 \\
o3-mini \reasoningemoji \unkemoji & 63.4 & 83.7 \\
o4-mini \reasoningemoji \unkemoji & 80.2 & 97.5 \\
Gemini 2.5 Flash \reasoningemoji \unkemoji & 92.8 & 89.7 \\
GPT-5 \reasoningemoji \unkemoji & 99.7  & 99.6 \\ \bottomrule
\end{tabular}
\caption{\textbf{PBEBench-Lite Program Reordering:} We compute accuracy (\texttt{Acc}) based on functional correctness of the unscrambled program sequence on the full data and the unique solution subset (\texttt{UAcc}). \moeemoji\ indicates a mixture-of-experts (or MoE) model. \reasoningemoji\ indicates a reasoning model. \unkemoji\ indicates unknown architecture.}
\label{tab:program_reordering_task}
\end{table}

\begin{table*}[t]
\centering
\begin{tabular}{@{}lrrrr@{}}
\toprule
\textbf{Model} & \multicolumn{1}{l}{\textbf{Pass@1}} & \multicolumn{1}{l}{\textbf{Edit Sim}} & \multicolumn{1}{l}{\textbf{Complexity}} & \textbf{Valid Rate} \\ \midrule
Codestral-22B & 1.1 & -1.1 & 15.4 & 82.5 \\
Qwen2.5-32B-Instruct & 3 & 12.5 & 12.5 & 82.9 \\
Qwen2.5-Coder-32B-Instruct & 4.1 & 18.8 & 12.67 & 68.9 \\
Qwen3-32B & 1.8 & 9.6 & 14.89 & 76.5 \\
Qwen3-Coder-30B-A3B-Instruct \moeemoji & 3.8 & 8.6 & 3.25 & 81.3 \\
DeepSeek-R1-Distill-Qwen-32B \reasoningemoji & 22.4 & 34.9 & 7.43 & 87.1 \\
Qwen3-30B-A3B \reasoningemoji \moeemoji & 28.9 & 33.6 & \textbf{4} & \textbf{99.1} \\
QwQ-32B \reasoningemoji & 36 & 40.9 & 4.72 & 94.9 \\
Qwen3-32B (with CoT) \reasoningemoji & 41.9 & 50.3 & 6.57 & 96.8 \\
gpt-oss-20b \reasoningemoji \moeemoji & 40.6 & 46.2 & 6.96 & 99 \\
gpt-oss-120b \reasoningemoji \moeemoji & \textbf{62.5} & \textbf{69.9} & 10.93 & 92.5 \\
\midrule
Claude-3.5-Sonnet \unkemoji & 18.5 & 44.3 & 11.87 & 82.1 \\
Claude-3.7-Sonnet \unkemoji & 23.2 & 50 & 12.05 & 84.1 \\
Claude-4-Sonnet \unkemoji & 29.7 & 58.7 & 13.35 & 77.2 \\
Claude-3.7-Sonnet (Thinking) \reasoningemoji \unkemoji & 36.6 & 61.5 & 12.94 & 0.819 \\
Claude-4-Sonnet (Thinking) \reasoningemoji \unkemoji & 35.8 & 60.8 & 13.71 & 78.2 \\
Claude-4-Opus (Thinking) \reasoningemoji \unkemoji & 53.9 & 75.2 & 13.44 & 85.6 \\ 
o3-mini \reasoningemoji \unkemoji & 19.6 & 19.8 & \textbf{1.65} & \textbf{99.5} \\
o4-mini \reasoningemoji \unkemoji & 36.3 & 37.4 & 4.08 & 97.3 \\
Gemini 2.5 Flash \reasoningemoji \unkemoji & 58.6 & 65.6 & 8.36 & 79 \\
\textbf{GPT-5 \reasoningemoji \unkemoji} & \textbf{72.4} & \textbf{76.5} & 10.58 & 92.9 \\
\bottomrule
\end{tabular}
\caption{\textbf{PBEBench-Lite Performance:} We compute the \texttt{Pass@1} and \texttt{Edit\_Sim} as the coarse and fine-grained evaluation, respectively, for each model. \moeemoji\ indicates a mixture-of-experts (or MoE) model. \reasoningemoji\ indicates a reasoning model. \unkemoji\ indicates unknown architecture. * indicates evaluated on 20\% of the dataset due to cost.}
\label{tab:model_performance}
\end{table*}

\section{Results}

\subsection{PBEBench-Lite Performance}
\label{sec:results:pbebench_lite_performance}
Table~\ref{tab:model_performance} reports \texttt{Pass@1}, \texttt{Edit\_Sim}, \texttt{Valid\_Rate} (expressed as percentages), and \texttt{Complexity} for all models (Section~\ref{sec:experiments:model_selection}). 
Hyperparameters used in model evaluations are listed in Table~\ref{tab:sampling_parameters}. 
Claude-4 Opus (Thinking) was evaluated on only 20\% of the data due to high API costs (run costs are reported in Appendix~\ref{sec:appendix:model_costs}). 
Reasoning models, both open and closed-source, consistently outperform non-reasoning models, with a larger gap among open-source models. 
The top-performing models are gpt-oss-120b (open) and GPT-5 (closed). 
Surprisingly, Qwen3-30B-A3B and o3-mini produce the simplest programs yet perform poorly, suggesting underthinking, while Claude-4-Sonnet and Qwen3-32B without CoT produce the most complex programs, indicative of overthinking.
The ground-truth programs have a mean complexity of 11.63, indicating that even the best-performing models, GPT-5 and gpt-oss-120b, generate simpler programs than the ground truth.
Factorial analysis of QwQ-32B and GPT-5 identifies cascade length as the strongest negative predictor of \texttt{Pass@1}. 
We further find that feeding and bleeding decrease \texttt{Pass@1}, whereas counter-feeding and counter-bleeding have no significant effect (Appendix~\ref{sec:appendix:factorial_analysis}).
Finally, confusion-matrix analyses of predicted versus ground-truth cascade lengths and relation types (Appendix~\ref{sec:appendix:conf_mat_clen}, \ref{sec:appendix:conf_mat_reln}) show that models favor simpler cascades with fewer relations and programs, typically succeeding only when a functionally equivalent solution exists.

\subsection{PBEBench-Lite-Perm Performance}
\label{sec:pbebench_lite_perm_performance}
Table~\ref{tab:program_reordering_task} reports \texttt{Acc} and \texttt{UAcc} for the PBEBench-Lite-Perm program reordering dataset.
We could not run Claude-3.5-Sonnet and Claude-3.7-Sonnet due to deprecation, and Claude-4-Opus due to budget constraints, and therefore evaluate Claude-4.5-Sonnet with and without reasoning.
While LLMs may sometimes bypass BFCC relations, this task requires explicit reasoning to recover correct ordering, with the unique-solution subset admitting only one valid order.
Using PBEBench-Lite with short cascades (2--5) allows some models to enumerate all orderings, simplifying the task relative to full multi-step PBE.
GPT-5 nearly solves the task with 99.7\% accuracy, whereas Qwen2.5-32B-Instruct performs worst at 31\%.
Logistic regression analysis (Table~\ref{tab:pbebench_lite_perm_difficuly_analysis}) reveals that LLMs predictably struggle more with longer cascades and counter relations, like \textsc{counter-feeding}.
Finally, \texttt{UAcc} is generally higher than \texttt{Acc}, explained by the fact that 182 of 242 unique-solution instances have cascade length 2, where random guessing already yields 50\% accuracy.
We also present a qualitative analysis of the various solution strategies adopted by the evaluated models in Appendix~\ref{sec:appendix:qual_analysis_program_reordering}.

\subsection{PBEBench Performance}
\label{sec:results_and_discussion:factorial_analysis}
We evaluate the strongest open and closed-source models (gpt-oss-120b and GPT-5) on PBEBench across cascade lengths ranging from 2 to 30.
GPT-5 consistently outperforms gpt-oss-120b, maintaining non-trivial \texttt{Pass@1} at longer cascades where gpt-oss-120b performance collapses.
At length 20, GPT-5 achieves moderate
accuracy with a small sampling budget, while gpt-oss-120b drops to near-zero performance even with larger budgets.
Performance declines monotonically with cascade length for both models, revealing clear reasoning limits.
Trends across lengths are summarized in Fig.~\ref{fig:cascade_length_perf_gpt_oss_120b} and Fig.~\ref{fig:cascade_length_perf_gpt5}.
A regression analysis further confirms cascade length as a dominant negative predictor of success, with the presence of specific BFCC relations associated with failure or success.
Full results and analyses are deferred to Appendix~\ref{sec:appendix:pbebench_performance}.

\subsection{Ablations and Scaling}
\label{sec:experiment_results_ablations}
We perform ablations using PBEBench-Lite and its variant PBEBench-Lite-MoreEg (Appendix~\ref{sec:appendix:benchmark_stats}) to analyze the effects of just increasing input-output examples on the gpt-oss models. 
Increasing the number of examples per PBE instance results in a slight reduction in performance.
We also analyze the effect of the scaling strategies: sampling budget and thinking budget on GPT-5 and gpt-oss-120b on the harder PBEBench data.
We observe consistent scaling behavior for both models, with rapid initial gains from increased compute, followed by diminishing returns and ``hitting a wall'' for longer ground truth cascades.
Detailed quantitative results, scaling curves, and variance analyses are provided in Appendix~\ref{sec:appendix:ablations}.

\subsection{Real SLI Performance}
\label{sec:results:real_sli_perf}
We evaluate all open-source models on real SLI data (Appendix~\ref{sec:appendix:real_sli_benchmark}), with results in Table~\ref{tab:real_sli_results}.
We additionally report performance on CLUTRR \cite{sinha2019clutrr} and SLR-Bench \cite{helff2025slr} in Appendix~\ref{sec:appendix:other_inductive_reasoning_benchmarks}.
Motivated by our hypothesis that SLI is bottlenecked by multi-step inductive reasoning (Section~\ref{sec:introduction:research_questions}), we measure Pearson rank correlations between model rankings on real SLI and those induced by PBEBench-Lite, CLUTRR, and SLR-Bench (Table~\ref{tab:benchmark_ranking_correlations}).
PBEBench-Lite shows the strongest correlation with real SLI ($r = 0.8273$, $p = 0.001677$), followed closely by SLR-Bench ($r = 0.8091$, $p = 0.002559$).
However, SLR-Bench is sensitive to Prolog coding proficiency, with Codestral-22B achieving high syntax scores but weak overall performance relative to Qwen models, whose reasoning is stronger despite poorer Prolog syntax (Table~\ref{tab:slrbench_results}).
By relying on Python, PBEBench reduces syntax bottlenecks that would otherwise underestimate inductive capability. 
\section{Discussion}
After analyzing the results, we address all the hypotheses in Section~\ref{sec:introduction:research_questions}.
\textbf{H1: LLMs struggle with multi-step inductive reasoning.} As discussed in Section~\ref{sec:results:pbebench_lite_performance}, we see non-reasoning LLMs struggle to solve even simple multi-step PBE problems, and most open-source reasoning models outperform closed-source non-reasoning ones. 
Additionally, even the best performing reasoning models, GPT-5 and gpt-oss-120b, break when evaluated on the hard PBEBench data (Section~\ref{sec:results_and_discussion:factorial_analysis}) despite inference-time scaling strategies (Section~\ref{sec:experiment_results_ablations}).
\textbf{H2: Difficulty in our dataset is driven by interaction structure.}: We find that increasing per-step difficulty by adding more examples has little effect on performance (Section~\ref{sec:experiment_results_ablations}). In contrast, factorial and logistic regression analyses (Sections~\ref{sec:results:pbebench_lite_performance},\ref{sec:results_and_discussion:factorial_analysis}) and performance degradation on longer cascades in PBEBench (Figs.~\ref{fig:cascade_length_perf_gpt_oss_120b},\ref{fig:cascade_length_perf_gpt5}) show that LLMs struggle primarily with long cascades and complex BFCC interactions. Moreover, many models, including reasoning LLMs, fail to recover the correct program order in PBEBench-Lite-Perm even when given the correct shuffled programs for cascades of length 2 to 5, indicating that task difficulty arises from interactions within the cascade rather than individual steps.
\textbf{H3: Performance on PBEBench predicts performance on real SLI}: As discussed in Section~\ref{sec:results:real_sli_perf}, even the simpler PBEBench-Lite has higher correlation with the model rankings on the real SLI data compared to CLUTRR and SLR-Bench, indicating the predictive power of our approach.  
\textbf{H4: Scaling the thinking budget is more effective than increasing the sampling budget.} A direct comparison for gpt-oss-120b shows that increasing sequence length with sufficient sampling is more time and cost-efficient than increasing sampling budget alone, as it reduces wasted attempts from unterminated chains of thought.
\section{Conclusion and Future Work}
We show that the proposed benchmark is challenging for many models, especially non-reasoning LLMs, because despite simple individual PBE steps, its sequential structure and BFCC-induced ordering constraints create emergent complexity that requires long chains of thought at test time.
Increasing cascade length or interaction complexity breaks even the strongest reasoning models, including GPT-5 and gpt-oss-120b.
It is also practically meaningful: PBEBench-Lite performance best predicts real SLI performance, suggesting that LLMs’ difficulty with multi-step inductive reasoning partly explains their poor real-world SLI results.
In future work, we plan to synthesize progressively harder training data using PBEBench and explore hybrid neuro-symbolic data generation.
\section*{Limitations}
\label{sec:limitations}
Despite the contributions of this work, several limitations remain:
\begin{enumerate}[itemsep=1pt, leftmargin=*, parsep=0pt, topsep=-2pt, partopsep=0pt]
    \item We primarily evaluate the strongest open and closed-source models on PBEBench. While constrained by time and compute, this choice stems from the analysis on PBEBench-Lite, where weaker models fail to match gpt-oss-120b or GPT-5, especially on longer cascades.
    \item Owing to budget and time constraints, evaluations on CLUTRR, SLR-Bench, and real SLI data are limited to open-source models.
    \item For PBEBench-Lite, PBEBench-Lite-MoreEg, CLUTRR, and SLR-Bench, we use a single sample per problem (sampling budget of 1), which may underestimate performance under larger sampling budgets.
    \item For real SLI, we simplify the task by chunking cases with more than 200 examples into subsets of 50, relaxing program constraints by allowing longer cascades and larger substring rewrites. While this could enable degenerate solutions, most models still perform poorly even with increased thinking and sampling budgets.
    \item Due to cost, GPT-5 is evaluated only on the most challenging cascade lengths and with limited sampling budgets, restricting comprehensive analysis across settings.
    \item The random sampling procedure cannot perfectly balance constraints such as relation types and cascade lengths, particularly for long cascades where valid examples are rare, leading to high rejection rates and exhaustion of the patience parameter~$\tau$. Future work could explore more efficient sampling strategies.
    \item In PBEBench, individual programs modify relatively few examples on average, which is less representative of real sound law induction, where rules often apply to many words simultaneously.
\end{enumerate}

\section*{Ethics Statement}
Our work introduces a provably correct, fully automated, and scalable pipeline for generating multi-step Programming by Example (PBE) problems, enabling the evaluation of inductive reasoning in Large Language Models without human supervision. As no human subjects are involved (apart from the individuals who contributed the published data for the SLI task), we do not anticipate ethical concerns. Moreover, since the dataset focuses solely on assessing abstract inductive reasoning in LLMs, it is unlikely to be misused for harmful purposes.

\bibliography{custom}

\appendix
\label{sec:appendix}
\appendix

\section*{Appendix}

The Appendix is organized as follows:  
\begin{enumerate}[itemsep=1pt, leftmargin=*, parsep=0pt, topsep=-2pt, partopsep=0pt]
    \item Appendix~\ref{sec:appendix:reproducibility} discusses steps taken to ensure reproducibility.   
    \item Appendix~\ref{sec:theory} provides the theory and proofs for the relation type classifiers used to control the distribution.
    \item Appendix~\ref{sec:appendix:benchmark_details} details the benchmark and metrics, including licensing, statistics, and related information. 
    \item Appendix~\ref{sec:appendix:method_details} gives additional methodological details, including prompts, model selection, licensing, run costs, algorithm pseudocode, snapshots for closed-source models, and the K-fold evaluation used to simulate GPT-5’s sampling budget.  
    \item Appendix~\ref{sec:appendix:experimental_details} outlines experimental details such as computational environment, inference parameters, and strategies for CoT truncation/control with gpt-oss-120b. 
    \item Appendix~\ref{sec:appendix:additional_results} presents additional results, including data generation efficiency, factorial and logistic regression analyses of instance difficulty, confusion matrices comparing dataset ground truth and model predictions, and extended tables for scaling experiments, effects of additional examples, and performance breakdowns by cascade length.
\end{enumerate}

\section{Reproducibility Statement}
\label{sec:appendix:reproducibility} 
To facilitate the reproduction of our results, we thoroughly document all components of our work. The BFCC relation types, their detectors (Algorithm~\ref{alg:classify_bfcc}), and proofs of correctness are presented in Appendix~\ref{sec:theory}. The rejection-sampling-based data generation process is both described (Section~\ref{sec:problem_proposer}) and formally specified (Algorithm~\ref{alg:balanced_sampling}). 
Additionally, the algorithm for the feeding-bleeding swap permutation for creating the program reordering data is documented in Algorithm~\ref{alg:fb_swap_permutation}.
Detailed statistics and parameters for all generated datasets appear in Section~\ref{sec:benchmark_details} and Appendix~\ref{sec:appendix:benchmark_stats}. Our prompting and program extraction strategy, including scaling techniques, is reported in Section~\ref{sec:program_induction:prompting}, while the prompt template is provided in Appendix~\ref{sec:appendix:method_details:prompt_template}, and inference parameters for each model are given in Appendix~\ref{sec:appendix:experimental_details:sampling_parameters}. 
For closed-source models, we use specific dated snapshots/checkpoints of models as recorded in Appendix~\ref{sec:appendix:model_snapshots} to aid with reproducibility. To estimate the amount of variance between runs/experiments, we do a small controlled experiment for gpt-oss-120b (Table~\ref{tab:gpt_oss_120b_run_variance}), and compute k-fold average for GPT-5 (Appendix \ref{sec:appendix:gpt_5_k_fold_analysis})
The evaluation procedure and metrics are described in Section~\ref{sec:experiments:eval_metrics} and Section~\ref{sec:program_induction:prompting}.
\section{Theoretical Framework}
\label{sec:theory}
This section provides the proof of correctness of our proposed method for automatically classifying the type of relation between any pair of string-rewrite programs, which is one of our novel contributions. 

We propose the function $\mathsf{feeds}(\cdot, \cdot)$ (equation~\ref{eq:1}), which classifies pairs of rules as feeding or not feeding.
\begin{figure*}[t]
\centering
\begin{equation}
  \label{eq:1}
  \mathsf{feeds}(s_i \rightarrow t_i, s_j \rightarrow t_j) =
  \begin{cases}
    \top & t_i = \varepsilon \wedge |s_j| > 1\\
    \top & t_i \in \mathsf{Substr}(s_j) \wedge t_i \notin \mathsf{Substr}(s_i)\\
    \top & t_j \in (\mathsf{Substr}(t_i) \setminus \mathsf{Substr}(s_i))\\
    \top & \mathsf{Pref}(t_i) \setminus \mathsf{Substr}(s_i) \cap \mathsf{Suff}(s_j) \neq \emptyset\\
    \top & \mathsf{Suff}(t_i) \setminus \mathsf{Substr}(s_i) \cap \mathsf{Pref}(s_j) \neq \emptyset\\
    \bot & \text{otherwise}\\
  \end{cases}
\end{equation}
\end{figure*}

where $\mathsf{Pref}(s)$, $\mathsf{Suff}(s)$, and $\mathsf{Substr}(s)$ are the multisets of prefixes, suffixes, and substrings of $s$, respectively.


\begin{definition}[Feeding]
  Feeding is a relation between pairs of rules $p_i=s_i \rightarrow t_i$ and $p_j=s_j \rightarrow t_j$, such that $\exists s, t \in \Sigma^*$ such that $s \xrightarrow{p_i} t$ and $t$ includes a string $w$ that meets the structural description of $p_j$ but is not present in $s$.
\end{definition}
\begin{definition}[Bleeding]
Bleeding is a relation between pairs of rules $p_i=s_i \rightarrow t_i$ and $p_j=s_j \rightarrow t_j$, such that $\exists s, t' \in \Sigma^*$ such that $s_i \xrightarrow{p_i} t_i$ and $s_i$ includes a string $w$ that meets the structural description of $p_j$ but is not present in $t_i$.
\end{definition}
 \begin{definition}[$\mathsf{Substr}$]
   $\mathsf{Substr}(s)$ denotes the multiset of substrings of $s$, counting multiple occurances separately.
 \end{definition}
 \begin{lemma}
   \label{lemma:sufficient}
   If $\mathsf{feeds}(p_i, p_j)$ then $p_i \text{ feeds } p_j$.
 \end{lemma}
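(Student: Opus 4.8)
The plan is to prove Lemma~\ref{lemma:sufficient} by a case analysis that mirrors the five disjuncts defining $\mathsf{feeds}(p_i,p_j)$. For each disjunct that evaluates to $\top$, I must exhibit concrete strings $s,t\in\Sigma^*$ with $s\xrightarrow{p_i}t$ such that $t$ contains an occurrence of a substring $w$ matching the structural description of $p_j$ (i.e.\ $w$ contains $s_j$, or in the simplest reading $w=s_j$ as a substring of $t$) that was not already present at the corresponding position in $s$. The key engineering task is to choose $s$ so that the rewrite by $p_i$ is ``clean'': $p_i$ should fire exactly once, in a controlled location, and the pieces of $s_j$ that we want to assemble in $t$ should straddle the edit site so that they genuinely appear in $t$ but not in $s$. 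A convenient device is to pad with fresh symbols (or with copies of material from $s_j$) on either side of the rewrite locus so that no spurious occurrence of $t_i$ or of the witnessing substring exists in $s$.

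The case-by-case constructions I anticipate are: (i) $t_i=\varepsilon$ and $|s_j|>1$: take $s = s_i$ with a copy of $s_j$ split around it, e.g.\ $s = a\,s_i\,b$ where $s_j=ab$ with $|a|,|b|\ge 1$; deleting $s_i$ yields $t=ab=s_j$, and since $|s_j|>1$ the string $ab$ cannot have occurred contiguously in $s$ (the $s_i$ block separated $a$ from $b$). (ii) $t_i\in\mathsf{Substr}(s_j)$ but $t_i\notin\mathsf{Substr}(s_i)$: write $s_j = \alpha\, t_i\,\beta$ and take $s = \alpha\, s_i\,\beta$; then $s\xrightarrow{p_i}\alpha\,t_i\,\beta = s_j = t$, and $s_j\notin\mathsf{Substr}(s)$ because any occurrence of $s_j$ in $s$ would force an occurrence of $t_i$ inside it, but $s$ equals $\alpha\,s_i\,\beta$ and $t_i$ is not a substring of $s_i$ (one must also check $t_i$ does not arise across the $\alpha$--$s_i$ or $s_i$--$\beta$ boundaries; pad $s_i$ with fresh letters if necessary, or argue from lengths). (iii) $t_j\in\mathsf{Substr}(t_i)\setminus\mathsf{Substr}(s_i)$: this case is subtler because the witness concerns $t_j$ appearing in $t$, but matching the structural description of $p_j$ requires $s_j$, not $t_j$; I would handle it by noting this disjunct is about $p_j$ being applicable \emph{after} it has itself applied once, i.e.\ about $t$ containing $s_j$ via the freshly-created $t_j$ — more carefully, one takes $s$ containing $s_j$ arranged so the $p_i$ rewrite turns a copy of $s_i$ into $t_i$ and the new $t_j$ inside it, together with surrounding context, spells out a fresh $s_j$. (iv)--(v) the prefix/suffix overlap cases: if some nonempty $u\in\mathsf{Pref}(t_i)\setminus\mathsf{Substr}(s_i)$ is also a suffix of $s_j$, write $s_j = v\,u$ and $t_i = u\,w$, and take $s = v\, s_i$; then $s\xrightarrow{p_i} v\,t_i = v\,u\,w = s_j\,w$, which contains $s_j$ as a prefix, whereas $s_j$ was not a substring of $v\,s_i$ since $u\notin\mathsf{Substr}(s_i)$ and $u$ sits at the $v$--$s_i$ junction only in $t$, not in $s$. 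The mirror-image argument handles the suffix case.

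The main obstacle, and the place where the argument needs real care rather than routine bookkeeping, is ruling out \emph{pre-existing} occurrences of the witnessing substring in $s$ — that is, verifying the ``not present in $s$'' clause of the Feeding definition in each case. The $\mathsf{feeds}$ formula is written with side conditions like $t_i\notin\mathsf{Substr}(s_i)$ and the set-differences with $\mathsf{Substr}(s_i)$ precisely to make this possible, so the proof must show these side conditions suffice. The delicate point is that my constructed $s$ is not literally $s_i$ but $s_i$ embedded in context ($\alpha\,s_i\,\beta$, or $v\,s_i$, etc.), so an occurrence of the witness could in principle span a boundary between $s_i$ and its context. I expect the clean fix is to insist the padding symbols be drawn from a fresh alphabet (or, if we must stay within $\Sigma$, to choose padding long enough and structured so that boundary-spanning occurrences are impossible by a length or first-mismatch argument); since Feeding only asks for \emph{existence} of witnessing strings, we are free to make this choice. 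A secondary subtlety is pinning down exactly what ``meets the structural description of $p_j$'' means for a multi-character $s_j$ — I will take it to mean ``contains $s_j$ as a contiguous substring,'' consistent with the rewrite semantics $\xrightarrow{p_j}$ used elsewhere — and to confirm that in every case the fresh material we create genuinely contains $s_j$ and not merely some proper piece of it.
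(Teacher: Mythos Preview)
Your case-by-case witness construction is exactly the approach the paper takes: its proof enumerates the same four patterns under the names \textbf{Deletion}, \textbf{Containment}, \textbf{Subsumption}, and \textbf{Completion} (the last covering both the prefix-overlap and suffix-overlap disjuncts), and the witness strings it writes down are essentially the ones you propose for cases (i), (ii), (iv), and (v).

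Your hesitation over case~(iii) is understandable but stems from what is evidently a typo in Eq.~\eqref{eq:1}: the paper's own Subsumption clause begins ``Assume that $s_j \in \mathsf{Substr}(t_i) \setminus \mathsf{Substr}(s_i)$'' --- that is, $s_j$, not $t_j$ --- and then simply takes $s=s_i$, $t=t_i$ as the witnessing pair, since $t_i$ already contains an occurrence of $s_j$ that $s_i$ lacks. Read that way, the case is the most direct of the four and needs none of the extra machinery you sketched.

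On one point you are more careful than the paper: the ``not present in $s$'' clause. The paper's proof is terse and never argues against boundary-spanning occurrences of $s_j$ in strings like $\alpha\,s_i\,\beta$ or $s_i v$; it just asserts the feeding conclusion. Your instinct to pad with fresh symbols (which is legitimate, since feeding only requires \emph{existence} of a witnessing pair) is the right way to make those steps airtight, and goes beyond what the paper actually writes down.
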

 \begin{proof}
   Given $u, v, o, s_i, t_i, s_j, t_j \in \Sigma^*$, $s_i \xrightarrow{p_i} t_i$, and $s_j \xrightarrow{p_j} t_j$ there are four types of transformations of $u$ by applying $p_i$ that will yield $v$ such that $s_j \sqsubseteq v$ (where $\sqsubseteq$ indicates ``is a substring of'').
   (1) \textbf{Deletion.} Assume that $t_i=\varepsilon$. $\exists wx \in \Sigma^+$ such that $ws_ix \xrightarrow{p_i} xw$. If $s_j=xw$ then $p_i$ feeds $p_j$.
   (2) \textbf{Containment.} $t_i \sqsubseteq s_j \land t_i \not\sqsubseteq s_i,\ \exists w, x \in \Sigma+$ such that $w \xrightarrow{p_i} x \land s_j \sqsubseteq x \land s_j \not\sqsubseteq x$.
   (3) \textbf{Subsumption.} Assume that $s_j \in \mathsf{Substr}(t_i) \setminus \mathsf{Substr}(s_i)$. Given $s_i \xrightarrow{p_i} t_i$, $t_i$ will always contain instances of $s_j$ not present in $s_i$, entailing that $p_i$ feeds $p_j$.
   (4) \textbf{Completion.} Assume that $t_i=uo$ and $s_j=ov$ (so that $o$ is a suffix of $t_i$ and a prefix of $s_j$). $s_iov \xrightarrow{p_i} t_iov = uov = us_j$, entailing that $p_i$ feeds $p_j$ (as with $t_i=ou$ and $s_j=vo$, \textit{mutatis mutandis}).
\end{proof}

\begin{lemma}
  \label{lemma:necessary}
  If $\neg\mathsf{feeds}(p_i, p_j)$ then $p_i \text{ does not feed } p_j$
\end{lemma}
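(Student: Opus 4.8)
We prove the contrapositive: supposing $p_i$ feeds $p_j$, we exhibit a clause of $\mathsf{feeds}(\cdot,\cdot)$ that holds. By the definition of feeding there are $s,t\in\Sigma^*$ with $s\xrightarrow{p_i}t$ together with a substring $w$ of $t$ that meets the structural description of $p_j$ — so we may take $w=s_j$ — and that is not present in $s$. Localising to the rewrite step responsible for this new occurrence, we may write $s=x\,s_i\,y$ and $t=x\,t_i\,y$; reducing the general (possibly multi-site) rewrite to a single application at one site is a short preliminary observation. Because $x$ and $y$ are each substrings of $s$, the occurrence of $w$ in $t$ lies neither wholly inside $x$ nor wholly inside $y$, so it must overlap the block $t_i$ (or, if $t_i=\varepsilon$, the seam left by deleting $s_i$).

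We then split on how the occurrence of $w=s_j$ sits across the three segments $x\mid t_i\mid y$ of $t$. (i) If $t_i=\varepsilon$, it straddles the seam, hence draws at least one symbol from $x$ and one from $y$, so $|s_j|>1$: clause 1. (ii) If it lies within $t_i$, then $s_j\in\mathsf{Substr}(t_i)$, and $s_j\notin\mathsf{Substr}(s_i)$ (else $s_j\sqsubseteq s_i\sqsubseteq s$): clause 3. (iii) If it starts in $x$ and ends within $t_i$, then $s_j=z'z$ with $z'$ a nonempty suffix of $x$ and $z$ a nonempty prefix of $t_i$, so $z\in\mathsf{Pref}(t_i)\cap\mathsf{Suff}(s_j)$; establishing $z\notin\mathsf{Substr}(s_i)$ gives clause 4. (iv) The mirror case — starting within $t_i$, ending in $y$ — gives clause 5 by the symmetric argument. (v) If it starts in $x$, runs through all of $t_i$, and ends in $y$, then $t_i\in\mathsf{Substr}(s_j)$; establishing $t_i\notin\mathsf{Substr}(s_i)$ gives clause 2. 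These five cases are exhaustive.

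The main obstacle is the relocation argument needed to discharge the side conditions ``$z\notin\mathsf{Substr}(s_i)$'' in (iii)--(iv) and ``$t_i\notin\mathsf{Substr}(s_i)$'' in (v): one must show that if the factor of the new match contributed by $t_i$ already occurred as a substring of $s_i$, then $w=s_j$ would itself occur as a substring of $s=x\,s_i\,y$, contradicting that $w$ is not present in $s$. Making this precise requires careful bookkeeping of positions when the match simultaneously consumes left context, the block $t_i$, and right context, and — for clauses 4 and 5 — choosing the factor of $t_i$ to be exactly the part of $w$ lying inside $t_i$ rather than an arbitrary prefix or suffix. This is also the point at which the intended reading of ``$w$ is not present in $s$'' has to be fixed, namely so that the $t_i$-contributed factor of the new match is genuinely novel (absent from $s_i$) — precisely what the $\setminus\,\mathsf{Substr}(s_i)$ terms in clauses 2, 4, and 5 encode.
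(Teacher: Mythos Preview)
Your approach --- contrapositive, localise to one rewrite site, then case-split on how the new occurrence of $s_j$ overlaps the segments $x\mid t_i\mid y$ --- is the same strategy the paper uses (there phrased as a contradiction, with the four named conditions simply asserted to ``exhaust the transformations that could yield a string containing $s_j$''), and your positional five-way split is a cleaner and more explicit decomposition than the paper's.

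The genuine gap is precisely where you flag it. The relocation argument for the $\setminus\,\mathsf{Substr}(s_i)$ side conditions in clauses 2, 4 and 5 does not work as stated: in your case (iii), $s_j=z'z$ with $z'$ a suffix of $x$ and $z$ a \emph{prefix} of $t_i$, and to rebuild $z'z$ inside $s=x\,s_i\,y$ you would need $z$ to be a \emph{prefix} of $s_i$, not merely a substring of it. Concretely, take $p_i\colon cb\to bc$ and $s_j=ab$: applying $p_i$ to $s=acb$ yields $t=abc$, so $p_i$ feeds $p_j$; yet the only shared affix of $t_i=bc$ and $s_j=ab$ is $b\in\mathsf{Substr}(cb)$, and one checks that all five clauses of $\mathsf{feeds}$ fail. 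The paper's own proof does not escape this either --- its negated ``Completion'' condition is written \emph{without} the $\setminus\,\mathsf{Substr}(s_i)$ term, so it is in effect arguing about a weaker predicate than the one actually defined. Your closing observation, that the reading of ``$w$ is not present in $s$'' has to be adjusted for the side conditions to follow, is therefore the right diagnosis rather than a loose end: what would make your relocation argument (and the lemma) go through is relaxing the subtraction in clauses 4 and 5 to $\mathsf{Pref}(s_i)$ and $\mathsf{Suff}(s_i)$ respectively, not $\mathsf{Substr}(s_i)$.
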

 \begin{proof}
   Given $s_i,t_i,s_j,t_ju \in \Sigma^*$, assume for the sake of contradiction two rewrite rules $s_i \xrightarrow{p_i} t_i$ and $s_j \xrightarrow t_j$ such that $p_i$ feeds $p_j$ but $s_i$, $t_i$, and $s_j$ do not satisfy any of the following conditions: \textbf{Deletion.} $s_i \neq \varepsilon \lor s_j \neq wx \forall w, x \in \Sigma^+$, \textbf{Containment.} $t \neg\sqsubseteq s_j \lor t_i \sqsubseteq s_i$ \textbf{Subsumption.} $s_j$ does not occur in $t_i$ except where it occurs in $s_i$. \textbf{Completion.} $\not\exists u, o, v \text{ such that} (t_i=ou \land s_j=vo) \lor t_i=uo \land s_j=ov$. Either $t_i$ is a non-empty string neither containing nor being contained by $s_j$ and sharing no prefix or suffix with $s_j$ or replacing $s_i$ with $t_i$ derives no instances of $s_j$. The first case must be false, since the conditions exhaust the transformations that could yield a string containing $s_j$. The second case must be false, because it contradicts the definition of feeding.
\end{proof}

\begin{theorem}[Feeding]
  \label{theorem:feeding}
  A rule $s_i \rightarrow t_i$ feeds a rule $s_j \rightarrow t_j$ iff $\mathsf{feeds}(s_i \rightarrow t_i, s_j \rightarrow t_j)$
\end{theorem}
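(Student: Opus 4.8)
The plan is to obtain Theorem~\ref{theorem:feeding} as an immediate consequence of the two preceding lemmas, since together they establish the two implications that make up the biconditional. Concretely, I would first observe that the statement ``$p_i$ feeds $p_j$ iff $\mathsf{feeds}(p_i,p_j)$'' unpacks into (a) $\mathsf{feeds}(p_i,p_j) \Rightarrow p_i \text{ feeds } p_j$ and (b) $p_i \text{ feeds } p_j \Rightarrow \mathsf{feeds}(p_i,p_j)$.

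For direction (a), I would simply invoke Lemma~\ref{lemma:sufficient}, which is exactly this implication: whenever the syntactic predicate $\mathsf{feeds}$ returns $\top$, one of the four constructions (Deletion, Containment, Subsumption, Completion) in its proof exhibits strings $s,t$ witnessing that $p_i$ creates a new occurrence of the structural description of $p_j$, which is the definition of feeding. For direction (b), I would take the contrapositive form of Lemma~\ref{lemma:necessary}: that lemma says $\neg\mathsf{feeds}(p_i,p_j) \Rightarrow p_i \text{ does not feed } p_j$, whose contrapositive is precisely (b). Chaining the two gives the ``iff,'' so the proof is essentially one line of bookkeeping once the lemmas are in hand.

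The only real content — and hence the step I would flag as the main obstacle, though it has already been discharged in Lemma~\ref{lemma:necessary} — is the exhaustiveness claim buried in the necessity argument: that the five guard conditions defining $\mathsf{feeds}$ in \eqref{eq:1} cover \emph{every} way that applying $s_i\rightarrow t_i$ to some string can produce a substring matching $s_j$ that was not already present. A new occurrence of $s_j$ in the rewritten string must either lie entirely inside the freshly written $t_i$ (handled by Subsumption/Containment depending on whether $t_i$ contains or is contained in $s_j$, with the $\varepsilon$ case split off as Deletion), or straddle the boundary between $t_i$ and surrounding context, which forces a nonempty overlap that is simultaneously a prefix/suffix of $t_i$ and a suffix/prefix of $s_j$ (the two Completion cases), all modulo the proviso that the matched material was not already a substring of $s_i$. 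I would therefore, in writing the final theorem, just cite Lemmas~\ref{lemma:sufficient} and~\ref{lemma:necessary} and note that their conjunction yields the biconditional, rather than re-deriving the case analysis.

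\begin{proof}
  Immediate from Lemmas~\ref{lemma:sufficient} and~\ref{lemma:necessary}. Lemma~\ref{lemma:sufficient} gives the implication $\mathsf{feeds}(s_i \rightarrow t_i, s_j \rightarrow t_j) \Rightarrow (s_i \rightarrow t_i) \text{ feeds } (s_j \rightarrow t_j)$, and the contrapositive of Lemma~\ref{lemma:necessary} gives the converse implication. Hence the two conditions are equivalent.
\end{proof}
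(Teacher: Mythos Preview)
Your proposal is correct and matches the paper's own proof essentially line for line: both simply invoke Lemma~\ref{lemma:sufficient} for the forward direction and Lemma~\ref{lemma:necessary} for the reverse, noting that together they yield the biconditional. Your explicit mention of taking the contrapositive of Lemma~\ref{lemma:necessary} is a slight clarification over the paper, but the argument is the same.
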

\begin{proof}
  Given two rules $p_i=s_i \rightarrow t_i$ and $s_j \rightarrow t_j$, Lemma \ref{lemma:sufficient} proves by enumerating cases that each of the conditions defined for $\mathsf{feed}(p_i, p_j)$ are sufficient for establishing that $p_i$ feeds $p_j$. Lemma \ref{lemma:necessary} proves by enumerating cases than $p_i$ does not feed $p_j$ if none of these conditions are satisfied.
\end{proof}
\begin{theorem}[Bleeding]
  \label{theorem:bleeding}
  A rule $p_i=s_i \rightarrow t_i$ bleeds a rule $p_j=s_j \rightarrow t_j$ iff $\mathsf{feeds}(t_i \rightarrow s_i, s_j \rightarrow t_j)$
  
\end{theorem}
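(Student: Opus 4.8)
The plan is to derive Theorem~\ref{theorem:bleeding} from Theorem~\ref{theorem:feeding} by observing that bleeding is exactly ``feeding run in reverse.'' The first ingredient is the \emph{reversal duality} for single rewrite steps: for all $u,v\in\Sigma^*$ we have $u \xrightarrow{\,s_i \to t_i\,} v$ if and only if $v \xrightarrow{\,t_i \to s_i\,} u$. This is immediate, since a rewrite step just replaces one occurrence of the source by the target, so that the condition ``$u=x\,s_i\,y$ and $v=x\,t_i\,y$'' is visibly symmetric in the pairs $(s_i,u)$ and $(t_i,v)$. I would state this as a one-line remark, being careful that the correspondence is at the level of individual occurrences: the occurrence of $t_i$ at positions $[\,|x|+1,\,|x|+|t_i|\,]$ in $v$ is precisely the site that the reversed rule rewrites back to $s_i$.

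Next I would translate the two conditions into one another. Unpacking the definition of bleeding, $p_i=s_i\to t_i$ bleeds $p_j=s_j\to t_j$ iff there are $s,t'$ with $s \xrightarrow{\,p_i\,} t'$ such that $s$ contains an occurrence of a string $w$ meeting the structural description of $p_j$ (i.e.\ an occurrence of $s_j$) that is absent from $t'$ --- the step destroys a $p_j$-context. By the reversal duality the relation $s \xrightarrow{\,p_i\,} t'$ coincides with $t' \xrightarrow{\,t_i \to s_i\,} s$, and ``an occurrence of $w$ present in $s$ but absent from $t'$'' is, read along the backward step, exactly ``an occurrence of $w$ present in the output $s$ but absent from the input $t'$'' --- that is, the backward step \emph{creates} a $p_j$-context. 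Hence $p_i$ bleeds $p_j$ if and only if the reversed rule $t_i\to s_i$ feeds $p_j$.

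Finally I would apply Theorem~\ref{theorem:feeding} to the rule pair $(t_i\to s_i,\ s_j\to t_j)$, which gives: $t_i\to s_i$ feeds $s_j\to t_j$ iff $\mathsf{feeds}(t_i\to s_i,\ s_j\to t_j)$. Composing this with the equivalence of the previous paragraph yields the claimed biconditional.

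The main obstacle I anticipate is not the algebra but the occurrence bookkeeping: the whole framework (witness the multiset definition of $\mathsf{Substr}$) tracks occurrences, not mere membership, so I must argue carefully that ``$s$ includes a $w$ not present in $t'$'' and ``the step $t'\!\to\! s$ creates a $w$ not present in $t'$'' name the same event, with no spurious occurrence of $w$ elsewhere spoiling the match. I would also double-check the degenerate cases --- $s_i=\varepsilon$, so the reversed rule $t_i\to\varepsilon$ is a deletion falling under the first clause of $\mathsf{feeds}$, and $t_i=\varepsilon$, so the reversed rule $\varepsilon\to s_i$ is a pure insertion --- to confirm Theorem~\ref{theorem:feeding} is being invoked in a regime it actually covers. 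Lastly, the printed statement of the bleeding definition appears to contain a typo (``$s_i \xrightarrow{p_i} t_i$'' where ``$s \xrightarrow{p_i} t'$'' is intended); the argument above reads it as the evident mirror of the feeding definition, and I would make that reading explicit.
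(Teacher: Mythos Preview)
Your proposal is correct and follows essentially the same approach as the paper: both reduce bleeding to feeding via the reversal duality $u \xrightarrow{s_i\to t_i} v \iff v \xrightarrow{t_i\to s_i} u$, so that ``$p_i$ destroys a $p_j$-context'' is exactly ``$t_i\to s_i$ creates a $p_j$-context,'' and then invoke Theorem~\ref{theorem:feeding}. Your write-up is considerably more explicit than the paper's one-sentence proof---in particular you spell out both directions of the biconditional, flag the occurrence-level bookkeeping, and note the typo in the bleeding definition---whereas the paper states only the forward implication and leaves the reader to fill in the rest.
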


\begin{proof}
  if $\exists u, v \in \Sigma^*, u \xrightarrow{t_i \rightarrow s_i} v$ such that $s_j \sqsubseteq v \land s_j \not\sqsubseteq u$, it follows that mapping $s_i \xrightarrow{p_i} t_i$ bleeds $p_j$ (where $s_j \xrightarrow{p_j} t_j$).
\end{proof}
\section{Benchmark Details}
\label{sec:appendix:benchmark_details}
In this section, we discuss issues such as licensing and the distributional statistics of all the data snapshots created and used in our work.

\subsection{Licensing}
We create a provably correct, fully automated, and scalable pipeline for generating multi-step Programming by Example (PBE) problems, enabling the evaluation of inductive reasoning in LLMs.
We plan to release all the benchmark snapshots (PBEBench-Lite, PBEBench-Lite-MoreEg, PBEBench, PBEBench (25, 30)) under the CC BY-SA 4.0 license.
Additionally, we produce code that allows you to generate more snapshots, which we also release under the MIT license.

\subsection{Benchmark Statistics}
\label{sec:appendix:benchmark_stats}
We report the distributional statistics (like distribution of ground truth cascade lengths or relation types) of all the benchmark snapshots used in this paper below:

\textbf{PBEBench-Lite:} We generate a relation type balanced dataset with 1008 instances, 5 examples per PBE problem, and 63 instances per relation type category with cascade lengths ranging from 2 to 5. 
The alphabet spans $\Sigma_{\text{lite}}=\{a,\dots,k,u,v,w,x,y,z\}$, each input example contains 2 to 6 letters, and each rule has 1-3 characters on either side of the replace function. 
The distribution of cascades and relation types is shown in Fig.~\ref{fig:pbebench_lite_cascade} and Fig.~\ref{fig:pbebench_lite_reln_type}.

\textbf{PBEBench-Lite-MoreEg:} We generate a relation type balanced dataset with 240 instances, 50 examples per PBE problem and 15 instances per relation type category with cascade lengths ranging from 1 to 5. 
The alphabet spans $\Sigma_{\text{lite}}=\{a,\dots,k,u,v,w,x,y,z\}$, each input example contains 2 to 6 letters and each rule has 1-3 characters on either side of the replace function.
The distribution of cascades and relation types is shown in Fig.~\ref{fig:pbebench_lite_more_eg_cascade} and Fig.~\ref{fig:pbebench_lite_more_eg_reln_type}.

\textbf{PBEBench:} We generate a cascade balanced dataset with 1216 instances, 50 examples per PBE problem, and 64 instances per cascade length ranging from 2 to 20. 
The alphabet spans $\Sigma=\{a,\dots,z,A,\dots,Z\}$, each input example contains 2 to 6 letters and each rule has 1-3 characters on either side of the replace function.
The distribution of cascades and relation types is shown in Fig.~\ref{fig:pbebench_cascade} and Fig.~\ref{fig:pbebench_reln_type}.

\textbf{PBEBench (25, 30):} We generate a cascade balanced dataset with 128 instances, 50 examples per PBE problem, and 64 instances per cascade for cascade length of 25 and 30. 
The alphabet spans $\Sigma=\{a,\dots,z,A,\dots,Z\}$, each input example contains 2 to 6 letters and each rule has 1-3 characters on either side of the replace function.
The distribution of cascades and relation types is shown in Fig.~\ref{fig:pbebench_25_30_cascade} and Fig.~\ref{fig:pbebench_25_30_reln_type}.

\subsection{Real SLI Benchmark}
\label{sec:appendix:real_sli_benchmark}
We construct a benchmark of 21 instances derived from real SLI data, covering 6 proto-language and attested-language pairs: prototangkhulic-huishu, austronesian-hawaiian, austronesian-niue, austronesian-tongan, austronesian-rarotongan, and austronesian-samoan. 
Our data is derived from \cite{naik2025programmingexamplesmeetshistorical, naik2024can} original datasets.
Each instance consists of 50 example input and output pairs and uses the same prompt template as the multi-step PBE task prompt employed for all synthetic datasets described above. 
However, the prompt constraints permit program cascades of up to 50 programs, the sizes of $\alpha_k$ and $\beta_k$ can be up to 5, and each LLM is allotted 32 solution attempts (sampling budget) per PBE instance to better reflect the real-world complexity of low-resource SLI.
We do note a limitation of these assumptions, as for some language pairs, such as prototangkhulic-huishu, certain input words are shorter than 5 characters, and in principle an LLM could exploit this by constructing a single string rewrite rule that replaces the entire input word with the target output word.
Nevertheless, despite this potential for gaming, the results indicate that most LLMs struggle with the task, and even the best-performing gpt-oss-120B model achieves a Pass@1 of only $0.33$, despite a sampling budget of 32.

\subsection{Metrics Details}
\label{sec:appendix:metrics_details}
For the multi-step PBE task, since a given list of inputs ($\vec{\imath}$) could be transformed into the outputs ($\vec{o}$) by multiple program cascades $\vec{p}$, we utilize metrics based on functional equivalence.
This is in line with programming by example literature \cite{li2024is} and historical linguistics \cite{hoenigswald1960language}. 

Concretely, we execute the model-generated solution $\hat{\vec{p}}$ on the inputs, treating the input-output pairs as test cases.
We then compare the predicted outputs $\hat{\vec{o}}=\hat{\vec{p}}(\vec{\imath})$ with the ground-truth outputs $\vec{o}$ at two different levels of granularity:
(1) coarse-grained evaluation, corresponding to solve rate (\texttt{Pass@1}), and
(2) fine-grained evaluation, corresponding to normalized edit similarity (\texttt{Edit\_Sim}).

Both metrics perform element-wise comparisons on the strings in the output vectors:
\\
\textbf{Coarse-grained Metric} (\texttt{Pass@1}):
This metric corresponds to \texttt{Pass@1} as introduced in \cite{chen2021evaluating}.
\[
\mathrm{pass@1} =  \frac{1}{|\mathcal{D}|}\sum_{\vec{o},\vec{\imath} \in \mathcal{D}} \mathrm{1}_{\hat{\vec{p}}(\vec{\imath})=\vec{o}}
\]
Here $\mathrm{1}_{\hat{\vec{p}}(\vec{\imath})=\vec{o}}$ is an indicator variable which equals 1 when $\hat{\vec{p}}(\vec{\imath})=\vec{o}$ and 0 otherwise.
\\
\textbf{Fine-grained Metric} (\texttt{Edit\_Sim}):
This metric is identical to the Reward@1 metric used by \cite{naik2025programming}.
\[
\mathrm{Edit \_Sim} =  \frac{1}{|\mathcal{D}|} \sum_{\vec{o},\vec{\imath} \in \mathcal{D}} 1-\frac{\mathrm{dist}(\hat{\vec{p}}(\vec{\imath}),\vec{o})}{\mathrm{dist}(\vec{\imath},\vec{o})}
\]
Here $\mathrm{dist}$ denotes the total Levenshtein edit distance, summed across the corresponding strings in the predicted and ground-truth output vectors.

In addition to these performance metrics, we evaluate the proportion of programs generated by an LLM that are valid, meaning they satisfy all constraints specified in the prompt (see Section~\ref{sec:program_induction:prompting}).
We refer to this metric as the \texttt{Valid\_Rate}.

Finally, for the program reordering task, given a model-predicted permutation $\hat{\sigma}$, we apply it to the originally permuted program sequence $\sigma(\vec{p})$ and evaluate whether it correctly recovers the original outputs.
Specifically, we execute $\hat{\sigma}(\sigma(\vec{p}))$ on the inputs and check whether $\hat{\sigma}(\sigma(\vec{p}))(\vec{\imath})=\vec{o}$.

For some instances, we explicitly construct the problem such that there exists only a single unique solution, in which case the inverse permutation $\hat{\sigma}=\sigma^{-1}$ is the only correct answer.
The evaluation metric used is accuracy (Acc), defined analogously to \texttt{Pass@1}:

\phantomsection
\label{eq:perm_acc}
\[
\mathrm{Acc} =  \frac{1}{|\mathcal{D}|}\sum_{\vec{o},\vec{\imath} \in \mathcal{D}} \mathrm{1}_{\hat{\sigma}(\sigma(\vec{p}))(\vec{\imath})=\vec{o}}
\]
We additionally report unique accuracy, defined as the accuracy computed over the subset of instances with a single valid solution.
In the PBEBench-Lite-Perm dataset, this subset consists of 242 instances.

\begin{figure}[!tbh]
    \centering
    \includegraphics[width=0.5\textwidth]{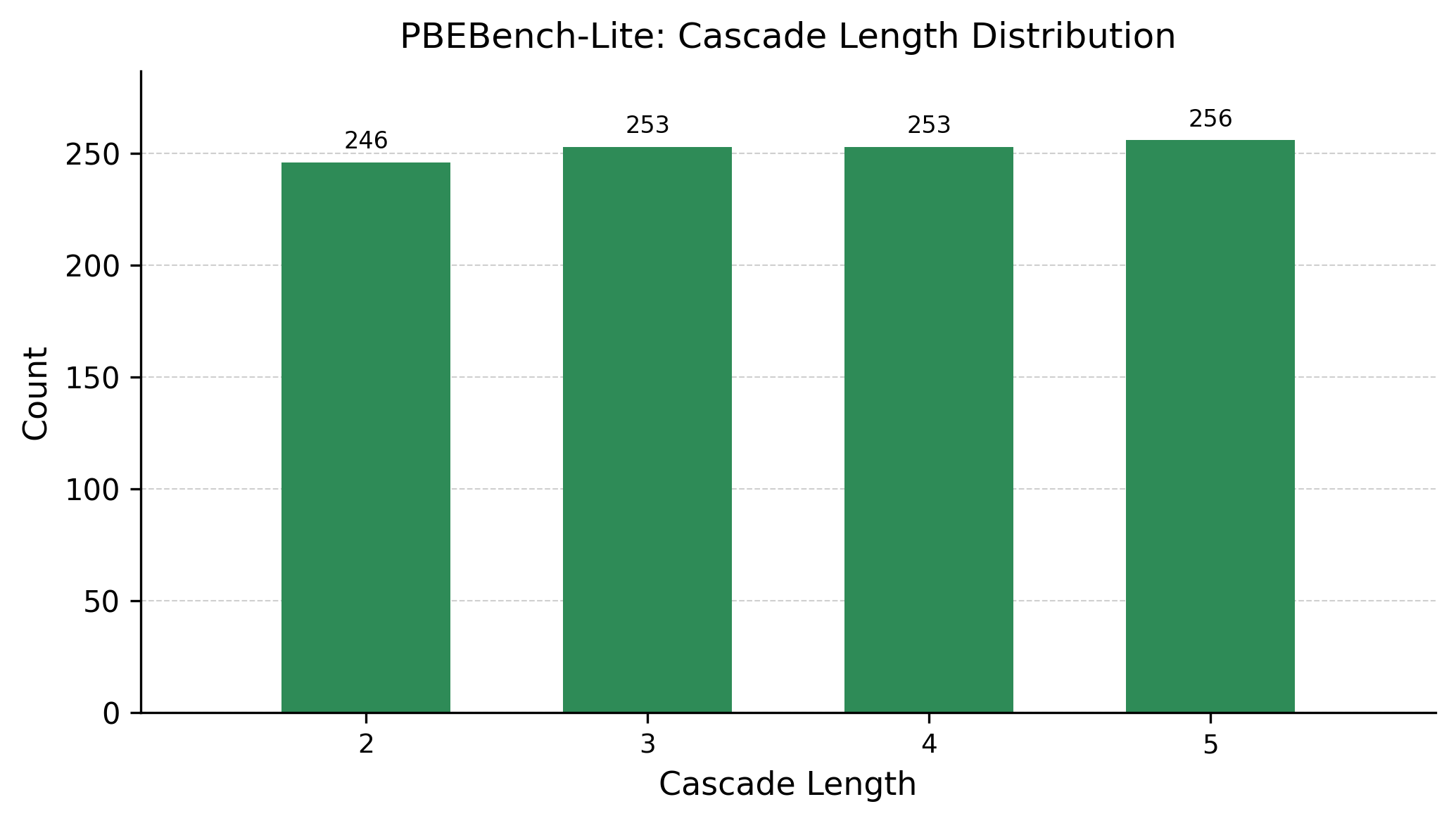}
    \caption{Cascade length distribution for PBEBench-Lite.}
\label{fig:pbebench_lite_cascade}
\end{figure}

\begin{figure}[!tbh]
    \centering
    \includegraphics[width=0.5\textwidth]{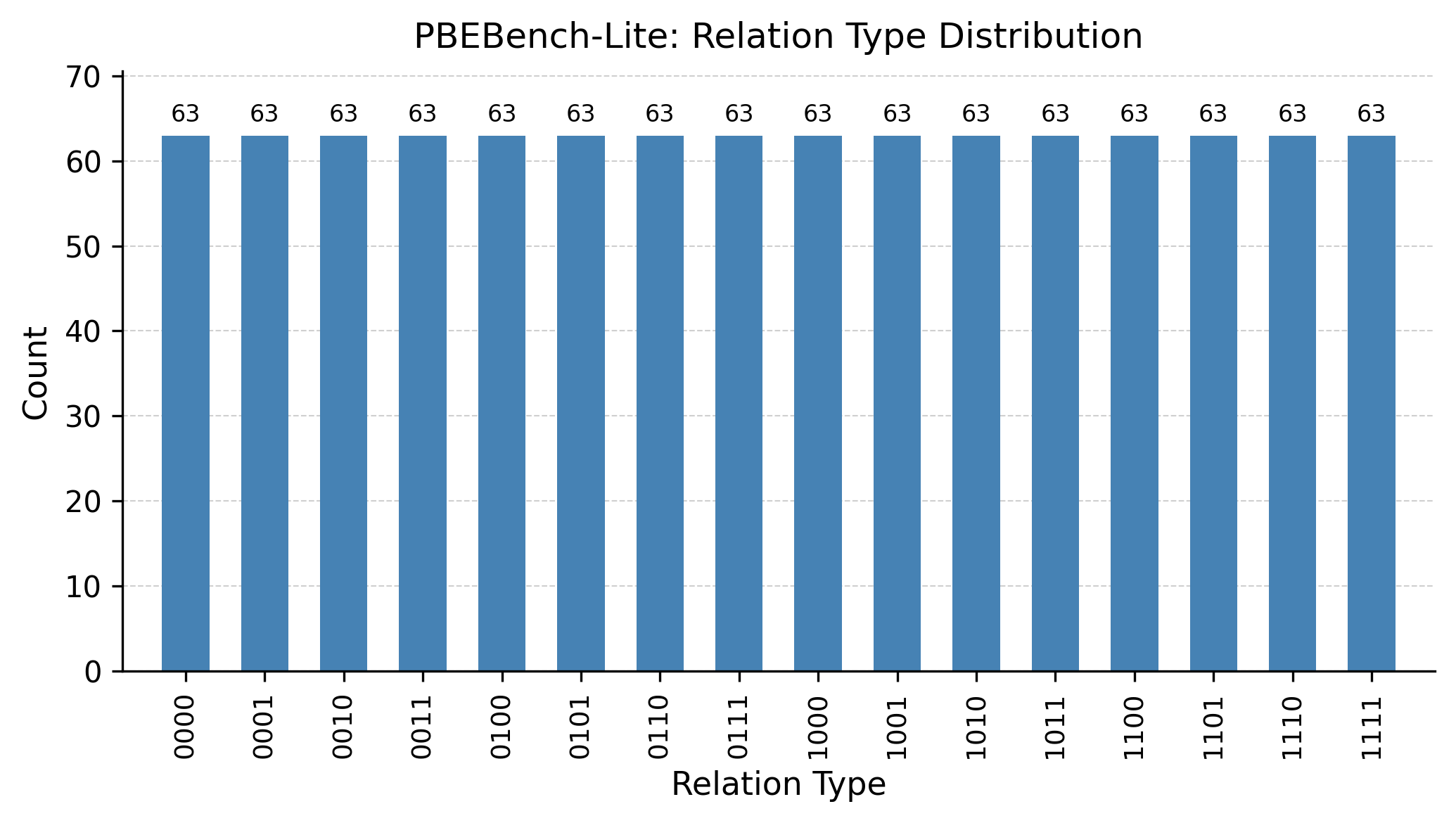}
    \caption{Relation type distribution for PBEBench-Lite.}
\label{fig:pbebench_lite_reln_type}
\end{figure}

\begin{figure}[!tbh]
    \centering
    \includegraphics[width=0.5\textwidth]{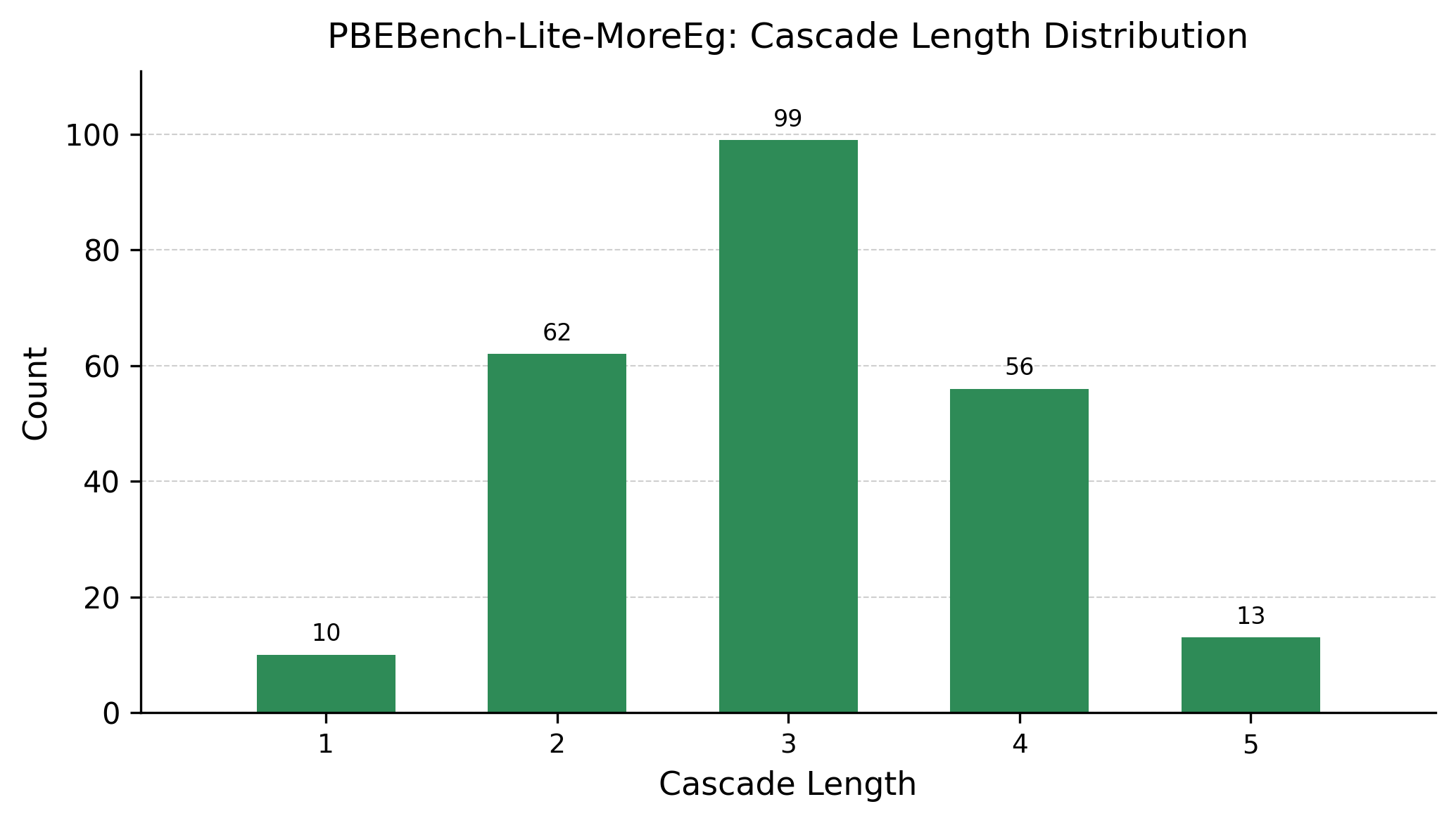}
    \caption{Cascade length distribution for PBEBench-Lite-MoreEg.}
\label{fig:pbebench_lite_more_eg_cascade}
\end{figure}

\begin{figure}[!tbh]
    \centering
    \includegraphics[width=0.5\textwidth]{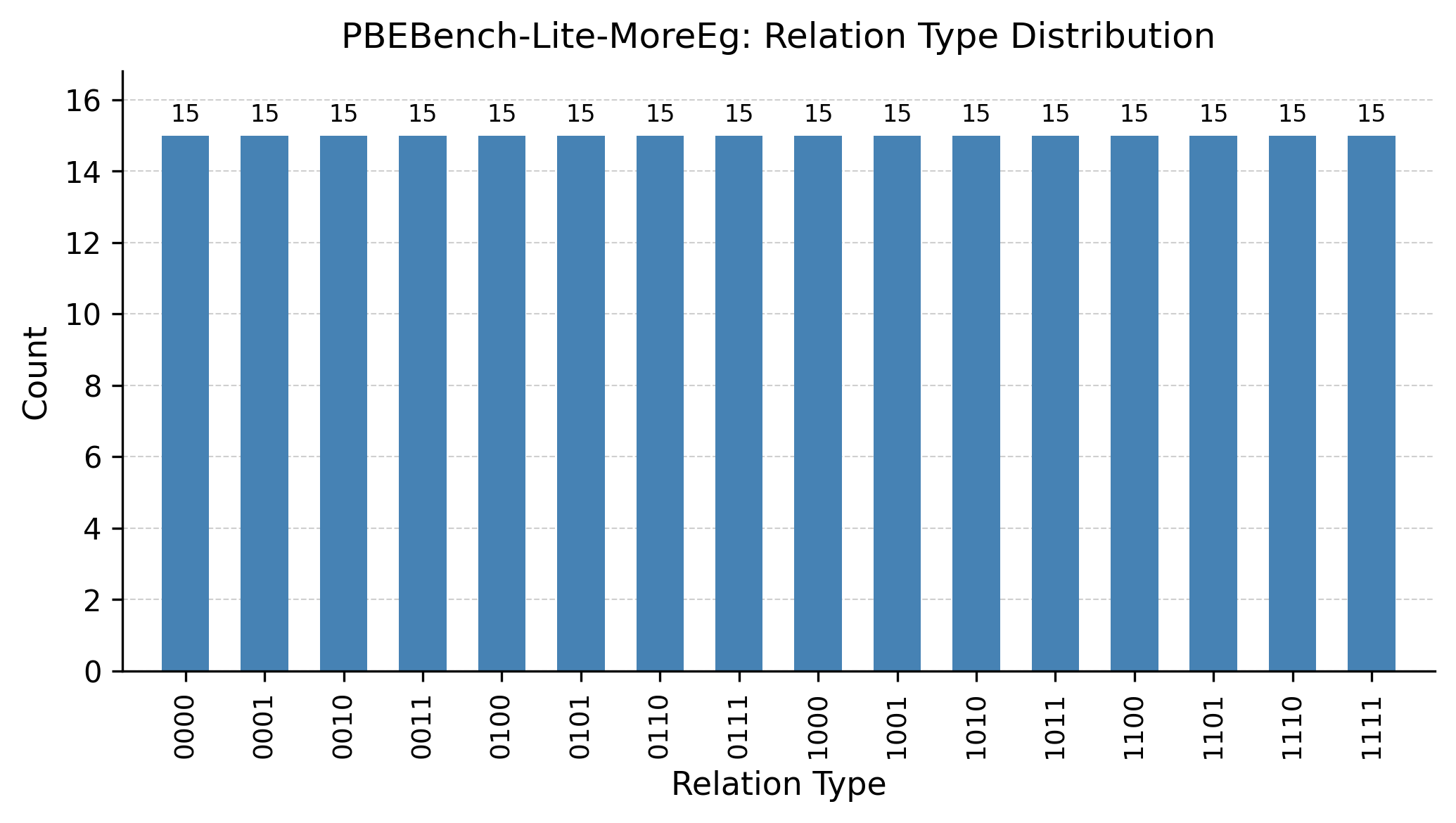}
    \caption{Relation type distribution for PBEBench-Lite-MoreEg.}
\label{fig:pbebench_lite_more_eg_reln_type}
\end{figure}

\begin{figure}[!tbh]
    \centering
    \includegraphics[width=0.5\textwidth]{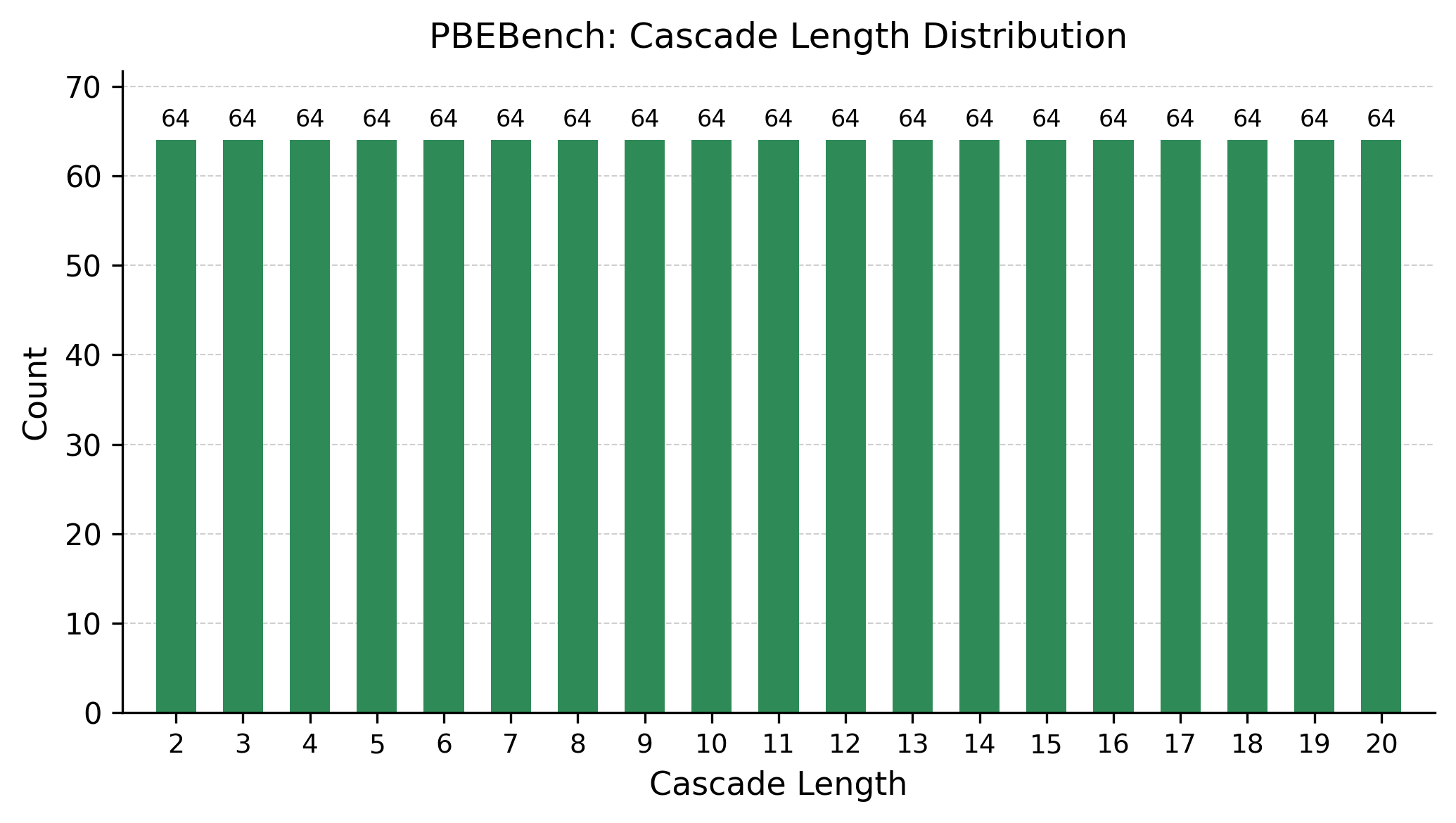}
    \caption{Cascade length distribution for PBEBench.}
\label{fig:pbebench_cascade}
\end{figure}

\begin{figure}[!tbh]
    \centering
    \includegraphics[width=0.5\textwidth]{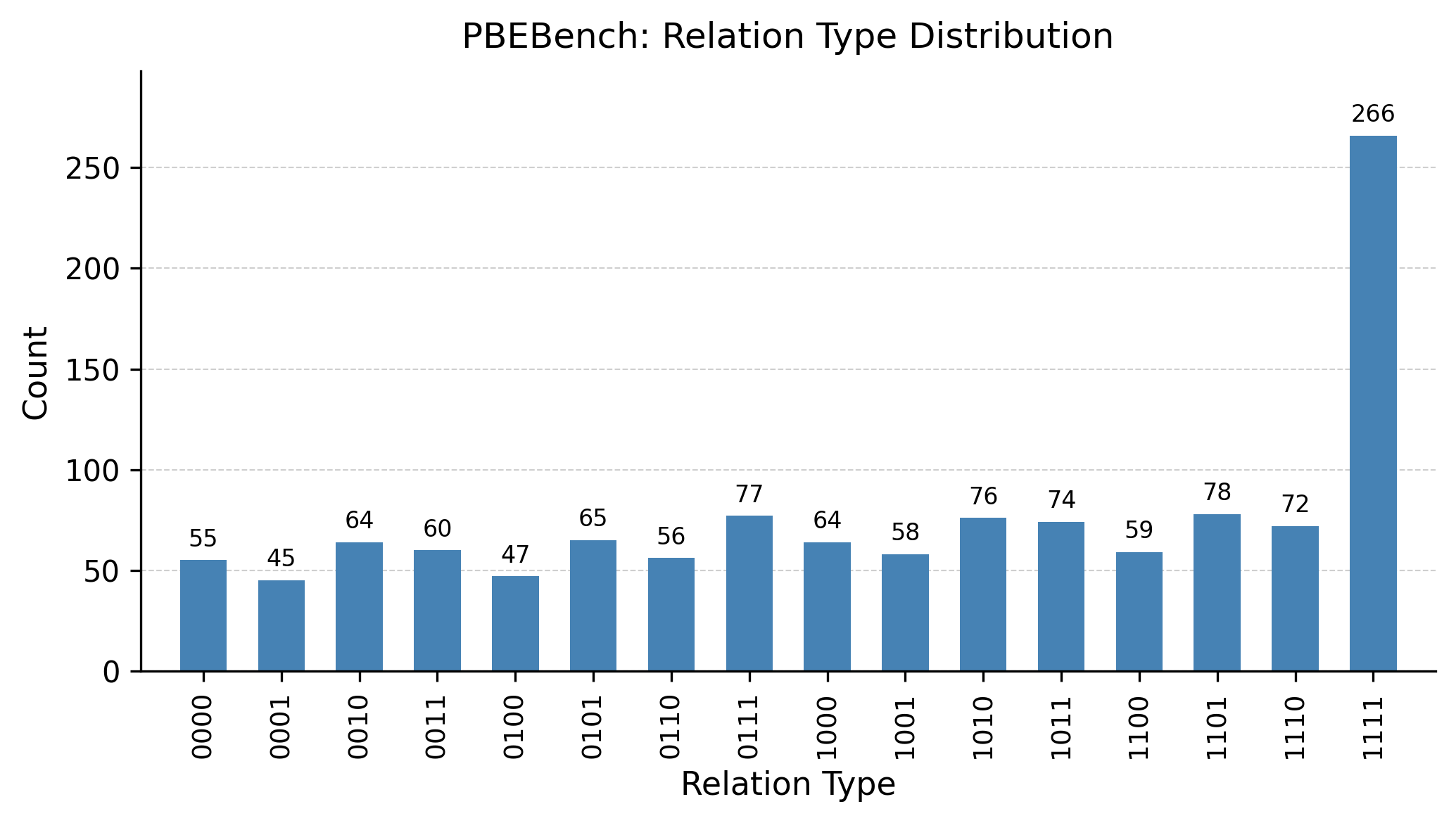}
    \caption{Relation type distribution for PBEBench.}
\label{fig:pbebench_reln_type}
\end{figure}

\begin{figure}[!tbh]
    \centering
    \includegraphics[width=0.5\textwidth]{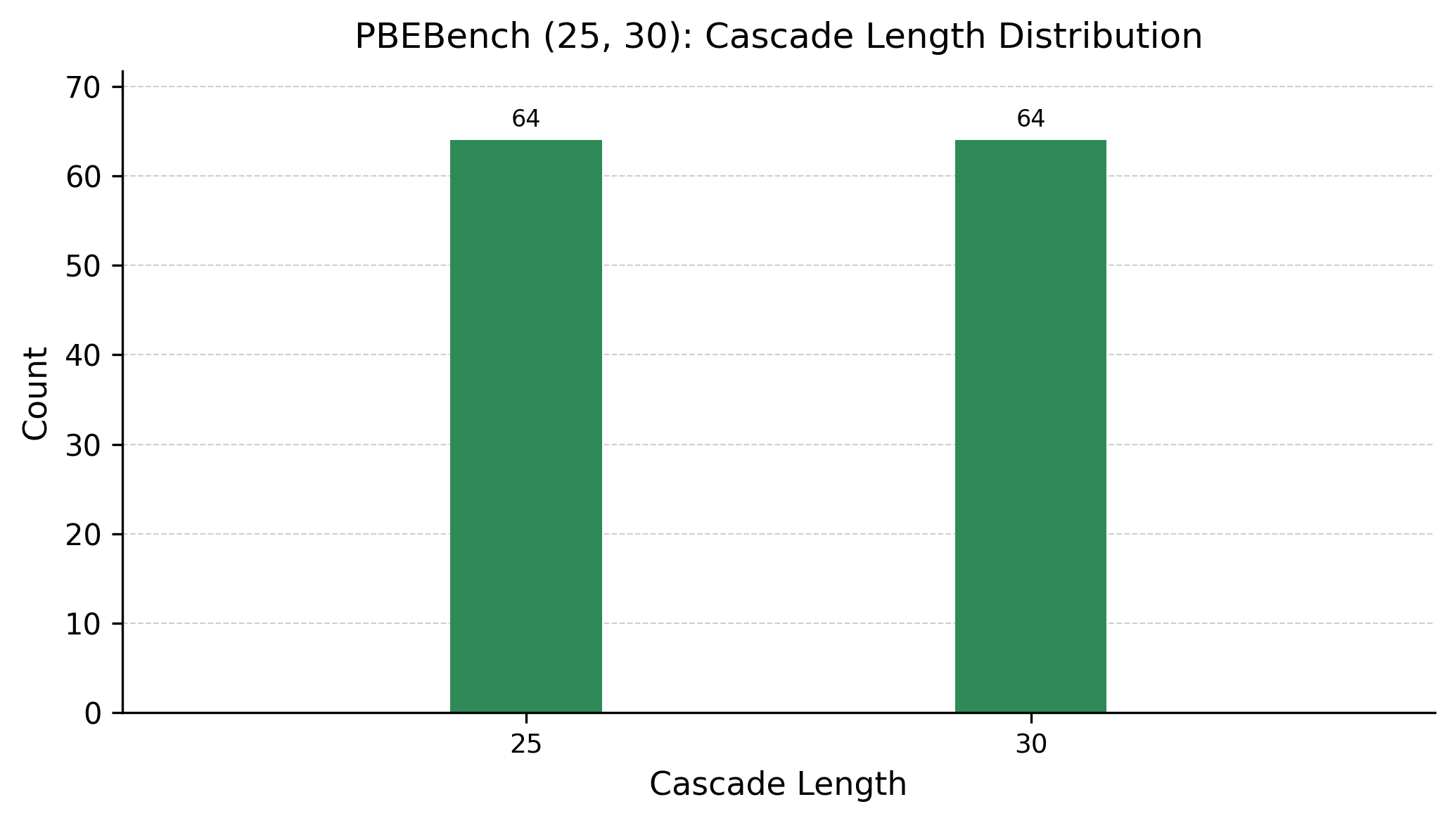}
    \caption{Cascade length distribution for PBEBench (25, 30).}
\label{fig:pbebench_25_30_cascade}
\end{figure}

\begin{figure}[!tbh]
    \centering
    \includegraphics[width=0.5\textwidth]{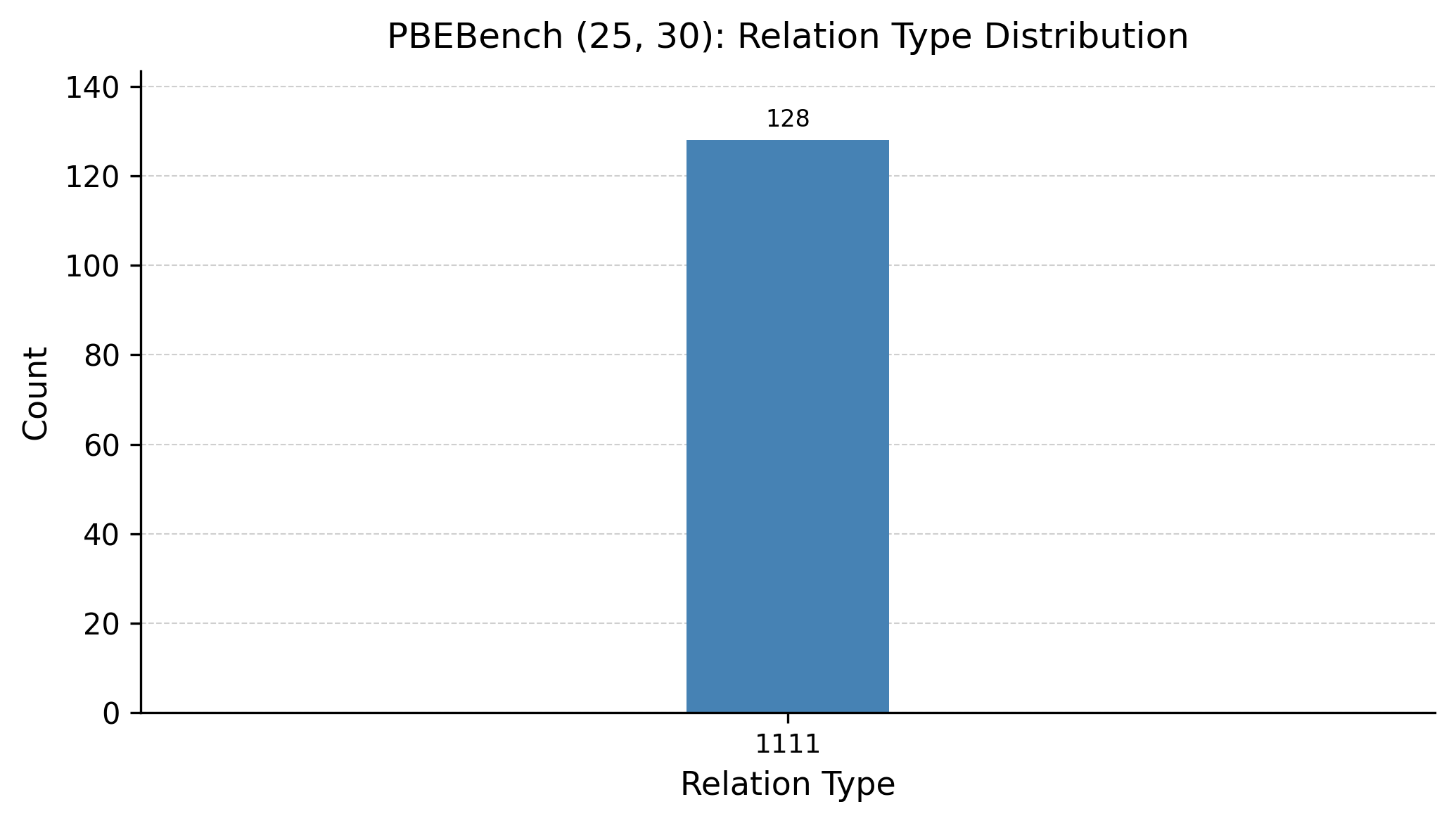}
    \caption{Relation type distribution for PBEBench (25, 30).}
\label{fig:pbebench_25_30_reln_type}
\end{figure}

\section{Method Details}
\label{sec:appendix:method_details}
This section provides additional details on the problem proposer and problem solver to support reproducibility. Specifically, it covers:
\begin{itemize}[itemsep=1pt, leftmargin=*, parsep=0pt, topsep=-2pt, partopsep=0pt]
    \item \textbf{Problem proposer:} the algorithm used for data generation.
    \item \textbf{Problem solver:} the prompt template for inference, the models used in our work (including details, licensing, and snapshots for closed-source models, as well as costs of running them), and the K-Fold analysis employed to efficiently simulate different sampling budgets with expensive closed-source models like GPT-5 while reducing variance.
\end{itemize}

\subsection{Algorithm Pseudocode}
This section formally expresses the pseudocode/logic behind the data generation algorithm proposed in our work, breaking it down into the rejection sampling subroutine (Algorithm~\ref{alg:balanced_sampling}) and the relation type classification subroutine (Algorithm~\ref{alg:classify_bfcc}), respectively. 

\begin{algorithm}[t]
\caption{Rejection Sampling for BFCC Dataset Generation}
\label{alg:balanced_sampling}
\begin{algorithmic}[1]
\Require Target size $n$; cascade bounds $[\ell_{\min}, \ell_{\max}]$; patience $\tau$
\Require Input sampler $p_\mathcal{I}$: generates $k$ random strings from vocabulary $\Sigma$
\Require Cascade sampler $p_\mathcal{P}(\cdot \mid X, \ell)$: generates $\ell$ programs where each replace$(a,b)$ has $a$ sampled from substrings of $X$ and $b$ sampled from $\Sigma$
\Ensure Dataset $\mathcal{D}$ balanced over 16 BFCC categories
\State Initialize quotas $q_c \gets \lfloor n/16 \rfloor$ for all $c \in \{0,1\}^4$
\State Initialize $\mathcal{D} \gets \emptyset$, seen signatures $\mathcal{S} \gets \emptyset$, steps $t \gets 0$
\While{$|\mathcal{D}| < n$}
    \State $t \gets t + 1$
    \State Sample length $\ell \sim \text{Uniform}\{\ell_{\min}, \ldots, \ell_{\max}\}$
    \State Sample inputs $X = \{x_1, \ldots, x_k\} \sim p_\mathcal{I}$
    \State Sample cascade $\pi = (f_1, \ldots, f_\ell) \sim p_\mathcal{P}(\cdot \mid X, \ell)$
    \State $\hat{\pi} \gets []$; $Y \gets X$
    \For{each program $f \in \pi$}
        \If{$f$ changes at least one element in $Y$}
            \State $\hat{\pi} \gets \hat{\pi} \cdot f$; $Y \gets f(Y)$
        \EndIf
    \EndFor
    \If{$|\hat{\pi}| < \ell_{\min}$ or $Y = X$} 
        \State \textbf{continue} \Comment{Reject: insufficient transformation}
    \EndIf
    \State $\sigma \gets (X, Y, \hat{\pi}, |\hat{\pi}|)$ 
    \If{$\sigma \in \mathcal{S}$} 
        \State \textbf{continue} \Comment{Reject: duplicate}
    \EndIf
    \State $c \gets \text{ClassifyBFCC}(\hat{\pi})$
    \If{$t < \tau$ and $q_c > 0$} \Comment{Before patience: enforce quotas}
        \State $\mathcal{D} \gets \mathcal{D} \cup \{(X, \hat{\pi}, Y, c)\}$
        \State $q_c \gets q_c - 1$; $\mathcal{S} \gets \mathcal{S} \cup \{\sigma\}$
    \ElsIf{$t \geq \tau$} \Comment{After patience: accept any valid instance}
        \State $\mathcal{D} \gets \mathcal{D} \cup \{(X, \hat{\pi}, Y, c)\}$
        \State $\mathcal{S} \gets \mathcal{S} \cup \{\sigma\}$
    \EndIf
\EndWhile
\State \Return $\mathcal{D}$
\end{algorithmic}
\end{algorithm}

\subsection{Program Extraction}
\label{sec:appendix:method_details:program_extraction}
To extract the program cascade from LLM predictions, we apply the following rules.
If the LLM fails to produce a valid, extractable Python code block, or produces a null response due to unterminated chain of thought (as observed in gpt-oss models), we mark the corresponding case as a null response and replace the prediction with an identity program like \texttt{replace("x", "x")} for evaluation.
For cases where a valid response is produced, we account for the fact that we evaluate a wide range of LLMs, including reasoning models that may generate intermediate programs and subsequently refine or improve them through reflection.
Accordingly, we evaluate both the first and the last Python code blocks produced by the model.
For the remaining valid cases, if a predicted program cascade $\hat{\vec{p}}$ contains more than $L_{max}$ programs, we retain only the first $L_{max}$ programs for evaluation.
If the predicted program violates any other constraint, it is replaced by an identity program $p^I$, which leaves the inputs unchanged.
Given the predicted cascade $\hat{\vec{p}}$, we compute the predicted outputs as $\hat{\vec{o}} = \hat{\vec{p}}(\vec{\imath})$.
Table~\ref{tab:pbebench_lite_first_and_last_code_block} reports performance on PBEBench-Lite for both the first and last code blocks.
We observe that, for most models, the last code block yields superior performance.
Therefore, in Table~\ref{tab:model_performance} in the main paper, as well as in any table where it is not explicitly specified otherwise, we report results corresponding to the last code block by default.

\begin{table*}[t]
\centering
\smallskip
\resizebox{\textwidth}{!}{
\begin{tabular}{@{}lrrrrrr@{}}
\toprule
\multirow{2}{*}{\textbf{Model}} & \multicolumn{3}{r}{\textbf{First Code Block}} & \multicolumn{3}{r}{\textbf{Last Code Block}} \\
\cmidrule(l){2-7}
 & \textbf{Pass@1} & \textbf{Edit Sim} & \textbf{Valid Rate} & \textbf{Pass@1} & \textbf{Edit Sim} & \textbf{Valid Rate} \\
\midrule
Codestral-22B & 0.0109 & -0.0130 & 0.8280 & 0.0109 & -0.0107 & 0.8254 \\
Qwen2.5-32B-Instruct & 0.0298 & 0.1232 & 0.8265 & 0.0298 & 0.1252 & 0.8288 \\
Qwen2.5Coder-32B-Instruct & 0.0397 & 0.1836 & 0.6883 & 0.0407 & 0.1884 & 0.6890 \\
Qwen3-32B & 0.0179 & 0.0964 & 0.7645 & 0.0179 & 0.0964 & 0.7645 \\
Qwen3-Coder-30B-A3B-Instruct \moeemoji & 0.0377 & 0.0857 & 0.8048 & 0.0377 & 0.0864 & 0.8134 \\
DeepSeek-R1-Distill-Qwen-32B \reasoningemoji & 0.2242 & 0.3486 & 0.8709 & 0.2242 & 0.3486 & 0.8709 \\
Qwen3-30B-A3B \reasoningemoji \moeemoji & 0.2887 & 0.3360 & \textbf{0.9905} & 0.2887 & 0.3360 & \textbf{0.9905} \\
QwQ-32B \reasoningemoji & 0.3591 & 0.4072 & 0.9500 & 0.3601 & 0.4092 & 0.9493 \\
Qwen3-32B (with CoT) \reasoningemoji & 0.3938 & 0.4803 & 0.9182 & 0.4187 & 0.5026 & 0.9676 \\
gpt-oss-20b \reasoningemoji \moeemoji & 0.4058 & 0.4619 & 0.9900 & 0.4058 & 0.4619 & 0.9900 \\
\textbf{gpt-oss-120b \reasoningemoji \moeemoji} & \textbf{0.6250} & \textbf{0.6985} & 0.9254 & \textbf{0.6250} & \textbf{0.6985} & 0.9254 \\
\midrule
Claude-3.5-Sonnet & 0.1845 & 0.4430 & 0.8208 & 0.1845 & 0.4434 & 0.8209 \\
Claude-3.7-Sonnet & 0.2212 & 0.4825 & 0.8409 & 0.2321 & 0.4996 & 0.8409 \\
Claude-4 Sonnet & 0.2956 & 0.5832 & 0.7720 & 0.2966 & 0.5870 & 0.7719 \\
Claude-3.7-Sonnet (Thinking) \reasoningemoji & 0.3343 & 0.5953 & 0.8127 & 0.3661 & 0.6154 & 0.8192 \\
Claude-4 Sonnet (Thinking) \reasoningemoji & 0.3571 & 0.6085 & 0.7788 & 0.3581 & 0.6079 & 0.7821 \\
Claude-4 Opus (Thinking)* \reasoningemoji & 0.5389 & 0.7497 & 0.8577 & 0.5389 & 0.7521 & 0.8561 \\
o3-mini \reasoningemoji & 0.5278 & 0.5954 & 0.9179 & 0.5278 & 0.5954 & 0.9179 \\
o4-mini \reasoningemoji & 0.6329 & 0.6907 & 0.9153 & 0.6329 & 0.6907 & 0.9153 \\
Gemini 2.5 Flash \reasoningemoji & 0.5833 & 0.6533 & 0.7911 & 0.5863 & 0.6562 & 0.7901 \\
\textbf{GPT-5 \reasoningemoji} & \textbf{0.7242} & \textbf{0.7645} & 0.9286 & \textbf{0.7242} & \textbf{0.7645} & 0.9286 \\
\bottomrule
\end{tabular}
}
\caption{\textbf{PBEBench-Lite Performance:} We compute the \texttt{Pass@1} and \texttt{Edit\_Sim} as the coarse and fine-grained evaluation, respectively, for each model. \moeemoji\ indicates mixture-of-experts (or MoE) model. \reasoningemoji\ indicates a reasoning model. * indicates evaluated on 20\% of the dataset due to cost.}
\label{tab:pbebench_lite_first_and_last_code_block}
\end{table*}

\begin{algorithm}[t]
\caption{BFCC Classification}
\label{alg:classify_bfcc}
\begin{algorithmic}[1]
\Function{ClassifyBFCC}{cascade $\hat{\pi} = [\text{replace}(a_1, b_1), \ldots, \text{replace}(a_m, b_m)]$}
\State Initialize category vector $(c_F, c_B, c_{CF}, c_{CB}) \gets (0, 0, 0, 0)$
\For{each ordered pair $(i,j)$ where $i \neq j$}
    \If{program $i$ feeds program $j$} \Comment{$b_i$ creates instances of $a_j$}
        \State Set $c_F \gets 1$ if $i < j$; set $c_{CF} \gets 1$ if $i > j$
    \EndIf
    \If{program $i$ bleeds program $j$} \Comment{$a_i$ removes instances of $a_j$}
        \State Set $c_B \gets 1$ if $i < j$; set $c_{CB} \gets 1$ if $i > j$
    \EndIf
\EndFor
\State \Return $(c_F, c_B, c_{CF}, c_{CB})$
\EndFunction
\end{algorithmic}
\end{algorithm}

\begin{algorithm}[t]
\caption{FB-Swap Permutation Heuristic}
\label{alg:fb_swap_permutation}
\begin{algorithmic}[1]
\Statex \textbf{FB.} ``FB'' refers to \emph{feeding} ($F$) and \emph{bleeding} ($B$) relations between programs (Algorithm~\ref{alg:classify_bfcc}).
\Statex \textbf{Execution.} $\textsc{ApplyPrograms}(X,\pi)$ applies the programs in $\pi$ left-to-right to each string in $X$.

\Function{FindFBSwap}{$X,\hat{\pi},Y,E_{FB}$}
\Require Cascade $\hat{\pi}=[f_0,\ldots,f_{m-1}]$ and outputs $Y=\textsc{ApplyPrograms}(X,\hat{\pi})$
\Require FB edges $E_{FB}$ is a list of triples $(i,r,j)$ with $r\in\{F,B\}$
\Ensure A permutation $\rho$ and permuted cascade $\hat{\pi}^{\rho}$ such that
        $\textsc{ApplyPrograms}(X,\hat{\pi}^{\rho}) \neq Y$, or $\varnothing$ if none found

\State $m \gets |\hat{\pi}|$
\State $S \gets \emptyset$ \Comment{unordered index pairs already tried}

\For{each edge $(i,r,j)\in E_{FB}$}
    \State $p \gets (\min(i,j),\max(i,j))$
    \If{$p \in S$}
        \State \textbf{continue} \Comment{avoid trying the same swap twice}
    \EndIf
    \State $S \gets S \cup \{p\}$

    \State $\rho \gets [0,1,\ldots,m-1]$ \Comment{identity permutation over program positions}
    \State \textbf{swap} $\rho_i$ and $\rho_j$ \Comment{single transposition}
    \State $\hat{\pi}^{\rho} \gets [f_{\rho_0},\ldots,f_{\rho_{m-1}}]$
    \If{$\textsc{ApplyPrograms}(X,\hat{\pi}^{\rho}) \neq Y$}
        \State \Return $(\rho,\hat{\pi}^{\rho})$ \Comment{first FB-related swap that changes outputs}
    \EndIf
\EndFor

\State \Return $\varnothing$
\EndFunction
\end{algorithmic}
\end{algorithm}

\begin{figure*}[t]
\centering
\begin{tcolorbox}[
    width=\textwidth,
    colback=orange!5,
    colframe=orange!40!black,
    title={\textbf{Multi-Step PBE Prompt}},
    coltitle=white,
    colbacktitle=orange!75!black,
]
Follow the instructions below to solve the code completion task:

We will provide the input corpus and corresponding output corpus. Each element in the corpus is a string, and the output is transformed from the corresponding input using an ordered sequence of ``replace'' programs. You need to find the correctly constructed and ordered sequence of ``replace'' programs to transform the entire input corpus into the output corpus. Note that the programs can interact with each other in a way that reduces or increases the number of times they are applied on a given input based on where they are ordered in the sequence. This makes it very important to apply them in the correct order. 

The programs should be written using only the Python \texttt{replace} function. For example, for a program that replaces all occurrences of ``ab'' with ``bc'' it should be written as: \verb|replace('ab', 'bc')|

Here is an example of the full task:
\begin{verbatim}
### Inputs 
["abc", "ebc", "aba"]

### Outputs
["edc", "edc", "aba"]

### Program Sequence
```python
["replace('bc','dc')", "replace('ad','ed')"]
```
\end{verbatim}

While generating the program sequence, you need to abide by the following restrictions:
\begin{enumerate}
\item Each program in the sequence should have the form \verb|replace(A, B)|, where \texttt{A} and \texttt{B} are both strings.
\item Both argument strings \texttt{A} and \texttt{B} in \verb|replace(A, B)| should have length $\leq \{program\_length\}$. \texttt{A} must have length $\geq 1$, while \texttt{B} may be empty (i.e., \verb|""|).
\item The maximum number of programs in a sequence is \{program\_num\}.
\item You should only consider the Python \texttt{replace} function for specifying programs (each program is a Python \texttt{replace} function). You cannot use any other Python modules or functions.
\item Strictly follow the markdown style convention while presenting your final program sequence, and make sure to enclose it in the \verb|```python| markdown style code block.
\end{enumerate}

Now, please generate the sequence of programs corresponding to the following input corpus and output corpus:

\begin{verbatim}
### Inputs 
{inputs_list}

### Outputs
{outputs_list}

### Program Sequence
\end{verbatim}

\end{tcolorbox}
\end{figure*}

\clearpage 
\subsection{Prompt Templates}
\label{sec:appendix:method_details:prompt_template}
\subsubsection{Multi-step PBE Task}
\label{sec:appendix:method_details:multi_step_pbe_prompt_template}
We show the prompt template used for the multi-step PBE task above (\texttt{Multi-Step PBE Prompt}). This prompt includes the exact instructions and examples given to the LLMs.

\subsubsection{Program Reordering Task}
\label{sec:appendix:method_details:program_reordering_prompt_template}
We show the prompt template used for the program reodering task below (\texttt{Program Reordering Prompt}).
This prompt includes the exact instructions and examples given to all the LLMs for performing this task.

\subsection{Model Selection Details}
Table~\ref{tab:selected_models_details} details all the models chosen for our benchmark and their various attributes to ensure we evaluate a diverse and representative set of LLMs to evaluate which of them excel at inductive reasoning.

\begin{table*}[!tbh]
\centering
\begin{tabular}{@{}llllll@{}}
\toprule
Model Name & Reasoning & Citation & Parameters & MoE & Source \\ \midrule
QwQ-32B & Yes & \citet{qwq32b} & 32B & No & Closed \\
DeepSeek-R1-Distill-Qwen-32B & Yes & \citet{deepseekai2025deepseekr1incentivizingreasoningcapability} & 32B & No & Closed \\
o3-mini & Yes & \citet{openai2024_gpto3mini} & – & No & Closed \\
o4-mini & Yes & \citet{openai2025_gpto4mini} & – & No & Closed \\
Qwen3-30B-A3B (Thinking) & Yes & \citet{qwen3} & 30B & Yes & Open \\
Qwen3-32B & Yes & \citet{qwen3} & 32B & No & Open \\
Gemini 2.5 Flash & Yes & \citet{comanici2025gemini} & – & No & Closed \\
Claude-3.7-Sonnet & Yes & \citet{anthropic2025_claude37_sonnet} & – & No & Closed \\
Claude-4 Sonnet (Thinking) & Yes & \citet{anthropic2025_claude_sonnet4} & – & No & Closed \\
Claude-4 Opus (Thinking) & Yes & \citet{anthropic_claude_opus4} & – & No & Closed \\
gpt-oss-20b & Yes & \citet{openai_gpt_oss} & 20B & No & Open \\
gpt-oss-120b & Yes & \citet{openai_gpt_oss} & 120B & No & Open \\
GPT-5 (Thinking) & Yes & \citet{openai_gpt5} & – & No & Closed \\
Qwen2.5-32B-Instruct & No & \citet{qwen2.5} & 32B & No & Open \\
Claude-3.5-Sonnet & No & \citet{anthropic2024_claude35_sonnet} & – & No & Closed \\
GPT-5 (Non-Thinking) & No & \citet{openai_gpt5} & – & No & Closed \\
Codestral-22B & No & \citet{mistral2024codestral22b} & 22B & No & Open \\
Qwen2.5Coder-32B-Instruct & No & \citet{qwen2.5} & 32B & No & Open \\
Qwen3-Coder-30B-A3B-Instruct & No & \citet{qwen3} & 30B & Yes & Open \\ \bottomrule
\end{tabular}
\caption{\textbf{Model Selection:} This table details the characteristics of the models benchmarked on PBEBench-Lite. The columns discuss cover the model name, reasoning ability, citation, parameter count, architecture style (MoE vs Dense), and open/closed soruce nature of the chosen models to showcase the diversity of the evaluated models.}
\label{tab:selected_models_details}
\end{table*}

\subsection{Licenses for Evaluated Models}
\label{sec:appendix:model_licenses}
We list the licenses used for each evaluated open and closed source models in Table~\ref{tab:model_licenses}.

\begin{table*}[!tbh]
\centering
\begin{tabular}{@{}l r@{}}
\toprule
\textbf{Model} & \textbf{License} \\
\midrule
Codestral-22B & Mistral Non-Production License (MNPL) \\
Qwen2.5-32B-Instruct & Apache 2.0 \\
Qwen2.5Coder-32B-Instruct & Apache 2.0 \\
Qwen3-32B & Apache 2.0 \\
Qwen3-Coder-30B-A3B-Instruct & Apache 2.0 \\
QwQ-32B & Apache 2.0 \\
Qwen3-32B & Apache 2.0 \\
Qwen3-30B-A3B & Apache 2.0 \\
Qwen3-Coder-30B-A3B-Instruct & Apache 2.0 \\
DeepSeek-R1-Distill-Qwen-32B & MIT \\
o3-mini & API (OpenAI EULA) \\
o4-mini & API (OpenAI EULA) \\
GPT-5 & API (OpenAI EULA) \\
gpt-oss-20b & Apache 2.0 \\
gpt-oss-120b & Apache 2.0 \\
Gemini 2.5 Flash Preview 04-17 & API (Google EULA) \\
Claude-3.5-Sonnet & API (Anthropic EULA) \\
Claude-3.7-Sonnet & API (Anthropic EULA) \\
Claude-4-Sonnet & API (Anthropic EULA) \\
Claude-4-Opus & API (Anthropic EULA) \\
\bottomrule
\end{tabular}
\caption{Licenses for open and closed source models.}
\label{tab:model_licenses}
\end{table*}

\subsection{Costs for Experiment Runs}
\label{sec:appendix:model_costs}
We document the costs of the expensive experiments carried out for closed source models in Table~\ref{tab:model_costs}.

\begin{table*}[!tbh]
\centering
\begin{tabular}{@{}l l r@{}}
\toprule
\textbf{Model} & \textbf{Experiment} & \textbf{Cost} \\
\midrule
Claude-4.1-Opus & PBE-Bench Lite Performance & \$40 (20\% of dataset) \\
GPT-5 & Cascade Length Experiment & \$190 \\
GPT-5 & Sampling Experiment & \$165 \\
GPT-5 & CoT Experiment & \$50 \\
GPT-5 & PBE-Bench Lite Performance & \$50 \\
Claude Sonnet Thinking 3.7 & PBE-Bench Lite Performance & \$30 \\
Claude Sonnet Thinking 4 & PBE-Bench Lite Performance & \$30 \\
o3-mini & PBE-Bench Lite Performance & \$65 \\
o4-mini & PBE-Bench Lite Performance & \$65 \\
\bottomrule
\end{tabular}
\caption{Documented costs for select experiment runs.}
\label{tab:model_costs}
\end{table*}

\subsection{Snapshots used for closed source models}
\label{sec:appendix:model_snapshots}
We document the Snapshots used for closed source models Table~\ref{tab:model_snapshots}.

\begin{table*}[!tbh]
\centering
\begin{tabular}{@{}l l r@{}}
\toprule
\textbf{Model} & \textbf{Snapshot} \\
\midrule
o3-mini & o3-mini-2025-01-31 \\
o4-mini & o4-mini-2025-04-16 \\
Claude 3.5 Sonnet & claude-3-5-sonnet-20241022 \\
Claude 3.7 Sonnet & claude-3-7-sonnet-20250219 \\
Claude 4 Sonnet & claude-sonnet-4-20250514 \\
Claude 4.1 Opus & claude-opus-4-1-20250805 \\
GPT-5 & gpt-5-2025-08-07 \\
Gemini 2.5 & gemini-2.5-flash \\
\bottomrule
\end{tabular}
\caption{Exact snapshots used for closed source models.}
\label{tab:model_snapshots}
\end{table*}

\subsection{K-Fold analysis for GPT-5} \label{sec:appendix:gpt_5_k_fold_analysis}
We observed a variance of up to 10\% in Pass@1 for GPT-5. To account for this variance, we report the aggregated score over all available samples when computing Pass@1 values. For instance, in the sampling experiment for GPT-5 shown in Fig.~\ref{fig:sampling_budget_scaling_gpt5}, we perform 8 independent runs and compute the average score over all possible k-run combinations, yielding scores for sampling budgets $1 \leq k \leq 8$.
\section{Experimental Details}
\label{sec:appendix:experimental_details}
In this section, we provide additional experimental details, including the computational environment and the inference and sampling parameters used for all the LLMs evaluated in our work. We also briefly discuss the strategies explored to achieve finer-grained control over the thinking budget of gpt-oss-120b for scaling experiments. These strategies were ultimately unsuccessful due to gpt-oss-120b's test-time behavior, leading us to instead study the effect of varying the maximum sequence length directly.



\subsection{Computational Enviornment}
We conduct experiments on a Linux server equipped with NVIDIA A100 80GB GPUs (Ampere architecture), CUDA 12.9, and driver version 575.51.03. Each job had access to 100 GB of CPU memory and up to 16 CPU cores. The GPU allocation varied with model size with gpt-oss-120b and most 32B models requiring 2 A100 GPUs. The experiments on PBEBench-Lite took multiple hours, while each cascade on PBEBench took nearly 8 hours for sampling budget of 32 and 16384 max sequence length, prompting use to parallely run multiple cascades across several GPUs.
We used vLLM for inference of all the open weight models and multi-threading for inference of closed source models like GPT-5 to speed up all inference experiments.

\subsection{Inference/Sampling Parameters}
\label{sec:appendix:experimental_details:sampling_parameters}

We show the sampling parameters used for all the models in Table~\ref{tab:sampling_parameters}. The max tokens are the total output tokens the model can generate (including thinking tokens), while the thinking budget(s) captures only the chain-of-thought or reasoning related tokens. The top-p is the cumulative probability cutoff used for nucleus sampling, while the temperature is for controlling the degree of randomness in the sampling.
We report the max tokens and thinking tokens wherever possible based on the providers (for some models you can only control the total tokens, while for some you can only control thinking tokens).
For some models like Gemini 2.5 Flash Preview, the model has a mode where it first reasons about how much thinking is required based on how complex it determines the problem to be.
We use this setting for the experiments in Table~\ref{tab:model_performance}.
However, we also do experiments comparing the effect of varying token budgets (2048, 4096, 8192) for QwQ and Gemini 2.5 Flash Preview, hence we highlight the default setting used for Table~\ref{tab:model_performance} for these models in bold.

\begin{table*}
\centering
\resizebox{\textwidth}{!}{
\begin{tabular}{@{}lrrrr@{}}
\toprule
\textbf{Model} & \textbf{Max Tokens} & \textbf{Top P} & \textbf{Temperature} & \textbf{Thinking Budget(s)} \\ \midrule
Codestral-22B & 2048 & 0.95 & 0.7 & - \\
Qwen2.5-32B-Instruct & 512 & 0.95 & 0.7 & - \\
Qwen2.5Coder-32B-Instruct & 512 & 0.95 & 0.7 & - \\
QwQ-32B & 8192 & 0.95 & 0.7 & - \\
Qwen/Qwen3-32B (with CoT) & 8192 & 0.95 & 0.7 & - \\
Qwen/Qwen3-32B & 8192 & 0.95 & 0.7 & - \\
Qwen3-30B-A3B & 8192 & 0.95 & 0.7 & - \\
DeepSeek-R1-Distill-Qwen-32B & 8192 & 0.95 & 0.7 & - \\
o3-mini & 8192 & - & - & reasoning\_effort="medium" \\
o4-mini & 8192 & - & - & reasoning\_effort="medium" \\
Gemini 2.5 Flash & dynamic & 0.95 & 0.7 & dynamic \\
Claude-3.5-Sonnet & 8192 & 0.95 & 0.7 & - \\
Claude-3.7-Sonnet & 10000 & 0.95 & 0.7 & - \\
Claude-3.7-Sonnet (Thinking) & 10000 & 0.95 & 1 (default) & 2048 \\
gpt-oss-20b & 8192 & 0.95 & 0.7 & - \\
gpt-oss-120b & 8192 & 0.95 & 0.7 & - \\
GPT-5 & 8192 & - & - & reasoning\_effort="medium" \\
Claude-4 sonnet & 8192 & 0.95 & 0.7 & - \\
Claude-4 sonnet (Thinking) & 8192 & 0.95 & 1 (default) & 2048 \\
Claude-4 opus (Thinking) (20\% of dataset) & 8192 & 0.95 & 1 (default) & 2048 \\
Qwen/Qwen3-Coder-30B-A3B-Instruct & 2048 & 0.95 & 0.7 & - \\
\bottomrule
\end{tabular}
}
\caption{Sampling parameters used for inference across all models runs. ``Max tokens'' refers to the total number of tokens (output + thinking tokens) for models that support it. "Top-p" controls nucleus sampling. "Temperature" sets the randomness of token selection. "Thinking budget" is the number of thinking tokens, applicable only to models that support this feature. GPT-5, o3-mini, and o4-mini support "reasoning\_effort" parameter, which is a qualitative measure of "Thinking Budget". These models also do not support temperature and top\_p parameters.}
\label{tab:sampling_parameters}
\end{table*}

\subsection{CoT Truncation experiment details}  \label{sec:cot_truncation_experiment}
For gpt-oss-120b, we attempt to reduce the model's \emph{thinking budget} and introduce it as a parameter independent of \emph{max tokens}. To achieve this, we run two inferences:

\begin{enumerate}
    \item In the first pass, we set \emph{max tokens} equal to the desired thinking budget. We then check whether the Chain-of-Thought Truncation token appears in the response.
    \item If it does not appear, we run a second inference, appending the following string as assistant context to the model's input:
\end{enumerate}

\begin{verbatim}
early_stop_instruction = "Considering the 
thinking token budget, I will not generate 
any more reasoning tokens, and provide the 
final answer `.\n"
\end{verbatim}

In the second generation, however, the model begins with 
``We need to produce final answer'' and then continues reasoning 
as usual, ignoring our instruction.

In variations, we appended {THINKING\_END\_TOKEN}, and {<FINAL\_OUTPUT\_START\_TOKEN>} 
to the \emph{early\_stop\_instruction}, but observed the same behavior. We also tried 
placing a modified version of \emph{early\_stop\_instruction} in 
the \emph{user} role instead of the assistant role, again without effect. Finally, we reduced \emph{max tokens} to 300 in the second generation. 
This did not prompt the model to produce a final answer either; instead, 
It significantly increased the rate of null outputs, rising from 11\% 
to 77\%. We therefore conclude that for gpt-oss-120b, it is not possible 
to enforce the truncation of the chain-of-thought budget independently of the 
total output token budget.
\section{Additional Results}
\label{sec:appendix:additional_results}
This section presents some additional detailed results over the PBEBench-Lite and PBEBench, and PBEBench (25, 30) snapshots, such as the detailed tables for the performance vs ground truth cascade length, scaling ablations, etc.
We also report results on related inductive reasoning benchmarks as well as on real SLI data.
It also contains the results of analyzing the effect of changing the number of examples (PBEBench-Lite-MoreEg snapshot).
It also contains the results of logistic regression analysis on factors affecting instance difficulty on PBEBench with gpt-oss-120b and factorial analysis on reasoning-capable models, as well as some representative models on PBEBench-Lite.
Finally, we also present results for two types of confusion matrices that visualize the distributional differences between cascade lengths and relation types of the ground truth and predicted cascades.

\begin{figure}[!tbh]
    \centering
    \begin{subfigure}{0.48\textwidth}
        \centering
        \includegraphics[width=\linewidth]{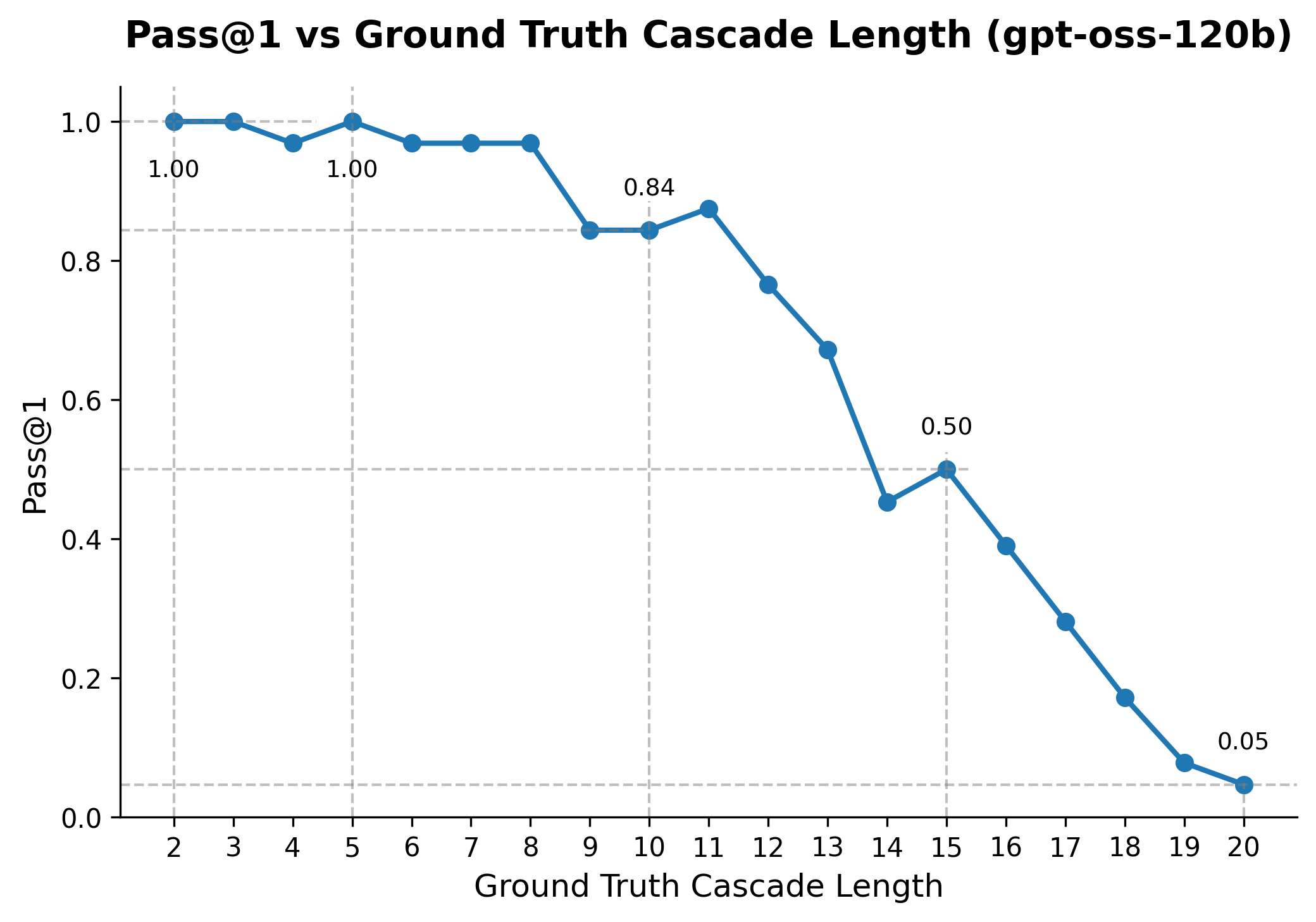}
        \caption{gpt-oss-120b (sampling budget: 32, max sequence length: 16384, cascade: 2-20)}
        \label{fig:cascade_length_perf_gpt_oss_120b}
    \end{subfigure}
    \hfill
    \begin{subfigure}{0.48\textwidth}
        \centering
        \includegraphics[width=\linewidth]{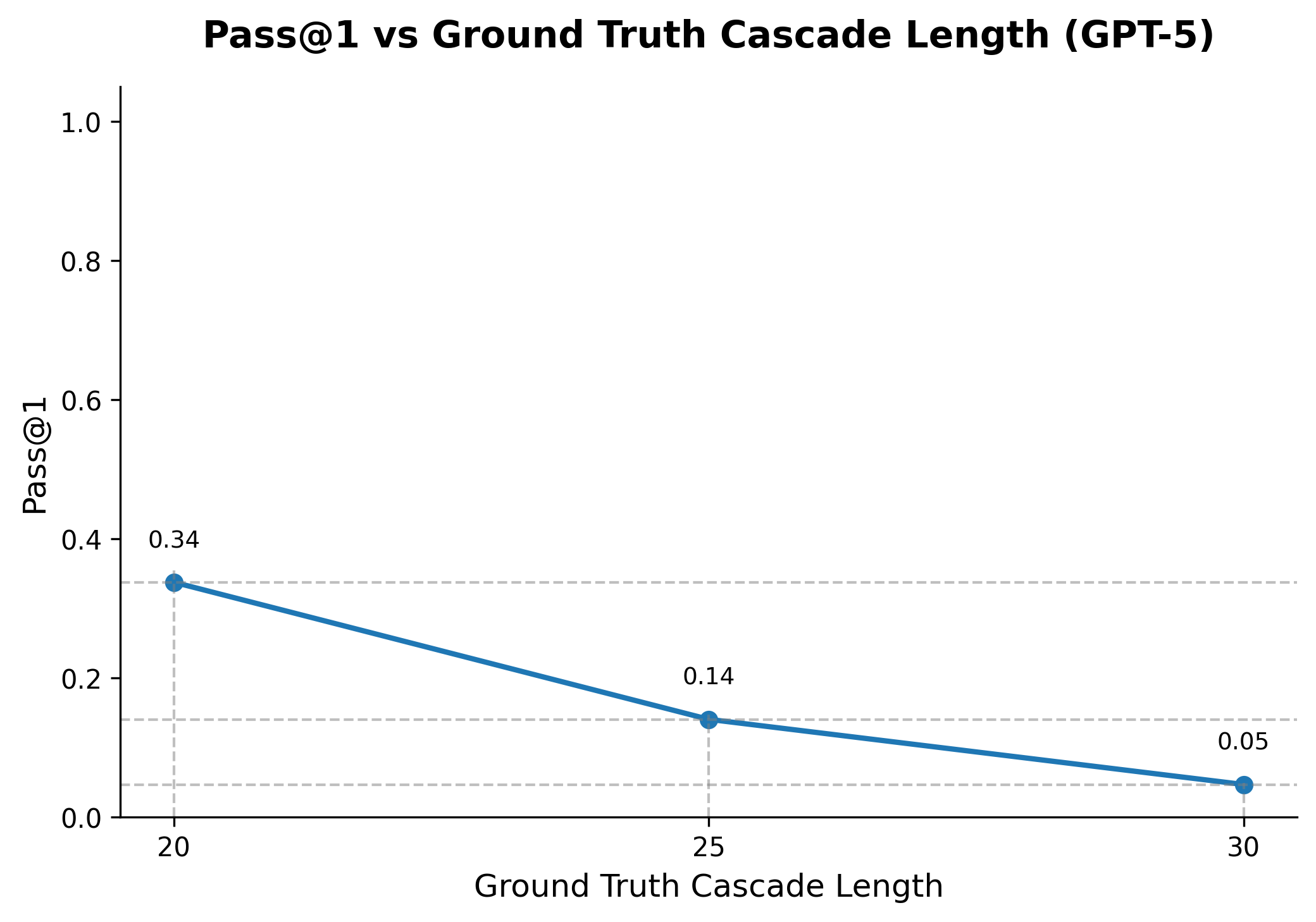}
        \caption{GPT-5 (sampling budget: 4, max completion tokens: 65536, reasoning (medium), cascade: 20, 25, 30)}
        \label{fig:cascade_length_perf_gpt5}
    \end{subfigure}
    \caption{\textbf{Performance across Cascade Lengths on PBEBench:} \texttt{Pass@1} for gpt-oss-120b and GPT-5 for various ground truth cascade lengths, a key difficulty measure, shows where inductive reasoning fails and problems become nearly unsolvable despite high compute budgets.}
\end{figure}

\begin{figure}[!tbh]
    \centering
    \begin{subfigure}{0.48\textwidth}
        \centering
        \includegraphics[width=\linewidth]{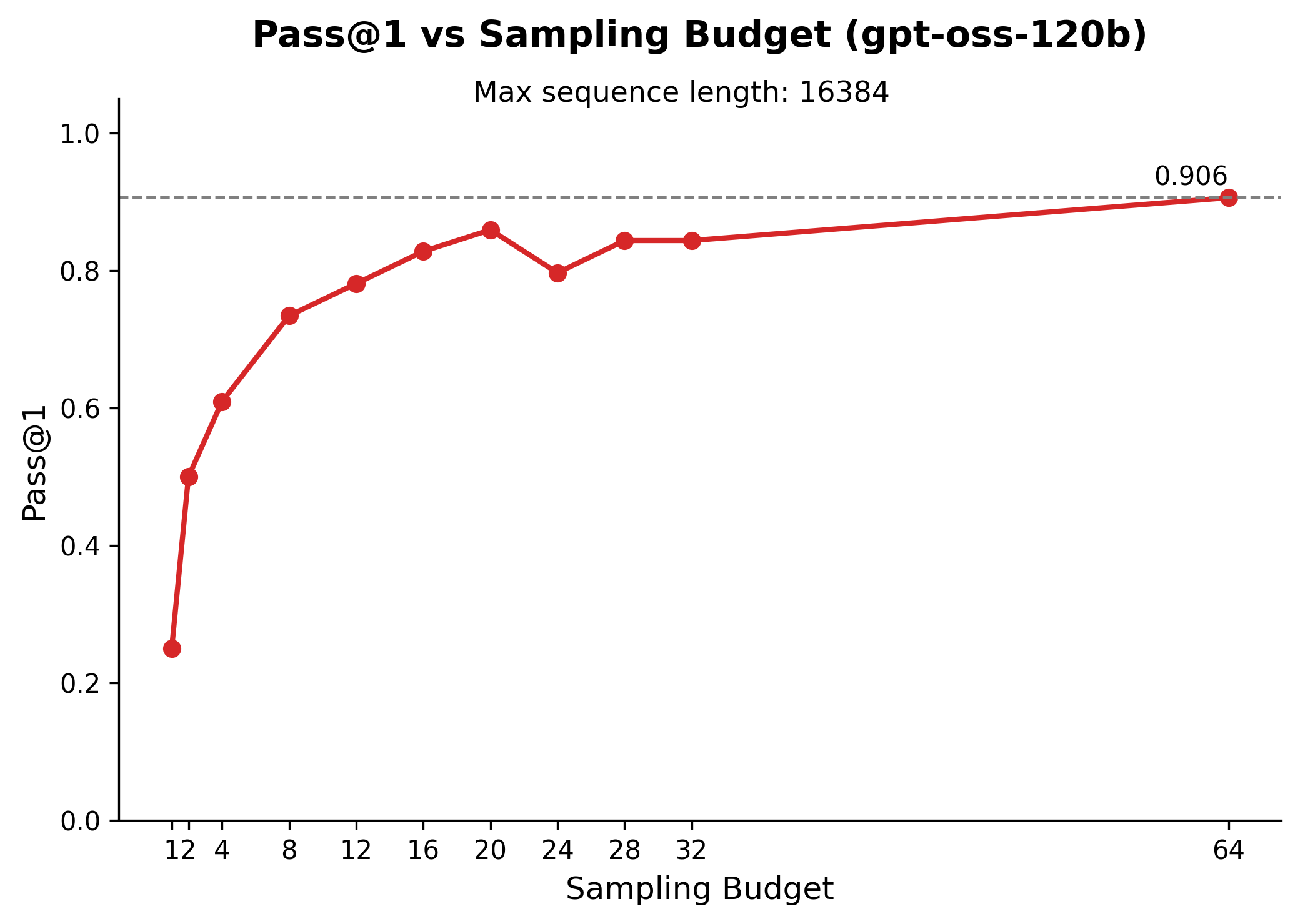}
        \caption{Pass@1 vs Sampling Budget (max seq len: 16384)}
        \label{fig:sampling_budget_scaling_gpt_oss_120b}
    \end{subfigure}
    \hfill
    \begin{subfigure}{0.48\textwidth}
        \centering
        \includegraphics[width=\linewidth]{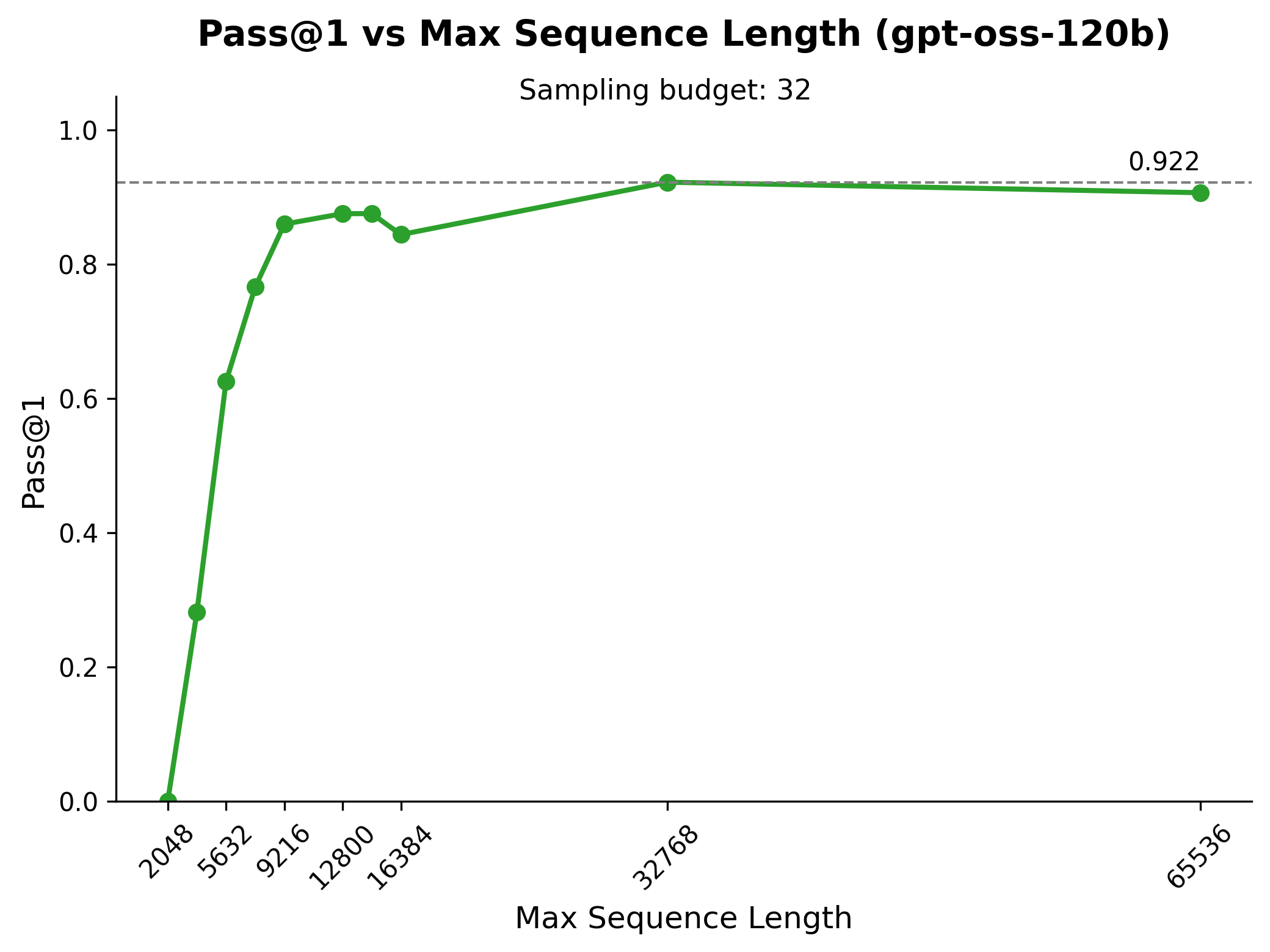}
        \caption{Pass@1 vs Max Seq Len (sampling budget: 32)}
        \label{fig:max_seq_len_scaling_gpt_oss_120b}
    \end{subfigure}
    \caption{\textbf{Effect of Scaling Strategies on PBEBench (gpt-oss-120b):} Comparison of successive sampling and increased thinking budgets (via max sequence length) on PBEBench instances with ground-truth cascades of length 10, the most complex balanced subset, unsolved yet nearly solvable under greater scaling, allowing a meaningful strategy comparison.}
\end{figure}

\begin{figure}[!tbh]
    \centering
    \begin{subfigure}{0.48\textwidth}
        \centering
        \includegraphics[width=\linewidth]{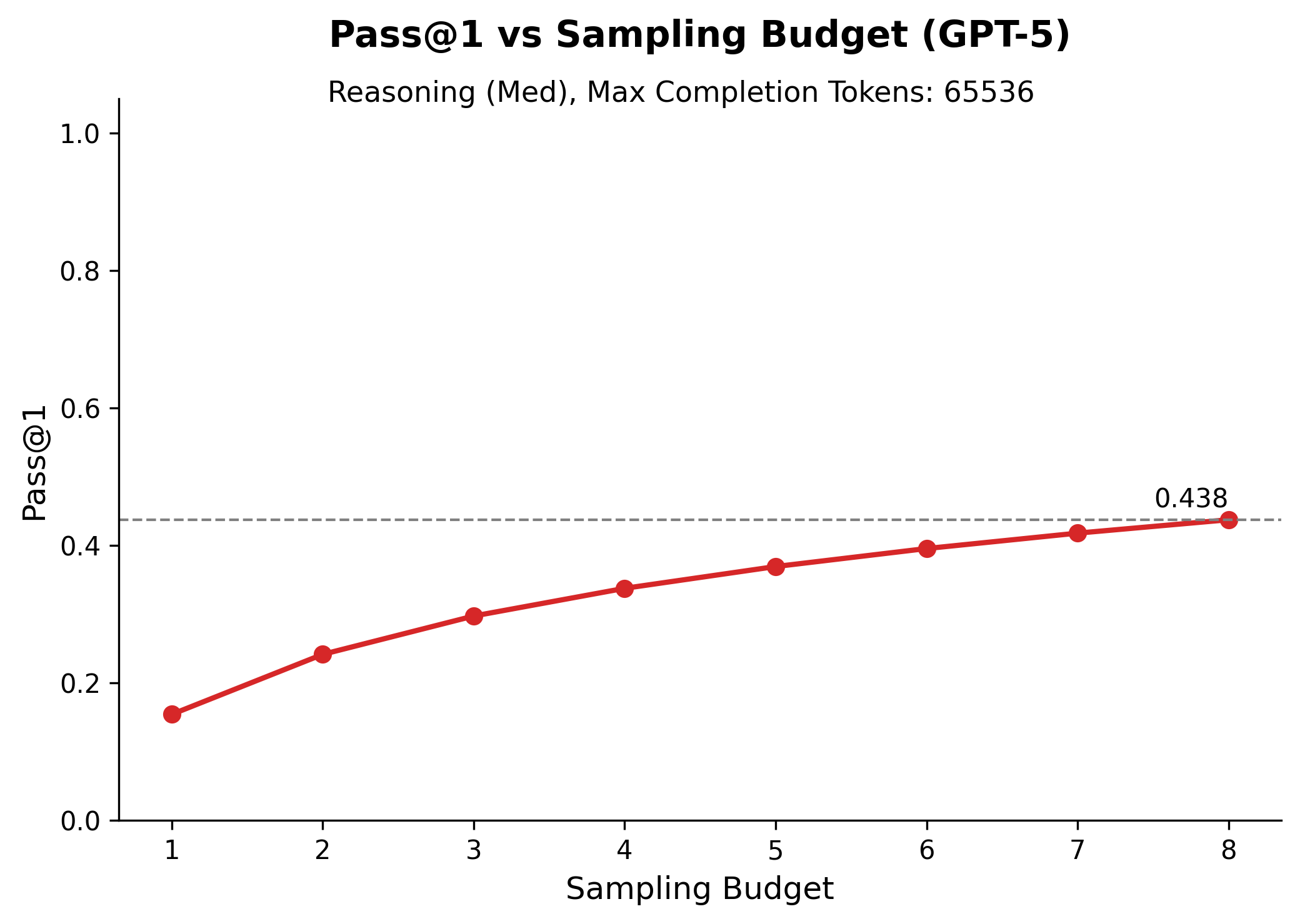}
        \caption{Pass@1 vs Sampling Budget (reasoning (med), max completion tokens: 65536)}
        \label{fig:sampling_budget_scaling_gpt5}
    \end{subfigure}
    \hfill
    \begin{subfigure}{0.48\textwidth}
        \centering
        \includegraphics[width=\linewidth]{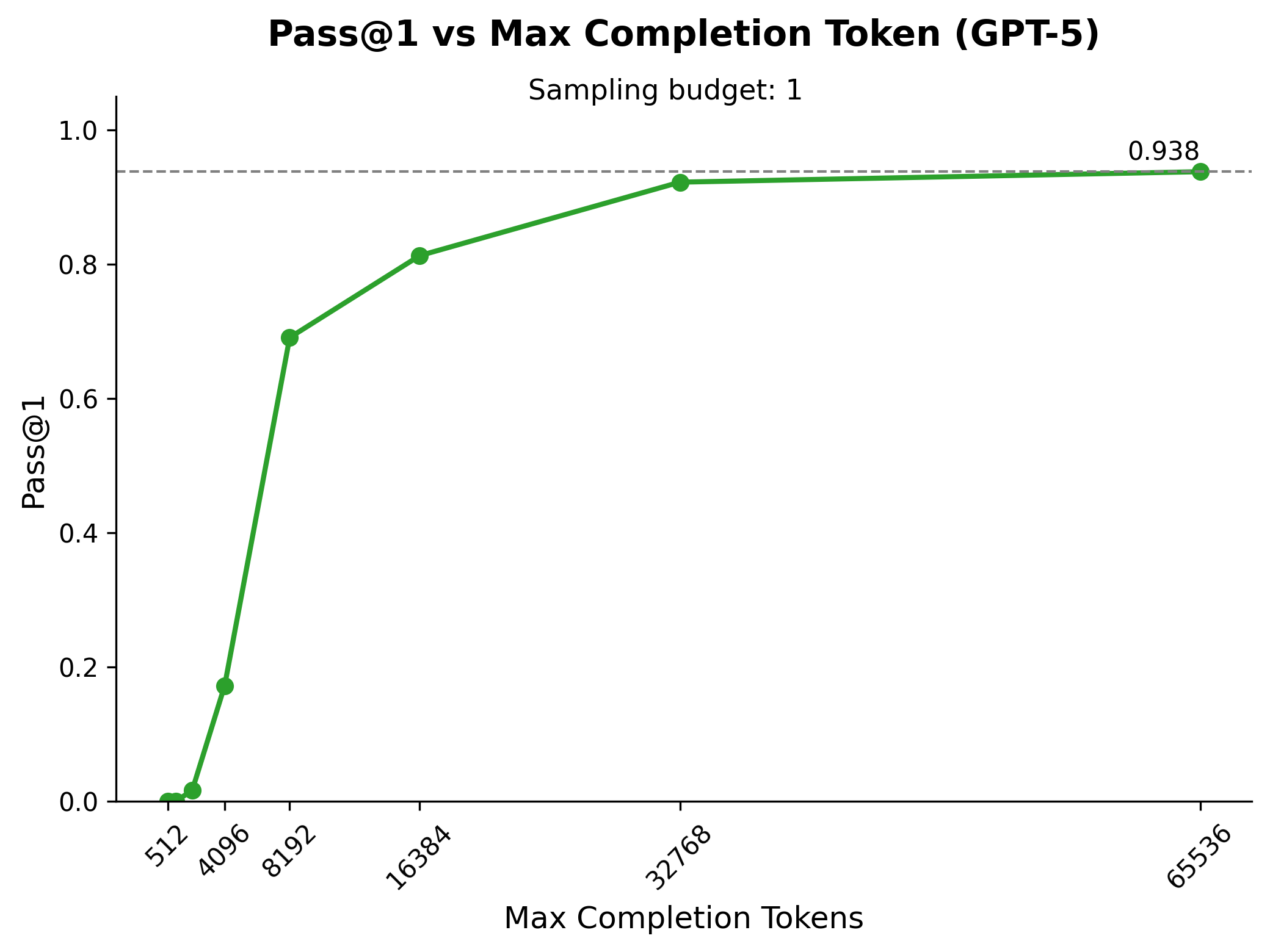}
        \caption{Pass@1 vs Max Completion Tokens (sampling budget: 1)}
        \label{fig:max_seq_len_scaling_gpt5}
    \end{subfigure}
    \caption{\textbf{Effect of Scaling Strategies on PBEBench (GPT-5):} Comparison of successive sampling and larger thinking budgets (via max completion tokens and reasoning effort) on PBEBench with ground-truth cascade lengths 20 and 10 respectively. Cascade 20 is used for the sampling budget experiment as we observe high \texttt{Pass@1} (0.938) for cascade 10 with just 1 sample, in the max completion tokens scaling experiment.}
\end{figure}

\subsection{PBEBench Performance Details}
\label{sec:appendix:pbebench_performance}
We evaluate gpt-oss-120b and GPT-5 on PBEBench, which includes cascades of length 2--20 and harder snapshots of length 25 and 30.
On cascades of length 2--20, gpt-oss-120b achieves \texttt{Pass@1} of 0.67, average \texttt{Edit\_Sim} of 0.95, and \texttt{Valid\_Rate} of 0.96, with per-cascade \texttt{Pass@1} reported in Table~\ref{tab:gpt_oss_120b_cascade_len_perf}.
GPT-5 substantially outperforms gpt-oss-120b, achieving \texttt{Pass@1} of 0.94 on cascades of length 10 with a single sample (vs.\ 0.84 with 32 samples for gpt-oss-120b) and 0.44 on cascades of length 20 with 4 samples (vs.\ 0.05 with 32 samples).
Since GPT-5 maintains meaningful performance at length 20, unlike gpt-oss-120b which collapses to 5\%, we further evaluate GPT-5 on cascades of length 20, 25, and 30, where it averages \texttt{Pass@1} of 0.175 (Table~\ref{tab:gpt5_cascade_len_perf_and_sampling_budget_scaling}).
Performance trends are shown in Fig.~\ref{fig:cascade_length_perf_gpt_oss_120b} and Fig.~\ref{fig:cascade_length_perf_gpt5}.

Both models follow the scaling strategies described in Section~\ref{sec:program_induction:prompting}.
gpt-oss-120b uses a sampling budget of 32 with maximum sequence length 16384, while GPT-5 uses a sampling budget of 4, a completion limit of 65536 tokens, and moderate reasoning effort.
Performance drops sharply with increasing cascade length: gpt-oss-120b falls to roughly 50\% and 5\% at lengths 15 and 20, while GPT-5 falls to 14\% and 5\% at lengths 25 and 30.
A logistic regression on gpt-oss-120b identifies cascade length as a strong negative predictor of \texttt{Pass@1}, with feeding, counter-feeding, and counter-bleeding predicting failure, and bleeding associated with success (Table~\ref{tab:logistic_regression_gpt_oss_120B}).

\subsection{Ablation Details}
\label{sec:appendix:ablations}
We evaluate gpt-oss-20b and gpt-oss-120b on the PBEBench-Lite-MoreEg snapshot, with results reported in Table~\ref{tab:pbebench_lite_more_eg_perf}.
Performance is similar to or slightly worse than PBEBench-Lite.
To analyze this effect, we compute the fraction of examples modified by each program: 1.56/5 (31\%) for PBEBench-Lite and 7.07/50 (14\%) for PBEBench-Lite-MoreEg.
Although more examples are modified in absolute terms, the relative signal per example is weaker.
However, the overall conclusion is that the effect of scaling input-output examples per PBE step from 5 to 50 (10x) is still relatively minor.
However, after a certain point, the LLM is expected to hit a hard wall, as experiments with real SLI data, where we attempted to have 20 examples per PBE step, led to models like gpt-oss-120b giving up and providing empty responses.

We further study scaling with respect to sampling budget and maximum sequence length.
For gpt-oss-120b, results are shown in Fig.~\ref{fig:sampling_budget_scaling_gpt_oss_120b} and Fig.~\ref{fig:max_seq_len_scaling_gpt_oss_120b}.
For GPT-5, corresponding results are shown in Fig.~\ref{fig:sampling_budget_scaling_gpt5} and Fig.~\ref{fig:max_seq_len_scaling_gpt5}, with k-fold averaging used to control variance (Section~\ref{sec:appendix:gpt_5_k_fold_analysis}).
Across all settings, we observe rapid initial gains followed by diminishing returns and eventual saturation.
Interestingly, for the more head-on comparison done for gpt-oss-120b, we see that scaling the max sequence length (or thinking budget) once you have a high enough sampling budget is more time efficient and cheaper.
This stems from the fact that a high sampling budget is more wasteful if the LLM is token-bound per attempt, as a lot of attempts fail to terminate the chain-of-thought and lead to unhelpful null responses.

\subsection{Other Inductive Reasoning Benchmarks}
\label{sec:appendix:other_inductive_reasoning_benchmarks}
We evaluate all open-source models on two additional inductive reasoning benchmarks: CLUTRR \cite{sinha2019clutrr} and SLR-Bench \cite{helff2025slr}.

CLUTRR is a well-established NLU benchmark that requires LLMs to read a short story, extract relationships between characters, and answer a kinship query about a target pair. 
The answer space consists of a small, fixed set of kinship relations, making the task a multi-class classification problem.
Successfully solving CLUTRR requires both accurate extraction of relational facts and correct inference over the underlying logical rules governing kinship.
The dataset is constructed semi-automatically from a kinship knowledge base via four steps: random kinship graph generation, target fact (relation) sampling, backward chaining, and natural language realization.
CLUTRR also explicitly probes robustness and generalization by injecting distracting facts into the stories, including supporting, irrelevant, disconnected, and noisy facts.
Following prior work, we concatenate all six test sets introduced in the original paper to form a combined evaluation set of 3{,}977 instances.
While the original CLUTRR work involves supervised training, we instead evaluate models in a zero-shot question answering setting using a fixed prompt (\texttt{CLUTRR Prompt}).
Results on the combined test set are reported in Table~\ref{tab:clutrr_results}.
We report standard kinship prediction accuracy, along with additional metrics for gpt-oss models, including non-null accuracy and the number of null predictions.
Null outputs arise when the chain of thought fails to terminate within the specified maximum sequence length, typically due to overthinking.
To further analyze this behavior, we vary both the reasoning effort and maximum sequence length for these models and report the corresponding performance.
Overall, LLMs perform poorly on CLUTRR, likely due to the presence of distractor facts and the absence of training or explicit graph-based representations for tracking relations.
In contrast to PBEBench, however, performance differences across models are relatively small, with a narrower gap between reasoning and non-reasoning models.
This trend is also reflected in the limited performance variation across different reasoning effort settings for gpt-oss models, and in the degradation observed at high reasoning effort, where increased null predictions from unterminated chains of thought reduce accuracy.
Finally, to ensure comparability with PBEBench-Lite, we restrict each model to a single attempt per instance.

\begin{figure*}[t]
\centering
\begin{tcolorbox}[
    width=\textwidth,
    colback=blue!5,
    colframe=blue!40!black,
    title={\textbf{CLUTRR Prompt}},
    coltitle=white,
    colbacktitle=blue!75!black,
]
You will be given a story containing characters whose names appear in square brackets, such as "[James]". After reading the story, you must identify the kinship relation between two characters.

\medskip

When answering:

\begin{itemize}
\item Respond with only the kinship term, with no explanation and no extra words.
\item Your answer must be one of the following options:
aunt, brother, daughter, daughter-in-law, father, father-in-law, granddaughter, grandfather, grandmother, grandson, mother, mother-in-law, nephew, niece, sister, son, son-in-law, uncle.
\item Do not use any text outside these options.
\item Give just the final relation.
\end{itemize}

Example Story: [Kristin] and her son [Justin] went to visit her mother [Carol] on a nice Sunday afternoon. They went out for a movie together and had a good time.

\medskip

Example Question: How is Carol related to Justin?

\medskip

Example Answer: grandmother

\medskip

Now do the same for the story and question below:

\medskip

Story: \verb|{story}|

\medskip

Question: \verb|{question}|

\medskip

Answer:
\end{tcolorbox}
\end{figure*}

\begin{table*}[!tbh]
\centering
\begin{tabular}{@{}lrrrr@{}}
\toprule
Model & Max Seq Len & \multicolumn{1}{l}{Acc} & \multicolumn{1}{l}{Non Null Acc} & \multicolumn{1}{l}{Nulls} \\ \midrule
Codestral-22B & 2048 & 3.72 & 3.72 & 0 \\
Qwen/Qwen2.5-32B-Instruct & 8192 & 34.98 & 34.98 & 0 \\
Qwen/Qwen2.5-Coder-32B-Instruct & 8192 & 26.88 & 26.88 & 0 \\
Qwen/QwQ-32B & 8192 & 52.07 & 52.07 & 0 \\
Qwen/Qwen3-32B (with CoT) & 8192 & 51.62 & 51.62 & 0 \\
Qwen/Qwen3-32B & 8192 & 22.53 & 22.53 & 0 \\
Qwen/Qwen3-30B-A3B & 8192 & 51.47 & 51.47 & 0 \\
DeepSeek-R1-Distill-Qwen-32B & 8192 & 47.67 & 47.67 & 0 \\
gpt-oss-20B (low) & 8192 & 42.59 & 42.59 & 0 \\
gpt-oss-20B (medium) & 8192 & 52.5 & 54.53 & 148 \\
\multirow{3}{*}{gpt-oss-20B (high)} & 8192 & 45.64 & 60.76 & 990 \\
 & 16384 & 50.64 & 60.04 & 621 \\
 & 32768 & 53.1 & 58.46 & 330 \\
gpt-oss-120B (low) & 8192 & 47.55 & 47.55 & 0 \\
gpt-oss-120B (medium) & 8192 & 57.53 & 57.53 & 0 \\
\multirow{2}{*}{gpt-oss-120B (high)} & 8192 & 51.72 & 63.43 & 631 \\
 & 16384 & 57.98 & 61.81 & 246 \\
Qwen/Qwen3-Coder-30B-A3B-Instruct & 8192 & 26.4 & 26.4 & 0 \\ \bottomrule
\end{tabular}
\caption{\textbf{CLUTRR Performance:} We compute the kinship relation type prediction accuracy (\texttt{Acc}) as well as non-null accuracy (\texttt{Non Null Acc}) which excludes null predictions and the number of null predictions (\texttt{Nulls}). We also try increased max sequence length and vary reasoning effort for the gpt-oss models.}
\label{tab:clutrr_results}
\end{table*}

\begin{table*}[!tbh]
\centering
\resizebox{\textwidth}{!}{
\begin{tabular}{@{}lrrrrrrrrrrrrr@{}}
\toprule
\multirow{2}{*}{Model} & \multicolumn{1}{l}{\multirow{2}{*}{Max Seq Len}} & \multicolumn{2}{r}{Gen Train234} & \multicolumn{2}{r}{Gen Train23} & \multicolumn{2}{r}{Rob Clean} & \multicolumn{2}{r}{Rob Disc} & \multicolumn{2}{r}{Rob Irr} & \multicolumn{2}{r}{Rob Sup} \\ \cmidrule(l){3-14} 
 & \multicolumn{1}{l}{} & Acc & NNAcc & Acc & NNAcc & Acc & NNAcc & Acc & NNAcc & Acc & NNAcc & Acc & NNAcc \\ \midrule
Codestral-22B & 2048 & 3.72 & 3.72 & 4.71 & 4.71 & 0.22 & 0.22 & 2.25 & 2.25 & 3.6 & 3.6 & 6.26 & 6.26 \\
Qwen/Qwen2.5-32B-Instruct & 8192 & 42.08 & 42.08 & 45.03 & 45.03 & 21.25 & 21.25 & 25.39 & 25.39 & 23.42 & 23.42 & 27.29 & 27.29 \\
Qwen/Qwen2.5-Coder-32B-Instruct & 8192 & 35.02 & 35.02 & 36.47 & 36.47 & 14.54 & 14.54 & 19.33 & 19.33 & 11.26 & 11.26 & 18.57 & 18.57 \\
Qwen/QwQ-32B & 8192 & 61.26 & 61.26 & 63.61 & 63.61 & 41.39 & 41.39 & 38.88 & 38.88 & 37.61 & 37.61 & 39.15 & 39.15 \\
Qwen/Qwen3-32B (with CoT) & 8192 & 60.78 & 60.78 & 63.26 & 63.26 & 37.81 & 37.81 & 41.12 & 41.12 & 37.39 & 37.39 & 38.7 & 38.7 \\
Qwen/Qwen3-32B & 8192 & 27.1 & 27.1 & 27.4 & 27.4 & 13.87 & 13.87 & 15.06 & 15.06 & 16.67 & 16.67 & 21.25 & 21.25 \\
Qwen/Qwen3-30B-A3B & 8192 & 61.07 & 61.07 & 61.95 & 61.95 & 37.58 & 37.58 & 41.12 & 41.12 & 37.84 & 37.84 & 39.82 & 39.82 \\
DeepSeek-R1-Distill-Qwen-32B & 8192 & 55.92 & 55.92 & 59.16 & 59.16 & 36.02 & 36.02 & 35.51 & 35.51 & 34.23 & 34.23 & 36.02 & 36.02 \\
gpt-oss-20B (low) & 8192 & 46.09 & 46.09 & 46.95 & 46.95 & 36.69 & 36.69 & 41.8 & 41.8 & 36.04 & 36.04 & 36.47 & 36.47 \\
gpt-oss-20B (medium) & 8192 & 61.55 & 65.68 & 62.04 & 66.51 & 40.27 & 40.27 & 44.49 & 44.59 & 39.19 & 39.46 & 40.27 & 40.36 \\
\multirow{3}{*}{gpt-oss-20B (high)} & 8192 & 48.28 & 77.73 & 54.36 & 80.08 & 40.27 & 45.11 & 41.12 & 49.06 & 35.59 & 40.51 & 36.91 & 41.67 \\
 & 16384 & 56.49 & 76.19 & 59.77 & 77.14 & 44.07 & 46.03 & 45.39 & 48.56 & 36.4 & 38.28 & 40 & 41.72 \\
 & 32768 & 61.35 & 71.6 & 63.53 & 73.17 & 44.07 & 44.57 & 46.52 & 47.37 & 39.19 & 40 & 40.94 & 41.59 \\
gpt-oss-120B (low) & 8192 & 50 & 50 & 52.53 & 52.53 & 46.31 & 46.31 & 45.17 & 45.17 & 39.19 & 39.19 & 40.94 & 40.94 \\
gpt-oss-120B (medium) & 8192 & 66.22 & 66.22 & 69.28 & 69.28 & 48.99 & 48.99 & 46.74 & 46.74 & 40.09 & 40.09 & 43.62 & 43.62 \\
\multirow{2}{*}{gpt-oss-120B (high)} & 8192 & 59.64 & 82.45 & 61.43 & 82.05 & 44.74 & 47.62 & 44.72 & 48.77 & 35.14 & 39.8 & 38.7 & 42.51 \\
 & 16384 & 68.51 & 76.46 & 71.03 & 78.04 & 45.86 & 46.17 & 47.42 & 48.28 & 38.74 & 40 & 41.61 & 42.47 \\
Qwen/Qwen3-Coder-30B-A3B-Instruct & 8192 & 28.63 & 28.63 & 27.4 & 27.4 & 21.92 & 21.92 & 24.49 & 24.49 & 26.8 & 26.8 & 24.61 & 24.61 \\ \bottomrule
\end{tabular}
}
\caption{\textbf{CLUTRR Performance Test Splits:} We show the kinship relation type prediction accuracy (\texttt{Acc}) as well as non-null accuracy (\texttt{NNAcc}) for all the test splits within CLUTRR.}
\label{tab:clutrr_results_test_splits}
\end{table*}

SLR-Bench is a large-scale, automatically generated benchmark for logical inductive reasoning, constructed using a fully automated framework that synthesizes prompts, validation programs, and latent rules without any human annotation.
It comprises 19k tasks organized into a curriculum of increasing relational, arithmetic, and recursive complexity, enabling fine-grained evaluation of logical inference capabilities in LLMs.
The benchmark includes validation programs, and the solutions produced by the LLMs are expressed as Prolog code, which is subsequently evaluated by a symbolic judge that executes the validation program over the LLM generated solution.
In the original work, the curriculum is used for training and is shown to provide benefits on the SLR-Bench test set as well as on other general reasoning benchmarks.
However, to remain consistent with our experimental setting, we perform a purely zero-shot evaluation on the combined test set, which includes the basic, easy, medium, and hard tiers, yielding a dataset of 1000 instances.
We report all metrics introduced by the original authors, including accuracy (\texttt{Acc}), partial score (\texttt{PS}), and syntax score (\texttt{SS}), which correspond respectively to the fraction of instances that are fully solved, the average fraction of examples correctly classified (analogous to test cases passed), and the fraction of instances for which a syntactically valid Prolog program is produced.
We use the exact prompts provided in the dataset, rather than creating our own templates, in order to ensure faithful and directly comparable evaluation.
The aggregate results are shown in Table~\ref{tab:slrbench_results}, while a breakdown by curriculum tier is provided in Table~\ref{tab:slrbench_results_test_splits}.
Overall, the results follow a trend similar to that observed on CLUTRR, where reasoning-oriented models generally outperform non-reasoning models, although the performance gap is smaller than CLUTRR.
Among all evaluated models, \texttt{gpt-oss-120b} with medium reasoning effort achieves the best performance, reaching 52.5\% accuracy.
Also consistent with CLUTRR, we observe that performance for both \texttt{gpt-oss} models peaks at medium reasoning effort and subsequently declines, primarily due to an increase in null outputs caused by unterminated chains-of-thought associated with overthinking.
A key complicating factor relative to CLUTRR and PBEBench-Lite is the inherent difficulty of producing syntactically valid Prolog programs.
Several models, including \texttt{QwQ-32B}, \texttt{Qwen3-32B}, and \texttt{gpt-oss-20b} with medium reasoning effort, achieve overall scores around 50\%, indicating substantial difficulty in generating syntactically correct code; in the case of \texttt{gpt-oss-20b}, this behavior is largely explained by a higher proportion of null outputs.
In contrast, coder-oriented models such as \texttt{Codestral-22B} and \texttt{Qwen2.5-Coder-32B-Instruct}, as well as \texttt{Qwen2.5-32B-Instruct}, excel at producing syntactically valid programs, achieving nearly 100\% syntax scores despite comparatively weak overall task performance.
Finally, for fairness and consistency, we use sequence lengths comparable to those employed for CLUTRR and PBEBench-Lite, and we allow a single attempt per instance for each LLM.

\begin{table*}[!tbh]
\centering
\begin{tabular}{@{}lrrrr@{}}
\toprule
Model & Acc & PS & SS & Nulls \\ \midrule
Codestral-22B & 15.5 & 59.99 & 99.2 & 0 \\
Qwen/Qwen2.5-32B-Instruct & 27.3 & 72.1 & 99.8 & 0 \\
Qwen/Qwen2.5-Coder-32B-Instruct & 28.4 & 71.01 & 99.9 & 0 \\
Qwen/QwQ-32B & 43.5 & 44.66 & 45.6 & 0 \\
Qwen/Qwen3-32B (with CoT) & 44.4 & 46.49 & 47.8 & 0 \\
Qwen/Qwen3-32B & 32.4 & 71.8 & 99 & 0 \\
Qwen/Qwen3-30B-A3B & 41.7 & 49.77 & 61.1 & 0 \\
DeepSeek-R1-Distill-Qwen-32B & 43.2 & 59.9 & 85.6 & 0 \\
gpt-oss-20B (low) & 36 & 57.6 & 76.5 & 0 \\
gpt-oss-20B (medium) & 44.4 & 49.91 & 52.6 & 468 \\
gpt-oss-20B (high) & 40.2 & 40.68 & 41 & 589 \\
gpt-oss-120B (low) & 43.2 & 76.75 & 99.1 & 0 \\
gpt-oss-120B (medium) & 52.5 & 67.47 & 81.4 & 175 \\
gpt-oss-120B (high) & 47.2 & 47.29 & 47.3 & 524 \\
Qwen/Qwen3-Coder-30B-A3B-Instruct & 26.5 & 53.98 & 73.3 & 0 \\ \bottomrule
\end{tabular}
\caption{\textbf{SLR-Bench Performance:} We compute the accuracy (fraction of cases successfully solved) (\texttt{Acc}), partial score (average number of examples correctly classified) (\texttt{PS}), \texttt{SS} (fraction of syntactically correct programs) and the number of null predictions (\texttt{Nulls}). We use the same max sequence length per model as CLUTRR.}
\label{tab:slrbench_results}
\end{table*}

\begin{table*}[!tbh]
\centering
\resizebox{\textwidth}{!}{
\begin{tabular}{@{}lrrrrrrrrrrrr@{}}
\toprule
\multirow{2}{*}{Model} & \multicolumn{3}{r}{Basic} & \multicolumn{3}{r}{Easy} & \multicolumn{3}{r}{Medium} & \multicolumn{3}{r}{Hard} \\ \cmidrule(l){2-13} 
 & Acc & PS & SS & Acc & PS & SS & Acc & PS & SS & Acc & PS & SS \\ \midrule
Codestral-22B & 58.8 & 58.8 & 100 & 2.4 & 2.4 & 99.6 & 0.8 & 0.8 & 99.6 & 0 & 0 & 97.6 \\
Qwen/Qwen2.5-32B-Instruct & 90.8 & 90.8 & 100 & 15.6 & 15.6 & 100 & 1.2 & 1.2 & 99.6 & 1.6 & 1.6 & 99.6 \\
Qwen/Qwen2.5-Coder-32B-Instruct & 92.4 & 92.4 & 100 & 19.6 & 19.6 & 100 & 1.2 & 1.2 & 100 & 0.4 & 0.4 & 99.6 \\
Qwen/QwQ-32B & 96 & 96 & 96.8 & 62.8 & 62.8 & 63.2 & 9.6 & 9.6 & 12.8 & 5.6 & 5.6 & 9.6 \\
Qwen/Qwen3-32B (with CoT) & 94 & 94 & 94 & 69.2 & 69.2 & 74 & 11.2 & 11.2 & 14.4 & 3.2 & 3.2 & 8.8 \\
Qwen/Qwen3-32B & 92.8 & 92.8 & 100 & 35.2 & 35.2 & 100 & 1.2 & 1.2 & 97.2 & 0.4 & 0.4 & 98.8 \\
Qwen/Qwen3-30B-A3B & 99.6 & 99.6 & 100 & 61.6 & 61.6 & 80 & 5.2 & 5.2 & 27.6 & 0.4 & 0.4 & 36.8 \\
DeepSeek-R1-Distill-Qwen-32B & 99.2 & 99.2 & 100 & 63.6 & 63.6 & 88.8 & 8.8 & 8.8 & 70.4 & 1.2 & 1.2 & 83.2 \\
gpt-oss-20B (low) & 93.6 & 93.6 & 100 & 47.2 & 47.2 & 84.4 & 2.8 & 2.8 & 54 & 0.4 & 0.4 & 67.6 \\
gpt-oss-20B (medium) & 96.8 & 96.8 & 100 & 70 & 70 & 77.6 & 8.4 & 8.4 & 23.6 & 2.4 & 2.4 & 9.2 \\
gpt-oss-20B (high) & 96 & 96 & 98.4 & 59.2 & 59.2 & 60 & 5.2 & 5.2 & 5.2 & 0.4 & 0.4 & 0.4 \\
gpt-oss-120B (low) & 96.8 & 96.8 & 100 & 67.6 & 67.6 & 100 & 5.6 & 5.6 & 99.6 & 2.8 & 2.8 & 96.8 \\
gpt-oss-120B (medium) & 98 & 98 & 99.6 & 87.6 & 87.6 & 96.8 & 19.6 & 19.6 & 58.8 & 4.8 & 4.8 & 70.4 \\
gpt-oss-120B (high) & 98.8 & 98.8 & 98.8 & 75.2 & 75.2 & 75.6 & 12.4 & 12.4 & 12.4 & 2.4 & 2.4 & 2.4 \\
Qwen/Qwen3-Coder-30B-A3B-Instruct & 88.4 & 88.4 & 94 & 16 & 16 & 66.4 & 0.8 & 0.8 & 68.4 & 0.8 & 0.8 & 64.4 \\ \bottomrule
\end{tabular}
}
\caption{\textbf{SLR-Bench Performance Test Splits:} We compute the accuracy (fraction of cases successfully solved) (\texttt{Acc}), partial score (average number of examples correctly classified) (\texttt{PS}), and \texttt{SS} (fraction of syntactically correct programs) for each split/curriculum tier of the test set.}
\label{tab:slrbench_results_test_splits}
\end{table*}
 
\subsection{Effect of More Examples}
Table~\ref{tab:pbebench_lite_more_eg_perf} illustrates the effect of varying the number of examples per PBE instance while keeping other factors, such as cascade distribution and relation type balance, constant using the PBEBench-Lite-MoreEg snapshot. The results for gpt-oss-20b and gpt-oss-120b show largely similar performance for gpt-oss-20b, but a decrease for gpt-oss-120b. This suggests that increasing the number of examples can sometimes make the task harder for LLMs, which is counterintuitive, as more examples would ideally simplify the task. Analysis of the average number of changes per program reveals that PBEBench-Lite has fewer absolute changes (1.56 words out of 5 on average) compared to PBEBench-Lite-MoreEg (7.07 words out of 50 on average). However, the relative number of changes is lower in PBEBench-Lite-MoreEg (14\% vs.\ higher in PBEBench-Lite), indicating lower information density, which may explain why the task becomes harder despite having more examples.

\subsection{Real SLI Results}
We evaluate open-source LLMs on the real SLI dataset using \texttt{Pass@1}, \texttt{Edit\_Sim}, and \texttt{Valid Rate}, with results reported in Table~\ref{tab:real_sli_results}.
Evaluation is performed on 21 PBE instances under substantially relaxed decoding constraints to estimate an upper bound on model performance.
Specifically, we use a maximum sequence length four times larger than the corresponding PBEBench-Lite settings and a sampling budget of 32 attempts per instance.
Models are additionally allowed to generate cascades of up to length 50, with each rewrite rule permitting $\alpha$ and $\beta$ substrings in the \texttt{replace($\alpha$, $\beta$)} of up to five characters.
Despite these concessions, all evaluated models struggle on real SLI.
Only the gpt-oss family, and DeepSeek-R1-Distill-Qwen and Qwen3-32B with chain-of-thought prompting achieve \texttt{Pass@1} of 10\% or higher.
Among them, gpt-oss-120b performs the best, reaching a \texttt{Pass@1} of 33\%.

\begin{table*}[!tbh]
\centering
\resizebox{\textwidth}{!}{
\begin{tabular}{@{}lrrrrrrr@{}}
\toprule
\multirow{2}{*}{Model} & \multirow{2}{*}{Max Seq Len} & \multicolumn{3}{r}{First Code Block} & \multicolumn{3}{r}{Last Code Block} \\ \cmidrule(l){3-8} 
 &  & Pass@1 & Edit Sim & Valid Rate & Pass@1 & Edit Sim & Valid Rate \\ \midrule
Codestral-22B & 8192 & 0 & 31 & 81 & 0 & 31 & 81 \\
Qwen2.5-32B-Instruct & 8192 & 0 & 32 & 88 & 0 & 32 & 88 \\
Qwen2.5Coder-32B-Instruct & 8192 & 0 & 31 & 82 & 0 & 33 & 80 \\
QwQ-32B & 32768 & 5 & 63 & 94 & 5 & 63 & 94 \\
Qwen/Qwen3-32B (with CoT) & 32768 & 10 & 60 & 97 & 10 & 60 & 97 \\
Qwen/Qwen3-32B & 8192 & 5 & 60 & 79 & 5 & 60 & 79 \\
Qwen3-30B-A3B & 32768 & 5 & 9 & 97 & 5 & 9 & 97 \\
DeepSeek-R1-Distill-Qwen-32B & 32768 & 10 & 51 & 95 & 10 & 51 & 95 \\
gpt-oss-20B (high) & 32768 & 24 & 82 & 88 & 24 & 82 & 88 \\
gpt-oss-120B (high) & 32768 & 33 & 86 & 88 & 33 & 86 & 88 \\
Qwen/Qwen3-Coder-30B-A3B-Instruct & 8192 & 5 & 23 & 87 & 5 & 20 & 96 \\ \bottomrule
\end{tabular}
}
\caption{\textbf{Real SLI Performance:} We evaluate all open source LLMs using the same metrics and prompts as PBEBench-Lite. Each LLM is given 32 attempts to solve the 21 PBE instances and each instance contains 50 input-output examples.}
\label{tab:real_sli_results}
\end{table*}

\begin{table}[!tbh]
\centering
\begin{tabular}{@{}lrr@{}}
\toprule
Dataset & r & p \\ \midrule
CLUTRR & 0.6364 & 0.035287 \\
SLR-Bench & 0.8091 & 0.002559 \\
PBEBench-Lite & 0.8273 & 0.001677 \\ \bottomrule
\end{tabular}
\caption{\textbf{Benchmark Ranking Correlations:} Spearman rank correlations between open-source model rankings induced by inductive reasoning benchmarks and real SLI performance.
For PBEBench-Lite and real SLI, rankings we use \texttt{Pass@1} as the primary metric with \texttt{Edit\_Sim} as a tie breaker.
For CLUTRR, Acc and NNAcc are used, while for SLR-Bench, Acc and Partial Score serve as the primary and tie-breaking metrics, respectively.}
\label{tab:benchmark_ranking_correlations}
\end{table}

\subsection{Efficiency of Problem Proposer}
\label{sec:appendix:efficiency}
For each desired ground-truth cascade length, our data generation procedure is designed to yield a balanced distribution across relation type categories, where each category is represented by a binary vector. Let $U$ denote the ideal uniform distribution over all categories, and let $Q$ denote the empirical distribution obtained from the generated data. To quantify the deviation of $Q$ from the ideal $U$, we use the Kullback–Leibler (KL) divergence:
\[
D_{\mathrm{KL}}(U || Q) \;=\; \sum_{x} U(x) \log \frac{U(x)}{Q(x)} 
\]
where $x$ ranges over all relation type categories. By construction, $D_{\mathrm{KL}}(U || Q) \geq 0$ and equals zero only when $Q=U$. We also \textbf{apply smoothing} for categories where zero instances are observed in the empirical distribution (missing categories) to prevent the divergence from shooting up to infinity for these cases. 

Our data generation algorithm (the \emph{problem proposer}) employs rejection sampling to enforce balance across categories. Initially, each sampled datapoint is accepted only if it maintains near-uniform coverage across categories. As sampling progresses, however, this constraint becomes increasingly difficult to satisfy, and the efficiency of rejection sampling deteriorates due to a growing rejection rate. To mitigate this, we introduce a \emph{patience} parameter that limits the number of constrained steps. Once the patience threshold (e.g., 100000) is exhausted, the algorithm relaxes the balancing constraint and accepts datapoints unconditionally. This enables continued large-scale data generation, though at the cost of increased divergence $D_{\mathrm{KL}}(U || Q)$ from the uniform distribution. Importantly, this relaxation applies only to category balancing: the cascade length constraints remain strictly enforced, and each generated program must still achieve the target ground-truth cascade length and modify at least one example.

We visualize the efficiency of our data generation process for different values of the patience parameter $\tau$—100{,}000; 250{,}000; 500{,}000; 750{,}000; and 1{,}000{,}000—across cascade lengths 5, 10, 15, 20, and 25, annotating each point with the KL divergence from the desired uniform distribution over all 16 relation type categories. Points with zero divergence (perfectly balanced distribution) are marked in green, while others are marked in red. The plots of efficiency versus patience for each cascade length are shown in Fig.~\ref{fig:efficiency_patience_tradeoff_cascade_len_5}--\ref{fig:efficiency_patience_tradeoff_cascade_len_25}.
In all plots, the y-axis represents percentage efficiency (so 1 corresponds to 1\% or 0.01). For cascade lengths 5 and 10, patience has little effect since perfect KL is always achievable; variations reflect random fluctuations across runs. For cascade lengths 15--25, perfect KL is no longer guaranteed and patience begins to influence efficiency. As expected, efficiency generally decreases with higher patience due to discarding more examples. For lengths 15 and 25, slightly lower KL can be achieved with greater patience, while for length 20, KL remains unchanged, indicating diminishing returns: higher patience does not necessarily reduce divergence from uniform distribution. 
These results motivate selecting a reasonable, but not excessive, patience value.

\begin{figure}
    \centering
    \includegraphics[width=0.5\textwidth]{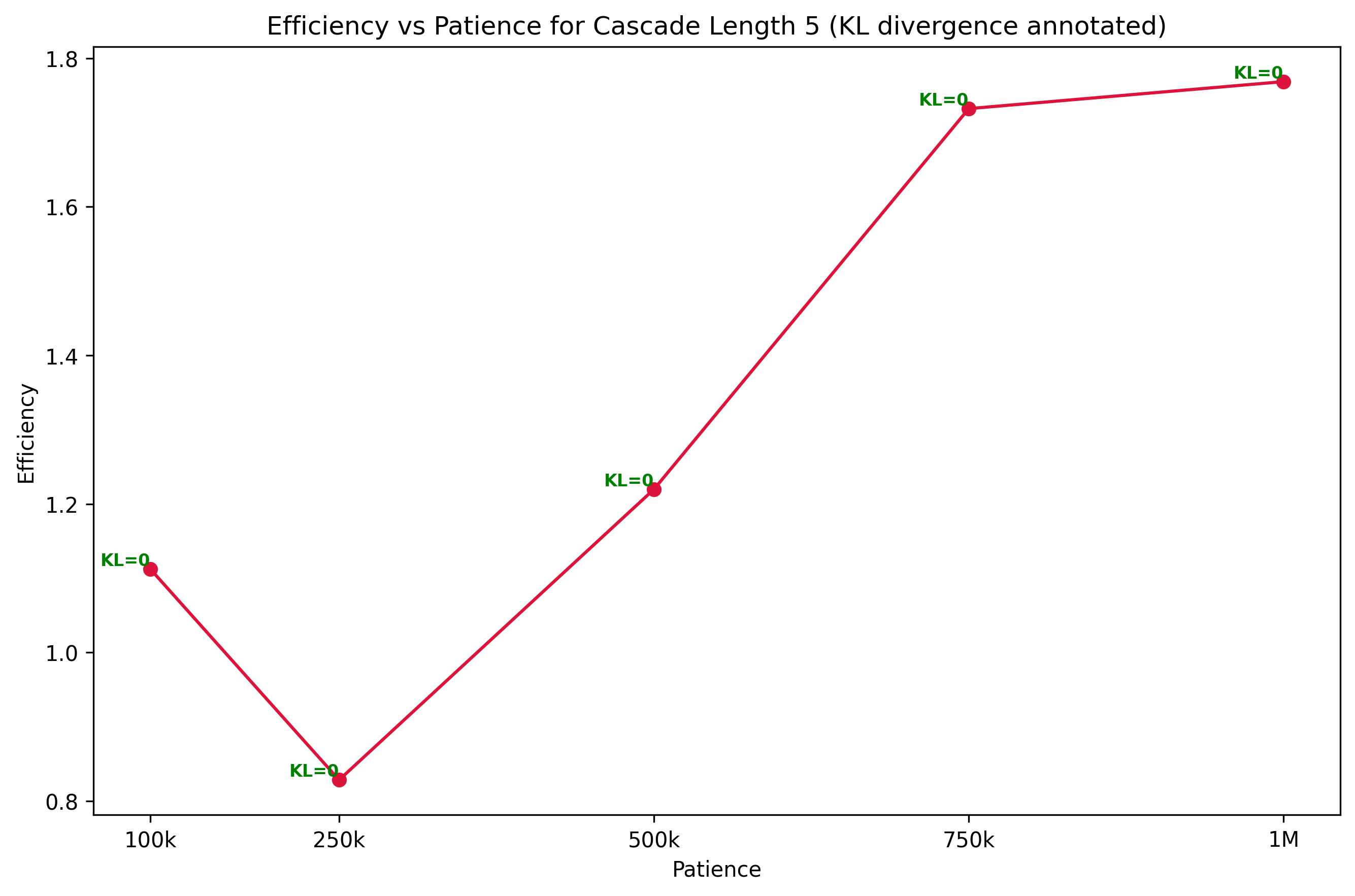}
    \caption{\textbf{Efficiency vs Patience:} Efficiency of the rejection sampling process for generating ground truth cascades of length 5 for various values of the patience parameter. The KL divergence from the ideal balanced distrbution is annotated per point with zero corresponding to achieving a perfectly balanced distribution.}
\label{fig:efficiency_patience_tradeoff_cascade_len_5}
\end{figure}

\begin{figure}
    \centering
    \includegraphics[width=0.5\textwidth]{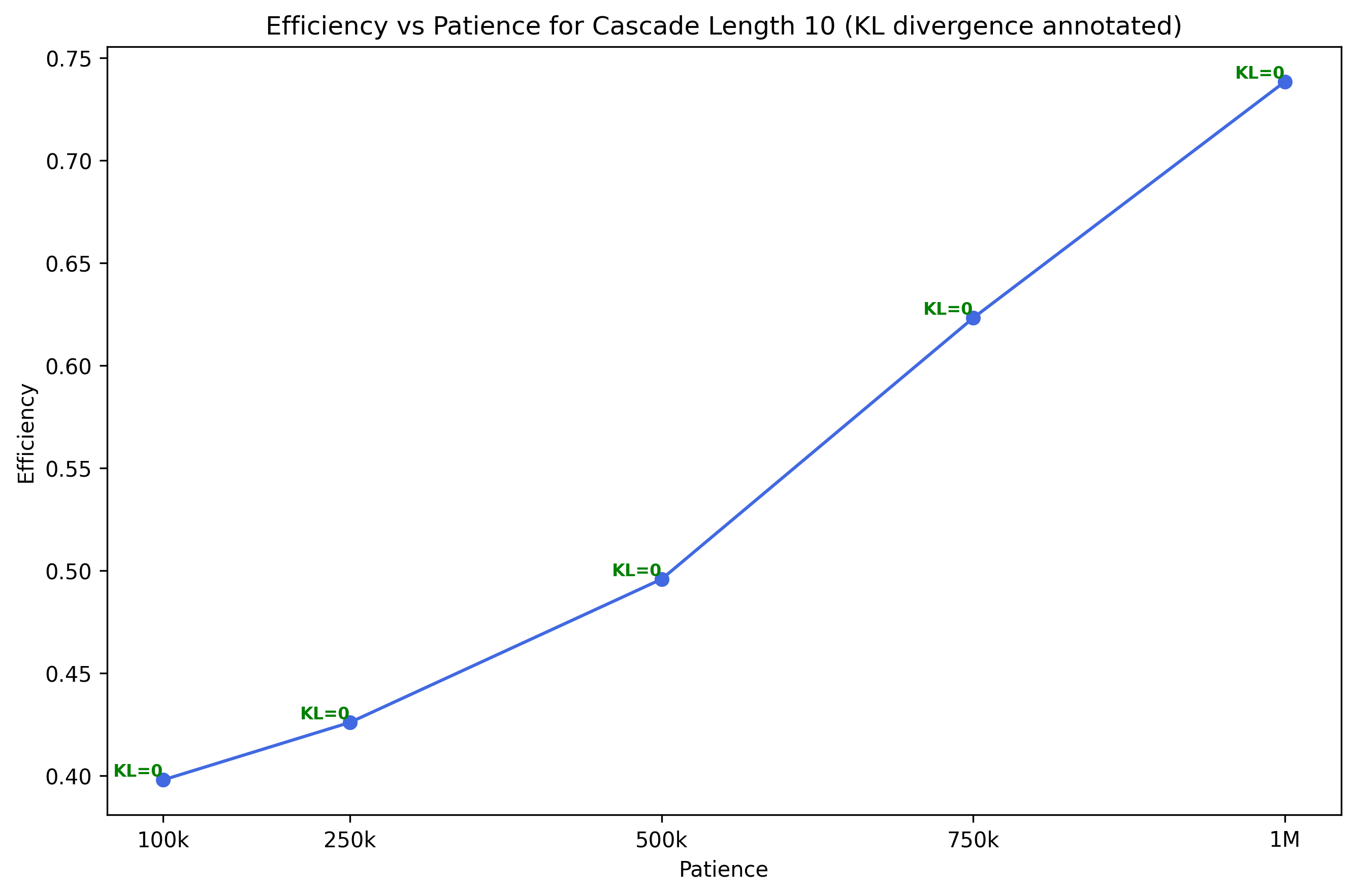}
    \caption{\textbf{Efficiency vs Patience:} Efficiency of the rejection sampling process for generating ground truth cascades of length 10 for various values of the patience parameter. The KL divergence from the ideal balanced distrbution is annotated per point with zero corresponding to achieving a perfectly balanced distribution.}
\label{fig:efficiency_patience_tradeoff_cascade_len_10}
\end{figure}

\begin{figure}
    \centering
    \includegraphics[width=0.5\textwidth]{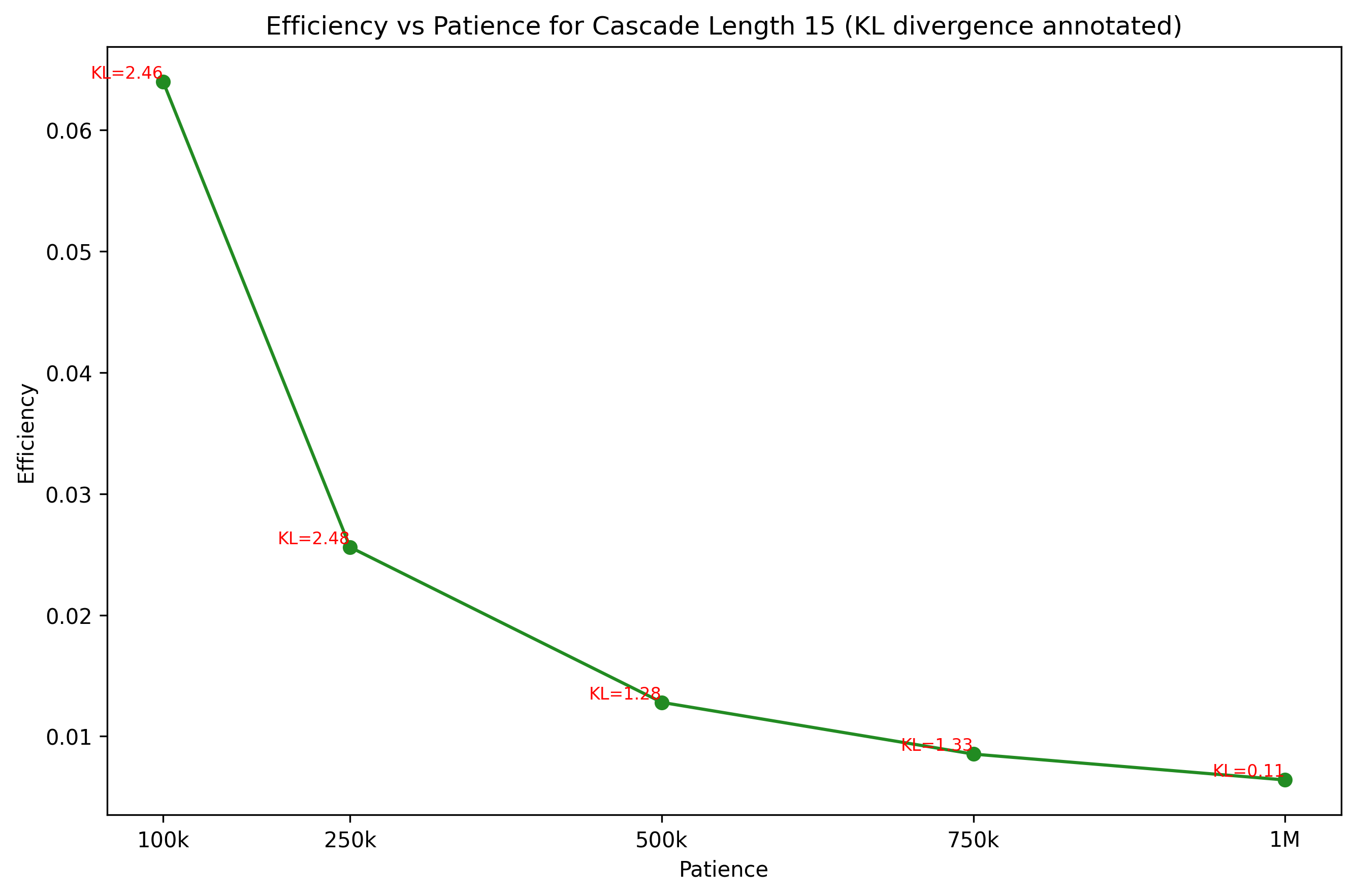}
    \caption{\textbf{Efficiency vs Patience:} Efficiency of the rejection sampling process for generating ground truth cascades of length 15 for various values of the patience parameter. The KL divergence from the ideal balanced distrbution is annotated per point with zero corresponding to achieving a perfectly balanced distribution.}
\label{fig:efficiency_patience_tradeoff_cascade_len_15}
\end{figure}

\begin{figure}
    \centering
    \includegraphics[width=0.5\textwidth]{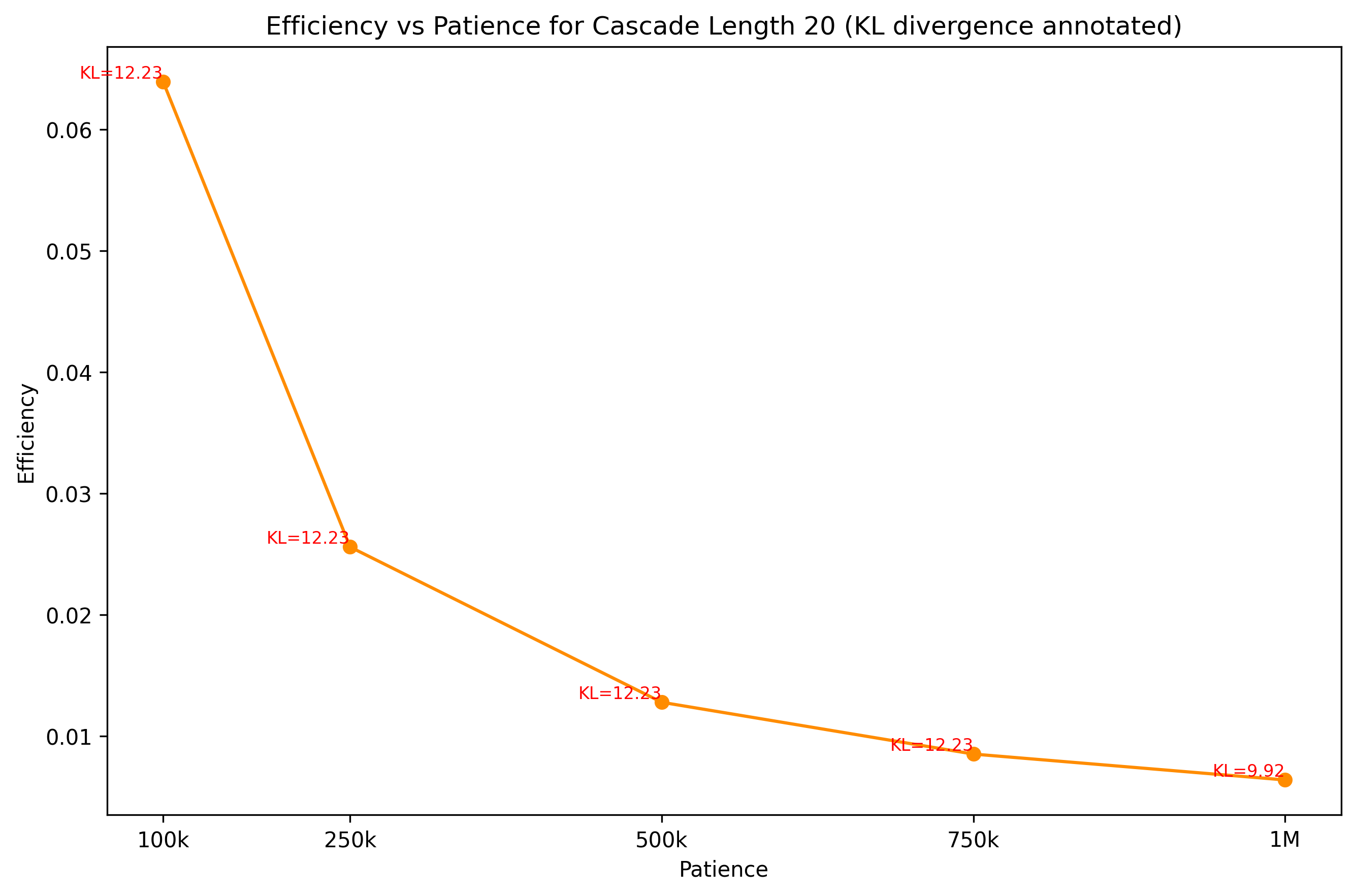}
    \caption{\textbf{Efficiency vs Patience:} Efficiency of the rejection sampling process for generating ground truth cascades of length 20 for various values of the patience parameter. The KL divergence from the ideal balanced distrbution is annotated per point with zero corresponding to achieving a perfectly balanced distribution.}
\label{fig:efficiency_patience_tradeoff_cascade_len_20}
\end{figure}

\begin{figure}
    \centering
    \includegraphics[width=0.5\textwidth]{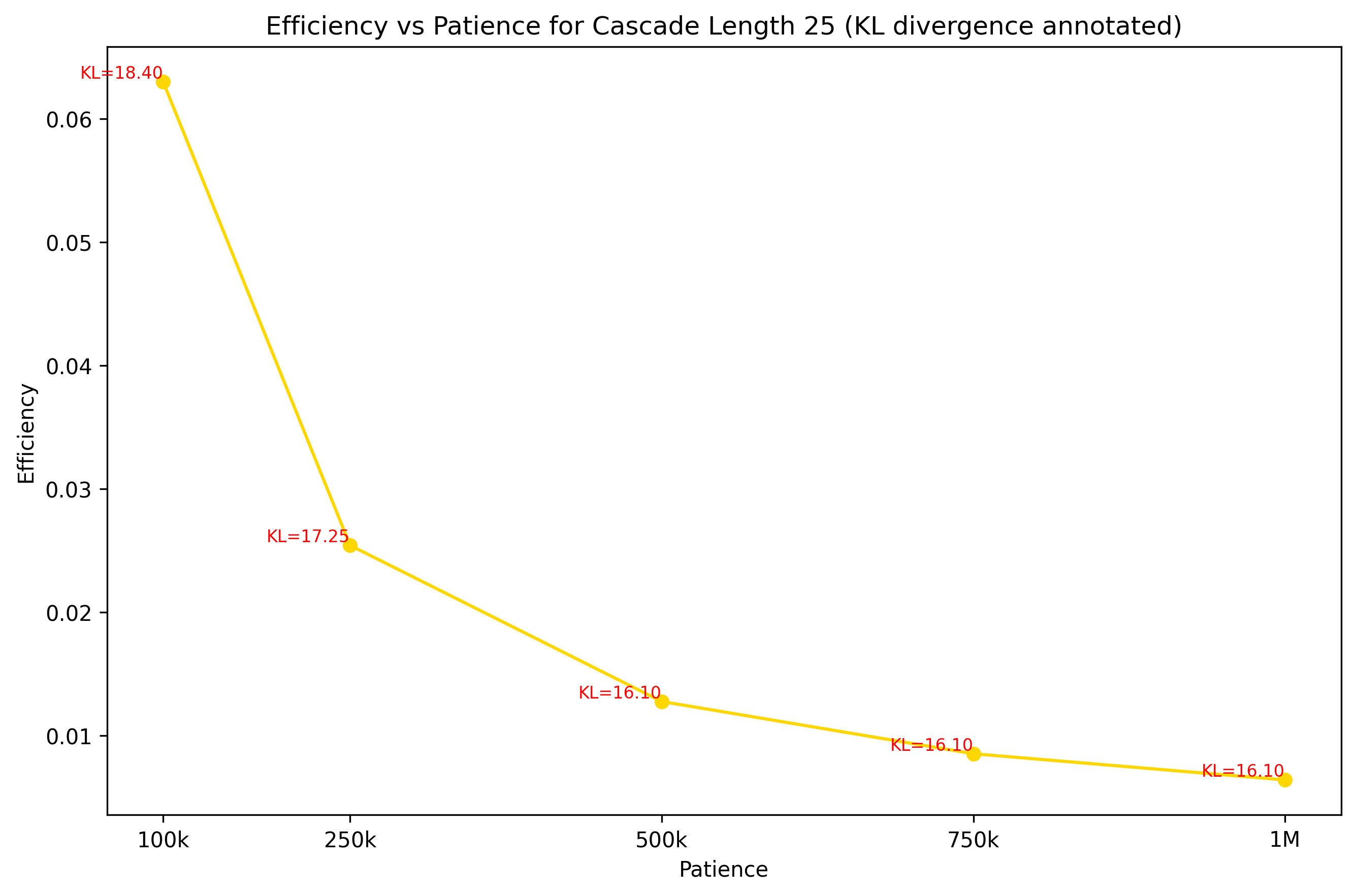}
    \caption{\textbf{Efficiency vs Patience:} Efficiency of the rejection sampling process for generating ground truth cascades of length 25 for various values of the patience parameter. The KL divergence from the ideal balanced distrbution is annotated per point with zero corresponding to achieving a perfectly balanced distribution.}
\label{fig:efficiency_patience_tradeoff_cascade_len_25}
\end{figure}

\begin{figure*}[t]
\centering
\begin{tcolorbox}[
    width=\textwidth,
    colback=green!5,
    colframe=green!40!black,
    title={\textbf{Program Ordering Prompt}},
    coltitle=white,
    colbacktitle=green!75!black,
]
You are solving a \textbf{program ordering puzzle}. Given input-output string pairs and a scrambled list of string replacement programs, your goal is to determine the correct execution order.

\medskip
\begin{verbatim}
## Background
\end{verbatim}

Each program performs a Python string replacement:
\verb|replace("A", "B")| replaces all occurrences of \texttt{"A"} with \texttt{"B"}.

\medskip
\textbf{Why order matters:}
\begin{itemize}
\item \textbf{Feeding:} One program creates substrings that another program can match.  \\
Example: \verb|replace("a","bc")| followed by \verb|replace("bc","x")|.
\item \textbf{Bleeding:} One program removes substrings that another program would have matched.  \\
Example: \verb|replace("ab","x")| followed by \verb|replace("a","y")|.
\end{itemize}

\medskip
\begin{verbatim}
## Your Task
\end{verbatim}

\begin{verbatim}
**Inputs:** {inputs}

**Outputs:** {outputs}

**Scrambled Programs** (indices 0 to {n_minus_1}):
{programs_formatted}
\end{verbatim}

Find the ordering \verb|[i0, i1, ..., i_{n_minus_1}]| such that applying programs in that order transforms each input to its corresponding output.

\medskip
\begin{verbatim}
## Approach
\end{verbatim}

\begin{enumerate}
\item Trace through what each program does
\item Identify potential feeding/bleeding interactions
\item Reason about which programs must come before others
\item Verify your ordering produces the expected outputs
\end{enumerate}

\medskip
\begin{verbatim}
## Output Format
\end{verbatim}

Provide your final answer as a JSON array of indices:

\begin{verbatim}
```json
[i0, i1, i2, ...]
```
\end{verbatim}

Your ordering must be a permutation of \verb|[0, 1, ..., {n_minus_1}]|.
\end{tcolorbox}
\end{figure*}

\clearpage
\subsection{Factorial Analysis}
\label{sec:appendix:factorial_analysis}
We conduct a factorial analysis to answer the following questions:

\subsubsection{Effect of Long Chain of Thought Reasoning}
We analyzed the effect of long chain-of-thought (LCoT) reasoning by selecting three models where it can be toggled on or off (Qwen3-32B, Claude-3.7-Sonnet, and Claude-4-Sonnet) on PBEBench-Lite (1008 instances) to evaluate its benefits. Independent variables included \texttt{model\_id} (Qwen, Claude-3.7, or Claude-4), \texttt{reasoning} (LCoT enabled or not), \texttt{cascade\_len} (ground-truth cascade length), and the presence of BFCC relations: \texttt{feeding}, \texttt{bleeding}, \texttt{counter-feeding}, and \texttt{counter-bleeding}. The dependent variable was binary Pass@1 (\texttt{passing}) per instance.

We fit a binary logistic regression model with \texttt{model\_id}, \texttt{reasoning}, \texttt{feeding}, \texttt{bleeding}, \texttt{counter-feeding}, and \texttt{counter-bleeding} as nominal predictors, and \texttt{cascade\_len} as a numeric covariate. All pairwise interaction terms were included. The Deviance goodness-of-fit test was non-significant, $\chi^{2}(6019) = 5398.5$, $p = 1.0$, indicating adequate model fit. The model explained $24.53\%$ of the variance in passing ($R^{2}_{\text{adj}}$). Wald tests for main effects and significant interactions are summarized in Table~\ref{tab:factorial_analysis:effect_of_reasoning}.

\begin{table*}[!tbh]
    \centering
    \begin{tabular}{@{}llrrl@{}}
    \toprule
    Term & df & $\chi^{2}$ & $p$ & Interpretation \\ \midrule
    Total model & 28 & 1027.68 & .000 & Deviance R\textasciicircum{}2 (adj) = 24.53\% \\
    cascade\_len & 1 & 415.35 & .000 & \begin{tabular}[c]{@{}l@{}}Passing is less likely as \\ cascade\_len increases\end{tabular} \\
    model\_id & 2 & 128.02 & .000 & \begin{tabular}[c]{@{}l@{}}The two sonnet models have \\ a higher pass rate than Qwen\end{tabular} \\
    reasoning & 1 & 18.70 & .000 & \begin{tabular}[c]{@{}l@{}}Reasoning models are more \\ likely to pass\end{tabular} \\
    feeding & 1 & 15.69 & .000 & \begin{tabular}[c]{@{}l@{}}Feeding reduces probability \\ of passing\end{tabular} \\
    bleeding & 1 & 36.10 & .000 & \begin{tabular}[c]{@{}l@{}}Bleeding reduces probability \\ of passing\end{tabular} \\
    counter-feeding & 1 & 21.37 & .000 & \begin{tabular}[c]{@{}l@{}}Counter-feeding reduces \\ probability of passing\end{tabular} \\
    counter-bleeding & 1 & 4.67 & .031 & \begin{tabular}[c]{@{}l@{}}Counter-bleeding increases \\ probability of passing\end{tabular} \\
    model\_id $\times$ reasoning & 2 & 189.59 & .000 & \begin{tabular}[c]{@{}l@{}}For reasoning models, Qwen \\ has a higher probability of \\ passing than the two Sonnet \\ models, which are not different \\ from each other.  For \\ non-reasoning models, Claude 4 \\ is better than Clause 3.7, which \\ is better than Qwen.\end{tabular} \\
    feeding $\times$ bleeding & 1 & 33.17 & .000 & \begin{tabular}[c]{@{}l@{}}Adding either feeding or bleeding \\ reduces the probability of a pass, \\ but both together is better than \\ just feeding.  Just bleeding is not \\ worse than having both and not \\ better than just feeding.\end{tabular} \\
    bleeding $\times$ counter-bleeding & 1 & 30.86 & .000 & \begin{tabular}[c]{@{}l@{}}With counter-bleeding, bleeding \\ loses its effect on probability \\ of a pass.\end{tabular} \\ \bottomrule
    \end{tabular}
    \caption{\textbf{Reasoning Manipulation Analysis on PBEBench-Lite:} Wald test results for main effects and significant interactions in the logistic regression predicting passing.}
    \label{tab:factorial_analysis:effect_of_reasoning}
\end{table*}

\subsubsection{Analysis of Models of Varying Capabilities}
We analyzed what makes PBEBench-Lite instances difficult for three representative models of different capabilities. Independent variables included \texttt{cascade\_len} (ground-truth cascade length) and the presence of BFCC relations: \texttt{feeding}, \texttt{bleeding}, \texttt{counter-feeding}, and \texttt{counter-bleeding}. The dependent variable was binary Pass@1 (\texttt{passing}). All models were fit with binary logistic regression, including all pairwise interaction terms. Results are summarized below.

\textbf{Codestral-22B (weakest model).} For Codestral-22B, only the effect of \texttt{cascade\_len} was significant. Passing was less likely as the cascade length increased. Other predictors showed no detectable effects.

\textbf{QwQ-32B (moderately good model).} For QwQ-32B, the model fit the data adequately, $\chi^{2}(996) = 1946.72$, $p = .129$. The model explained $19.71\%$ of variance ($R^{2}_{\text{adj}}$). Wald tests are summarized in Table~\ref{tab:factorial_analysis:qwq_analysis}. Cascade length, feeding, and bleeding all significantly reduced the probability of passing, while counter-feeding and counter-bleeding showed no effects. An interaction revealed that the joint presence of feeding and bleeding was less detrimental than either alone.

\begin{table*}[!tbh]
    \centering
    \begin{tabular}{@{}llrrl@{}}
    \toprule
    Term & df & $\chi^{2}$ & $p$ & Interpretation \\ \midrule
    Total model & 11 & 197.43 & .000 & Deviance R\textasciicircum{}2 (adj) = 19.71\% \\
    cascade\_len & 1 & 114.54 & .000 & \begin{tabular}[c]{@{}l@{}}Passing is less likely as \\ cascade\_len increases\end{tabular} \\
    feeding & 1 & 5.74 & .017 & \begin{tabular}[c]{@{}l@{}}Feeding reduces the \\ probability of a pass\end{tabular} \\
    bleeding & 1 & 7.37 & .007 & \begin{tabular}[c]{@{}l@{}}Bleeding reduces the \\ probability of a pass\end{tabular} \\
    counter-feeding & 1 & .22 & .637 & No effect \\
    counter-bleeding & 1 & .47 & .495 & No effect \\
    feeding $\times$ bleeding & 1 & 10.03 & .002 & \begin{tabular}[c]{@{}l@{}}Though bleeding or feeding \\ alone reduce the probability \\ of a pass, if they are both \\ present, they do not.\end{tabular} \\ \bottomrule
    \end{tabular}
    \caption{\textbf{QwQ-32B Analysis on PBEBench-Lite:} Wald test results for main effects and significant interactions in the logistic regression predicting passing.}
    \label{tab:factorial_analysis:qwq_analysis}
\end{table*}

\begin{table}[!tbh]
\centering
\begin{tabular}{@{}lrl@{}}
\toprule
Term & Coefficient & Interpretation \\ \midrule
cascade\_length & -0.521 & \begin{tabular}[c]{@{}l@{}}Strongest predictor. \\ Longer = harder.\end{tabular} \\
counterfeeding & -0.291 & Hardest relation. \\
counterbleeding & -0.068 & Moderate difficulty. \\
bleeding & -0.034 & \begin{tabular}[c]{@{}l@{}}Small negative \\ effect.\end{tabular} \\
feeding & -0.004 & Negligible effect. \\ \bottomrule
\end{tabular}
\caption{\textbf{Difficulty Analysis of PBEBench-Lite-Perm:} Logistic regression coefficients for factors affecting the difficulty of program reordering and interpretation of the values.}
\label{tab:pbebench_lite_perm_difficuly_analysis}
\end{table}

\textbf{GPT-5 (strongest model).} For GPT-5, the model fit the data adequately, $\chi^{2}(996) = 982.16$, $p = .617$. The model explained $17.28\%$ of variance ($R^{2}_{\text{adj}}$). Wald tests are summarized in Table~\ref{tab:gpt5-analysis}. Cascade length, feeding, and bleeding significantly reduced passing. Counter-feeding and counter-bleeding had no main effects, but an interaction showed that bleeding reduced passing only when counter-bleeding was absent.

\begin{table*}[!tbh]
    \centering
    \begin{tabular}{@{}llrrl@{}}
    \toprule
    Term & df & $\chi^{2}$ & $p$ & Interpretation \\ \midrule
    Total model & 11 & 158.16 & .000 & Deviance R\textasciicircum{}2 (adj) = 17.28\% \\
    cascade\_len & 1 & 63.76 & .000 & \begin{tabular}[c]{@{}l@{}}Passing is less likely as \\ cascade\_len increases\end{tabular} \\
    feeding & 1 & 9.48 & .002 & \begin{tabular}[c]{@{}l@{}}Feeding reduces the \\ probability of a pass\end{tabular} \\
    bleeding & 1 & 4.95 & .026 & \begin{tabular}[c]{@{}l@{}}Bleeding reduces the \\ probability of a pass\end{tabular} \\
    counter-feeding & 1 & 1.16 & .282 & No effect \\
    counter-bleeding & 1 & .54 & .464 & No effect \\
    bleeding $\times$ counter-bleeding & 1 & 8.50 & .004 & \begin{tabular}[c]{@{}l@{}}Bleeding only reduces \\ probability of a pass if \\ counter-bleeding is not present\end{tabular} \\ \bottomrule
    \end{tabular}
    \caption{\textbf{GPT-5 Analysis on PBEBench-Lite:} Wald test results for main effects and significant interactions in the logistic regression predicting passing.}
    \label{tab:gpt5-analysis}
\end{table*}

\subsection{Logistic Regression Analysis}
\label{sec:appendix:logistic_regression}
We conduct a logistic regression analysis on gpt-oss-120b predictions on PBEBench for cascades of length 2 to 20, on the following factors influencing difficulty: ground-truth cascade length (\texttt{cascade\_len}), the presence of BFCC relations: \texttt{feeding}, \texttt{bleeding}, \texttt{counter\_feeding}, and \texttt{counter\_bleeding}. 
The dependent variable was binary Pass@1 (passing) per instance.
The logistic regression analysis reveals that all the analyzed factors have a statistically significant impact on the success or passing, with feeding, counter feeding, and cascade length having strong negative effects, with cascade length having the strongest impact. However, bleeding has a slight positive effect, and counter-bleeding has a very weak negative effect. This indicates that the presence of bleeding can make the problems easier to solve, while counter bleeding only has a small effect on increasing hardness.

\begin{table}[!tbh]
    \centering
    \begin{tabular}{@{}lrr@{}}
    \toprule
    Term & Coefficient & $p$ \\ \midrule
    intercept & 2.152 & 0 \\
    feeding & -0.254 & 4.77e-19 \\
    bleeding & 0.134 & 2.92e-6 \\
    counter\_feeding & -0.176 & 6.27e-10 \\
    counter\_bleeding & -0.08 & 0.005 \\
    cascade\_len & -0.347 & 0 \\ \bottomrule
    \end{tabular}
    \caption{\textbf{Logistic Regression Difficulty Analysis of gpt-oss-120b on PBEBench:} Model predictions were analyzed with a sampling budget of 32 and maximum sequence length of 16,384. Each attempt across the 1,216 instances was treated as a datapoint, yielding 38,912 datapoints in total.}
    \label{tab:logistic_regression_gpt_oss_120B}
\end{table}

\begin{table}[!tbh]
\centering
\begin{tabular}{@{}rrr@{}}
\toprule
\multicolumn{1}{l}{\multirow{2}{*}{\begin{tabular}[c]{@{}l@{}}Cascade\\ Length\end{tabular}}} & \multicolumn{2}{r}{Pass@1} \\ \cmidrule(l){2-3} 
\multicolumn{1}{l}{} & First Code Block & Last Code Block \\ \midrule
2 & 1 & 1 \\
3 & 1 & 1 \\
4 & 0.9688 & 0.9688 \\
5 & 1 & 1 \\
6 & 0.9688 & 0.9688 \\
7 & 0.9688 & 0.9688 \\
8 & 0.9688 & 0.9688 \\
9 & 0.8438 & 0.8438 \\
10 & 0.8438 & 0.8438 \\
11 & 0.875 & 0.875 \\
12 & 0.7656 & 0.7656 \\
13 & 0.6719 & 0.6719 \\
14 & 0.4531 & 0.4531 \\
15 & 0.5 & 0.5 \\
16 & 0.3906 & 0.3906 \\
17 & 0.2812 & 0.2812 \\
18 & 0.1719 & 0.1719 \\
19 & 0.0781 & 0.0781 \\
20 & 0.0469 & 0.0469 \\ \bottomrule
\end{tabular}
\caption{\textbf{Performance across Cascade Lengths (gpt-oss-120b):} Variation of the performance of gpt-oss-120b across cascades of length 2-20 with sampling budget of 32 and max sequence length of 16384.}
\label{tab:gpt_oss_120b_cascade_len_perf}
\end{table}

\begin{table*}[!tbh]
\centering
\begin{tabular}{@{}lrrrrrr@{}}
\toprule
\multirow{2}{*}{\textbf{Model}} & \multicolumn{3}{r}{\textbf{First Code Block}} & \multicolumn{3}{r}{\textbf{Last Code Block}} \\
\cmidrule(l){2-7}
 & \textbf{Pass@1} & \textbf{Edit Sim} & \textbf{Valid Rate} & \textbf{Pass@1} & \textbf{Edit Sim} & \textbf{Valid Rate} \\
\midrule
gpt-oss-20b \reasoningemoji \moeemoji & 0.3958 & 0.4662 & 0.9748 & 0.3958 & 0.4662 & 0.9748 \\
gpt-oss-120b \reasoningemoji \moeemoji & 0.5542 & 0.7018 & 0.9211 & 0.5542 & 0.7018 & 0.9211 \\
\bottomrule
\end{tabular}
\caption{\textbf{PBEBench-Lite-MoreEg Performance:} We compute the Pass@1 and edit similarity as the coarse and fine-grained evaluation, respectively, for each model. \moeemoji - indicates mixture-of-experts (or MoE) model and \reasoningemoji - indicates reasoning on for model.}
\label{tab:pbebench_lite_more_eg_perf}
\end{table*}

\begin{table}[!tbh]
\centering
\begin{tabular}{@{}lcr@{}}
\toprule
Cascade Length & Sampling Budget & Pass@1 \\ \midrule
\multirow{4}{*}{20} & 1 & 0.1543 \\
 & 2 & 0.2416 \\
 & 3 & 0.2974 \\
 & 4 & 0.3377 \\
\multirow{4}{*}{25} & 1 & 0.0547 \\
 & 2 & 0.0911 \\
 & 3 & 0.1172 \\
 & 4 & 0.1406 \\
\multirow{4}{*}{30} & 1 & 0.0117 \\
 & 2 & 0.0234 \\
 & 3 & 0.0351 \\
 & 4 & 0.0469 \\ \bottomrule
\end{tabular}
\caption{\textbf{Performance across Cascade Lengths (GPT-5):} Variation of the performance of GPT-5 across cascades with sampling budget ranging from 1 to 4, medium reasoning effort and 65536 max completion tokens (which includes both reasoning and output tokens).}
\label{tab:gpt5_cascade_len_perf_and_sampling_budget_scaling}
\end{table}

\begin{table*}[!tbh]
\centering
\begin{tabular}{@{}lrr@{}}
\toprule
Sample Budget & First Code Block & Last Code Block \\ \midrule
1 & 0.1543 & 0.1543 \\
2 & 0.2416 & 0.2416 \\
3 & 0.2974 & 0.2974 \\
4 & 0.3377 & 0.3377 \\
5 & 0.3694 & 0.3694 \\
6 & 0.3956 & 0.3956 \\
7 & 0.418 & 0.418 \\
8 & 0.4375 & 0.4375 \\ \bottomrule
\end{tabular}
\caption{\textbf{Performance across sampling budget (GPT-5):} Variation of the performance of GPT-5 with sampling budget for cascade length of 20 for medium reasoning effort and 65536 max completion tokens.}
\label{tab:gpt5_sampling_budget}
\end{table*}

\begin{table*}[!tbh]
\centering
\begin{tabular}{@{}lccrrrr@{}}
\toprule
\begin{tabular}[c]{@{}l@{}}Max\\ Completion\\ Tokens\end{tabular} & \begin{tabular}[c]{@{}c@{}}Reasoning\\ Effort\end{tabular} & \begin{tabular}[c]{@{}c@{}}Avg\\ Tokens\\ Used\end{tabular} & Pass@1 & \begin{tabular}[c]{@{}r@{}}Null \\ Responses (\%)\end{tabular} & \begin{tabular}[c]{@{}r@{}}Non null\\ Correct\\ Responses (\%)\end{tabular} & \begin{tabular}[c]{@{}r@{}}Non null\\ Incorrect\\ Responses (\%)\end{tabular} \\ \midrule
512 & minimal & 194 & {\color[HTML]{000000} 0} & 0 & 0 & 100 \\
1024 & low & 1024 & {\color[HTML]{000000} 0} & 100 & 0 & 0 \\
2048 & low & 2046 & {\color[HTML]{000000} 0.0156} & 98.4 & 1.56 & 0 \\
4096 & low & 3909 & {\color[HTML]{000000} 0.1719} & 72 & 17.19 & 10.81 \\
4096 & medium & 4096 & {\color[HTML]{000000} 0} & 100 & 0 & 0 \\
8192 & low & 5041 & {\color[HTML]{000000} 0.6094} & 3.1 & 60.94 & 35.96 \\
8192 & medium & 7726 & {\color[HTML]{000000} 0.3906} & 61 & 39 & 0 \\
16384 & medium & 9738 & {\color[HTML]{000000} 0.8125} & 4.7 & 81.25 & 14.05 \\
16384 & high & 13358 & {\color[HTML]{000000} 0.6875} & 23 & 68.75 & 8.2 \\
32768 & high & 14045 & {\color[HTML]{000000} 0.9219} & 0 & 92.19 & 7.81 \\
65536 & high & 14549 & {\color[HTML]{000000} 0.9375} & 0 & 93.75 & 6.25 \\ \bottomrule
\end{tabular}
\caption{\textbf{Performance across reasoning efforts and max completion tokens (GPT-5):} Variation of the performance of GPT-5 with reasoning effort and max completion tokens for cascade length of 10, and sampling budget of 1.}
\label{tab:gpt5_max_len_scaling}
\end{table*}

\begin{table*}[!tbh]
\centering
\begin{tabular}{@{}lrrrrrrr@{}}
\toprule
\multirow{2}{*}{\begin{tabular}[c]{@{}l@{}}Sampling\\ Budget\end{tabular}} & \multicolumn{3}{r}{First Code Block} & \multicolumn{3}{r}{Last Code Block} & \multirow{2}{*}{Nulls} \\ \cmidrule(lr){2-7}
 & Pass@1 & Edit Sim & Valid Rate & Pass@1 & Edit Sim & Valid Rate &  \\ \midrule
1 & 0.25 & 0.7084 & 0.9776 & 0.25 & 0.7084 & 0.9776 & 1 \\
2 & 0.5 & 0.7931 & 0.9752 & 0.5 & 0.7931 & 0.9752 & 1 \\
4 & 0.6094 & 0.9081 & 0.983 & 0.6094 & 0.9081 & 0.983 & 5 \\
8 & 0.7344 & 0.9477 & 0.9588 & 0.7344 & 0.9477 & 0.9588 & 4 \\
12 & 0.7812 & 0.9619 & 0.9684 & 0.7812 & 0.9619 & 0.9684 & 10 \\
16 & 0.8281 & 0.9679 & 0.9706 & 0.8281 & 0.9679 & 0.9706 & 11 \\
20 & 0.8594 & 0.9691 & 0.9788 & 0.8594 & 0.9691 & 0.9788 & 26 \\
24 & 0.7969 & 0.9725 & 0.9638 & 0.7969 & 0.9725 & 0.9638 & 3 \\
28 & 0.8438 & 0.9644 & 0.9797 & 0.8438 & 0.9644 & 0.9797 & 12 \\
32 & 0.8438 & 0.9739 & 0.9806 & 0.8438 & 0.9739 & 0.9806 & 20 \\
64 & 0.9062 & 0.9808 & 0.9855 & 0.9062 & 0.9808 & 0.9855 & 27 \\ \bottomrule
\end{tabular}
\caption{\textbf{Performance across sampling budget (gpt-oss-120b):} gpt-oss-120b performance with sampling budget for max sequence length of 16384 on the 64 PBEBench instances with ground-truth cascade length 10. The nulls column counts cases (out of $Sampling\ Budget \times 32$) where the chain of thought fails to terminate within the max sequence length.}
\label{tab:gpt_oss_120b_sampling_budget}
\end{table*}

\begin{table*}[!tbh]
\centering
\begin{tabular}{@{}lrrrrrrr@{}}
\toprule
\multirow{2}{*}{\begin{tabular}[c]{@{}l@{}}Sampling\\ Budget\end{tabular}} & \multicolumn{3}{r}{First Code Block} & \multicolumn{3}{r}{Last Code Block} & \multirow{2}{*}{Nulls} \\ \cmidrule(lr){2-7}
 & Pass@1 & Edit Sim & Valid Rate & Pass@1 & Edit Sim & Valid Rate &  \\ \midrule
32 & 0.8438 & 0.9739 & 0.9806 & 0.8438 & 0.9739 & 0.9806 & 20 \\
32 & 0.8594 & 0.9716 & 0.9814 & 0.8594 & 0.9716 & 0.9814 & 16 \\
32 & 0.8281 & 0.9772 & 0.9745 & 0.8281 & 0.9772 & 0.9745 & 32 \\
\bottomrule
\end{tabular}
\caption{\textbf{Performance variance per run (gpt-oss-120b):} We analyze the variance exhibited by gpt-oss-120b across all the metrics for a given run.
We conduct this experiment on the 64 instances corresponding to cascade length 10 which is also used for all the scaling experiments and use a sampling budget of 32 and max sequence length of 16384, the same parameters used for evaluation of gpt-oss-120b on PBEBench. The nulls column counts cases (out of $Sampling\ Budget \times 32$) where the chain of thought fails to terminate within the max sequence length.}
\label{tab:gpt_oss_120b_run_variance}
\end{table*}

\begin{table*}[!tbh]
\centering
\begin{tabular}{@{}lrrrrrrr@{}}
\toprule
\multirow{2}{*}{\begin{tabular}[c]{@{}l@{}}Max Seq\\ Length\end{tabular}} & \multicolumn{3}{r}{First Code Block} & \multicolumn{3}{r}{Last Code Block} & \multirow{2}{*}{Nulls} \\ \cmidrule(lr){2-7}
 & Pass@1 & Edit Sim & Valid Rate & Pass@1 & Edit Sim & Valid Rate &  \\ \midrule
2048 & 0 & 0.0141 & 1 & 0 & 0.0141 & 1 & 2043 \\
3840 & 0.2812 & 0.5479 & 0.9717 & 0.2727 & 0.4099 & 1 & 1799 \\
5632 & 0.625 & 0.8447 & 0.9724 & 0.625 & 0.8447 & 0.9724 & 1212 \\
7424 & 0.7656 & 0.9473 & 0.9705 & 0.7656 & 0.9473 & 0.9705 & 607 \\
9216 & 0.8594 & 0.9631 & 0.9875 & 0.8594 & 0.9631 & 0.9875 & 305 \\
12800 & 0.875 & 0.9706 & 0.9788 & 0.875 & 0.9706 & 0.9788 & 80 \\
14592 & 0.875 & 0.9748 & 0.9825 & 0.875 & 0.9748 & 0.9825 & 28 \\
16384 & 0.8438 & 0.9739 & 0.9806 & 0.8438 & 0.9739 & 0.9806 & 20 \\
32768 & 0.9219 & 0.9819 & 0.9838 & 0.9219 & 0.9819 & 0.9838 & 0 \\
65536 & 0.9062 & 0.9755 & 0.9838 & 0.9062 & 0.9755 & 0.9838 & 0 \\ \bottomrule
\end{tabular}
\caption{\textbf{Performance across max sequence length (gpt-oss-120b):} gpt-oss-120b performance with max sequence length at sampling budget 32 on the 64 PBEBench instances with ground-truth cascade length 10. The null column counts cases (out of $64 \times 32 = 2048$) where the chain of thought fails to terminate within the sequence limit.}
\label{tab:gpt_oss_120b_max_seq_len}
\end{table*}

\begin{figure}[!tbh]
    \centering
    \includegraphics[width=0.5\textwidth]{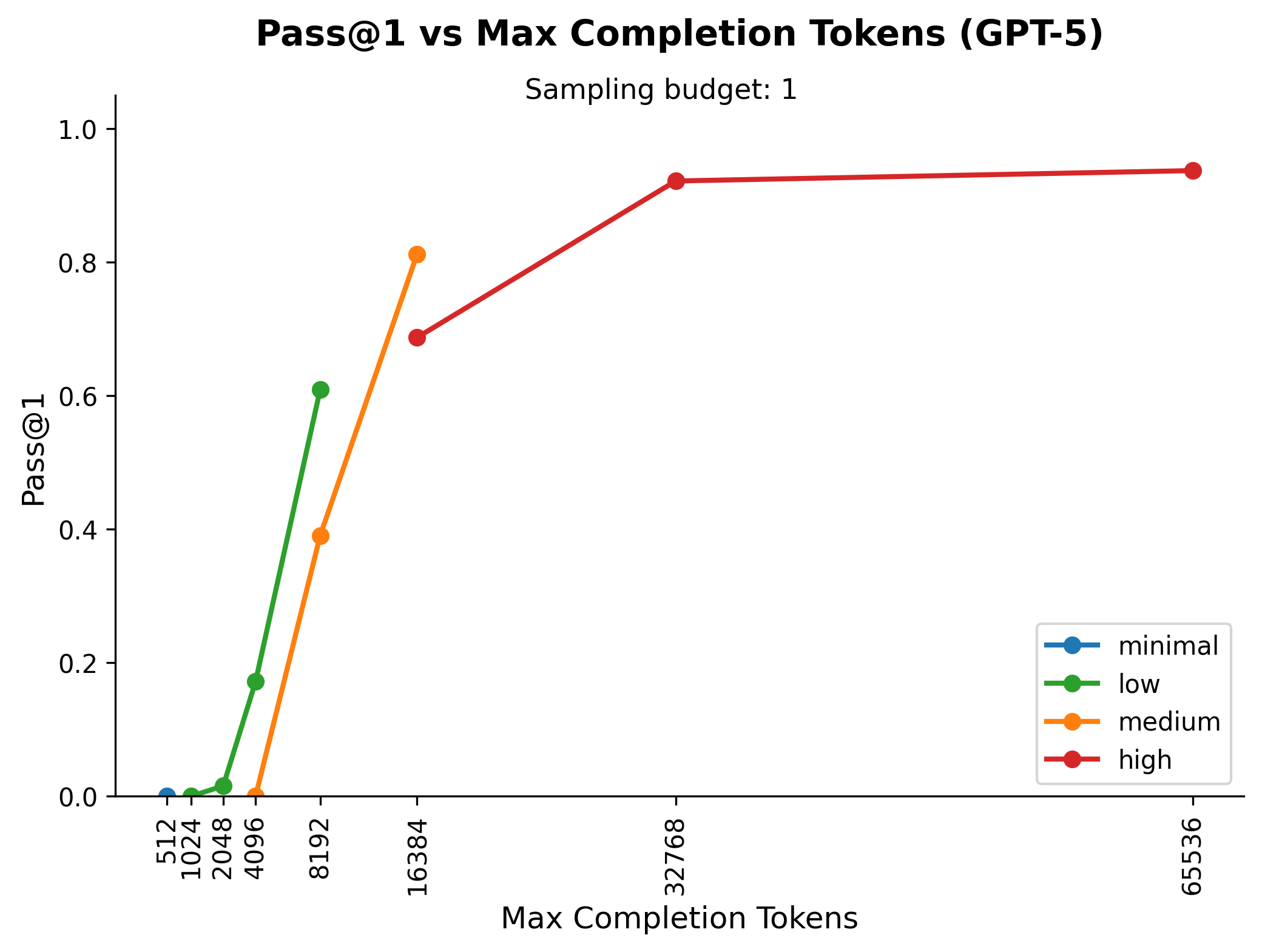}
    \caption{\textbf{Pass@1 vs Max Completion Tokens (sampling budget: 1):} Variation of Pass@1 vs Max Completion Tokens for various reasoning modes ranging from minimal, low, medium, to high.}
\label{fig:max_seq_len_scaling_gpt5_reasoning_mode}
\end{figure}

\subsection{Qualitative Analysis of Program Reordering Performance}
\label{sec:appendix:qual_analysis_program_reordering}
We present a qualitative analysis of reasoning strategies employed by large language models on a program reordering task grounded in phonological rule ordering. 
Through examination of model responses across systems with visible chain-of-thought, we identify two distinct reasoning paradigms: (1) template-based constraint reasoning that explicitly models feeding/bleeding interactions between rules, and (2) exhaustive enumeration via brute-force permutation testing with forward simulation. 
Models employing constraint-based reasoning (e.g., Codestral-22B) correctly identify feeding relationships but often derive incorrect ordering constraints—stating ``Program A feeds Program B'' but concluding A must precede B, the opposite of correct inference. 
Meanwhile, enumeration-based approaches (e.g., QwQ-32B, DeepSeek-R1) exhibit combinatorial collapse as problem size increases: response length grows from roughly 7,000 to about 28,000 characters while accuracy drops from nearly 90\% to around 40\% as the search space expands from 2 to 120 permutations. 
Notably, models rarely reference counterfeeding or counterbleeding despite these interactions being critical to the hardest instances. 
These findings reveal systematic gaps in how LLMs translate domain knowledge into valid constraint satisfaction and highlight the limitations of brute-force search for combinatorial reasoning tasks.

\subsection{Confusion Matrices for Cascade Lengths}
\label{sec:appendix:conf_mat_clen}
We analyze the length of model-predicted cascades against the ground truth cascades to analyze if the models tend to find more or fewer rules than the ground truth cascade for both successful/passing (Pass@1 = 1) and failure cases.
The results on the PBEBench-Lite dataset across all models for successful cases are shown in Fig~\ref{fig:conf_mat_success_clen} and Fig.~\ref{fig:conf_mat_failure_clen}.
The plots reveal that for successful cases, the models tend to largely find solutions of the correct length for shorter cascades, but for longer ground truth cascades, they tend to find fewer solutions in general, but can surprisingly find shorter solutions as well.
For failure scenarios, we note a bigger spread and almost see more cases of the LLMs generating longer programs than the ground truth. This might indicate that for the more complex cases, the LLMs tend to overthink and end up generating more complex cascades that don't work. We also see a large fraction of invalid programs, with this fraction growing more and more for longer ground truth cascades (more complex cases).

\begin{figure}[!tbh]
    \centering
    \includegraphics[width=0.5\textwidth]{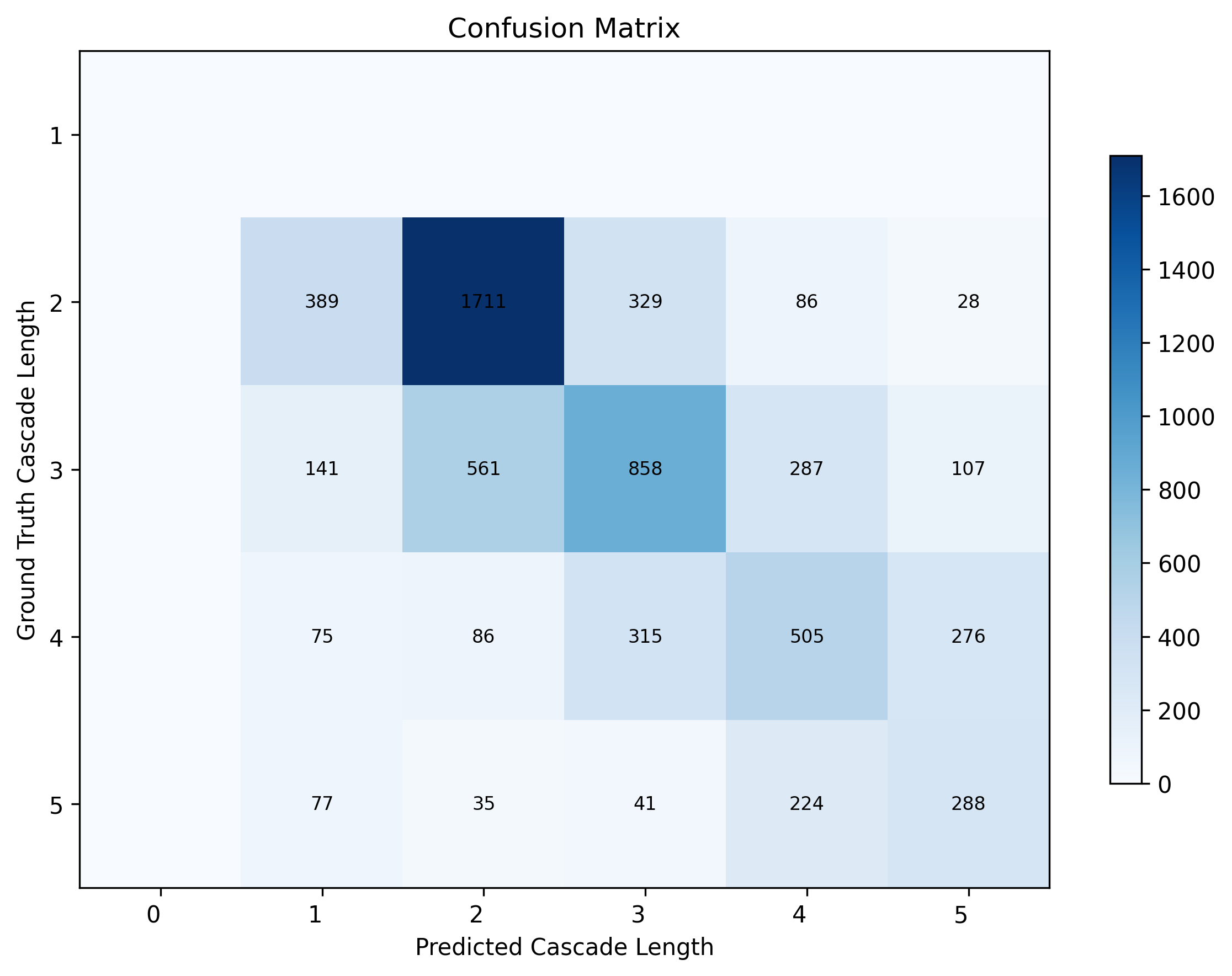}
    \caption{\textbf{Cascade Length Confusion Matrix for Success (All Models PBEBench-Lite)}: Confusion matrix showing the distribution of cascade lengths in the model prediction vs ground truth. 0 length on the predicted side corresponds to cases where the model fails to generate a valid cascade at all. The results are averaged across all models on PBEBench-Lite, and failure denotes Pass@1 = 0 with a sampling budget of 1.}
    \label{fig:conf_mat_success_clen}
\end{figure}

\begin{figure}[!tbh]
    \centering
    \includegraphics[width=0.5\textwidth]{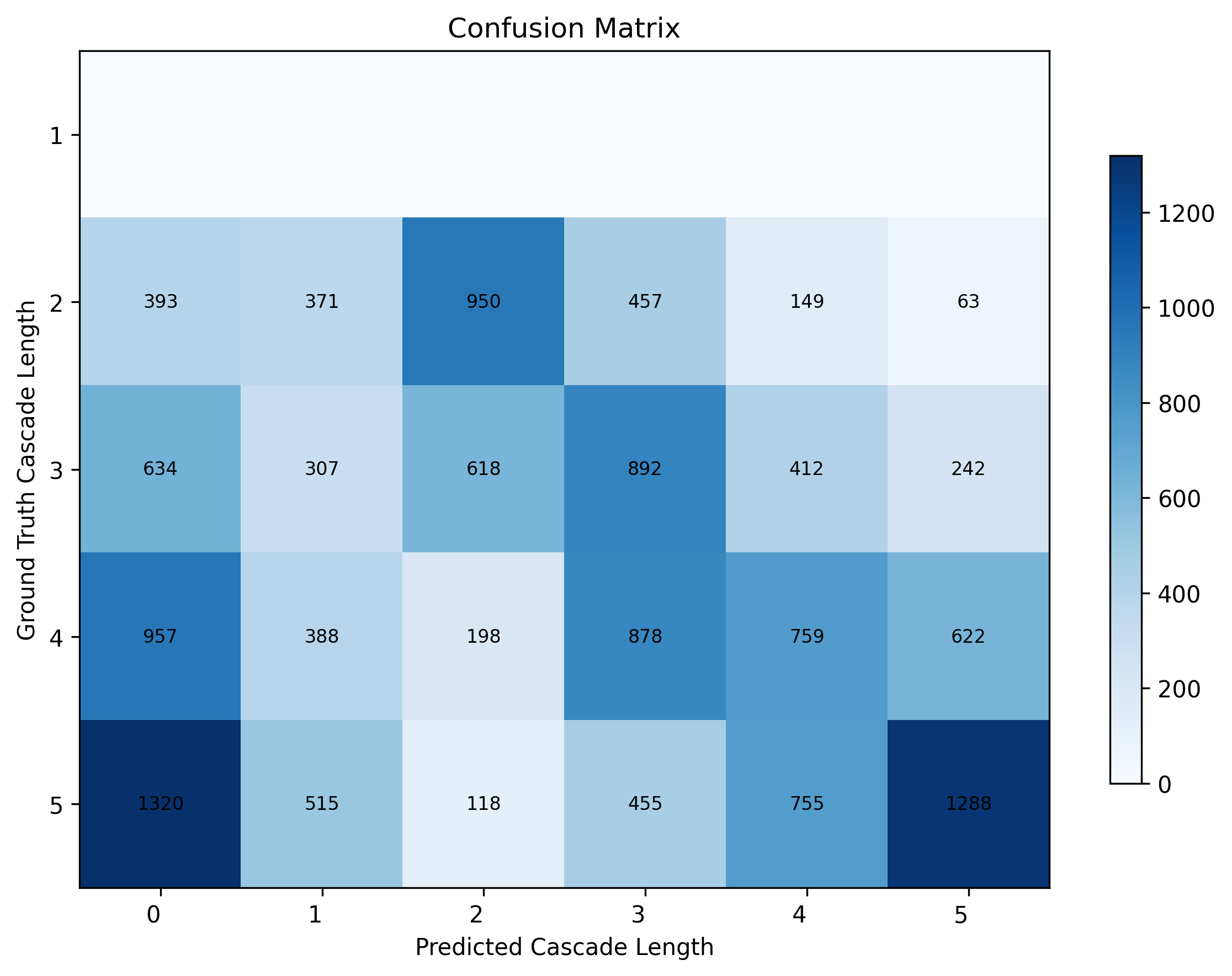}
    \caption{\textbf{Cascade Length Confusion Matrix for Failure (All Models PBEBench-Lite)}: Confusion matrix showing the distribution of cascade lengths in the model prediction vs ground truth. 0 length on the predicted side corresponds to cases where the model fails to generate a valid cascade at all. The results are averaged across all models on PBEBench-Lite, and failure denotes Pass@1 = 0 with a sampling budget of 1.}
    \label{fig:conf_mat_failure_clen}
\end{figure}

\subsection{Confusion Matrices for BFCC Relations}
\label{sec:appendix:conf_mat_reln}
We analyze the types of relations present in the model-generated program cascades and compare them against the ground truth relations and visualize the results via confusion matrices.
The results are on the PBEBench-Lite dataset across all models and separated by whether the model succeeds or fails (based on Pass@1). 
We normalize each row (fraction of predicted cases for each possible relation type for a given ground truth type) and show the overall results for successful cases in Fig.~\ref{fig:conf_mat_all_models_success} and failure cases in Fig.~\ref{fig:conf_mat_all_models_failure}.
While the per-model results for success cases and failure cases span from Fig~\ref{fig:conf_mat_gpt5_success} to Fig~\ref{fig:conf_mat_gpt_oss_20b_success}.
We also plot a simplified version of the confusion matrix that looks at each relation at a time and analyze true positives, false positives, false negatives and true negatives spearately for each relation type for both passing (Fig.~\ref{fig:conf_mat_all_models_success_simplified}) and non-passing cases (Fig.~\ref{fig:conf_mat_all_models_failure_simplified}) aggregated across all the models.
These show an interesting pattern where for successful cases for almost all relation types have relatively high false negative rates but it is especially bad for feeding (72\% false negatives) and counter-feeding (62\%) showing that even when the models can solve cases with ground truth cascades having these relation types they are highly biased against generating them.
Interetingly we also see low false positives for the success/passing cases which is consistent with the tendencies of these models to try and not predict cascades that incorporate BFCC relations and the false positive rate is highest for feeding the hardest relation type.
Finally for failure cases we note that there is a high false negative rate for every relation type ranging between 75\% to 80\%, consistent with the fact that the evaluated models try to avoid predicting BFCC relations and perhaps for cases where they are needed to find a correct solution, they fail to find one.

\begin{figure}[!tbh]
    \centering
    \includegraphics[width=0.5\textwidth]{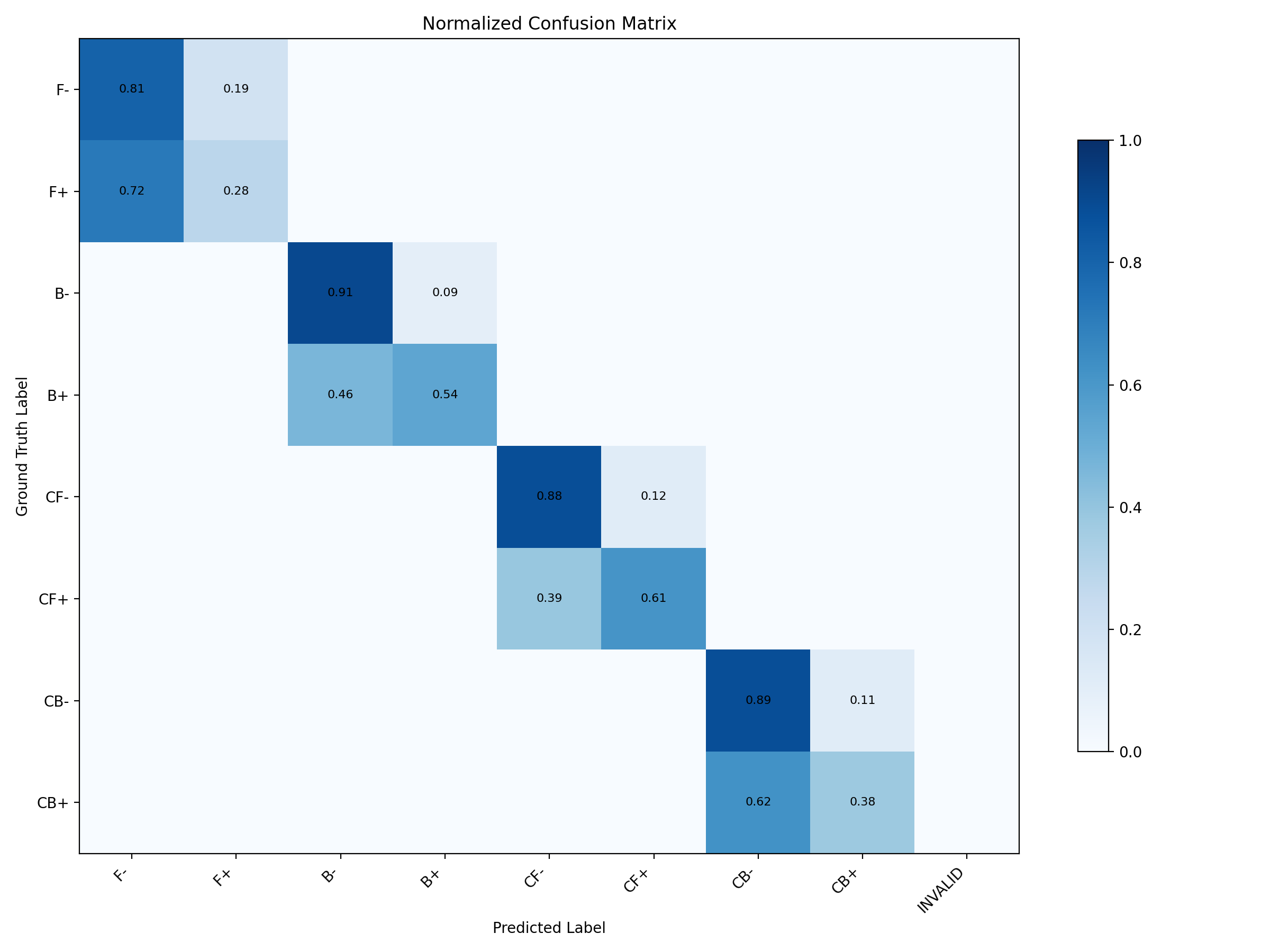}
    \caption{\textbf{Simplified Relation Type Confusion Matrix for Success (All Models PBEBench-Lite)}: Confusion matrix showing the distribution of relation types in the model prediction vs ground truth. INVALID category indicates cases where the model fails to generate a valid cascade at all. The results are averaged across all models on PBEBench-Lite, and success denotes Pass@1 = 1 with a sampling budget of 1.}
    \label{fig:conf_mat_all_models_success_simplified}
\end{figure}

\begin{figure}[!tbh]
    \centering
    \includegraphics[width=0.5\textwidth]{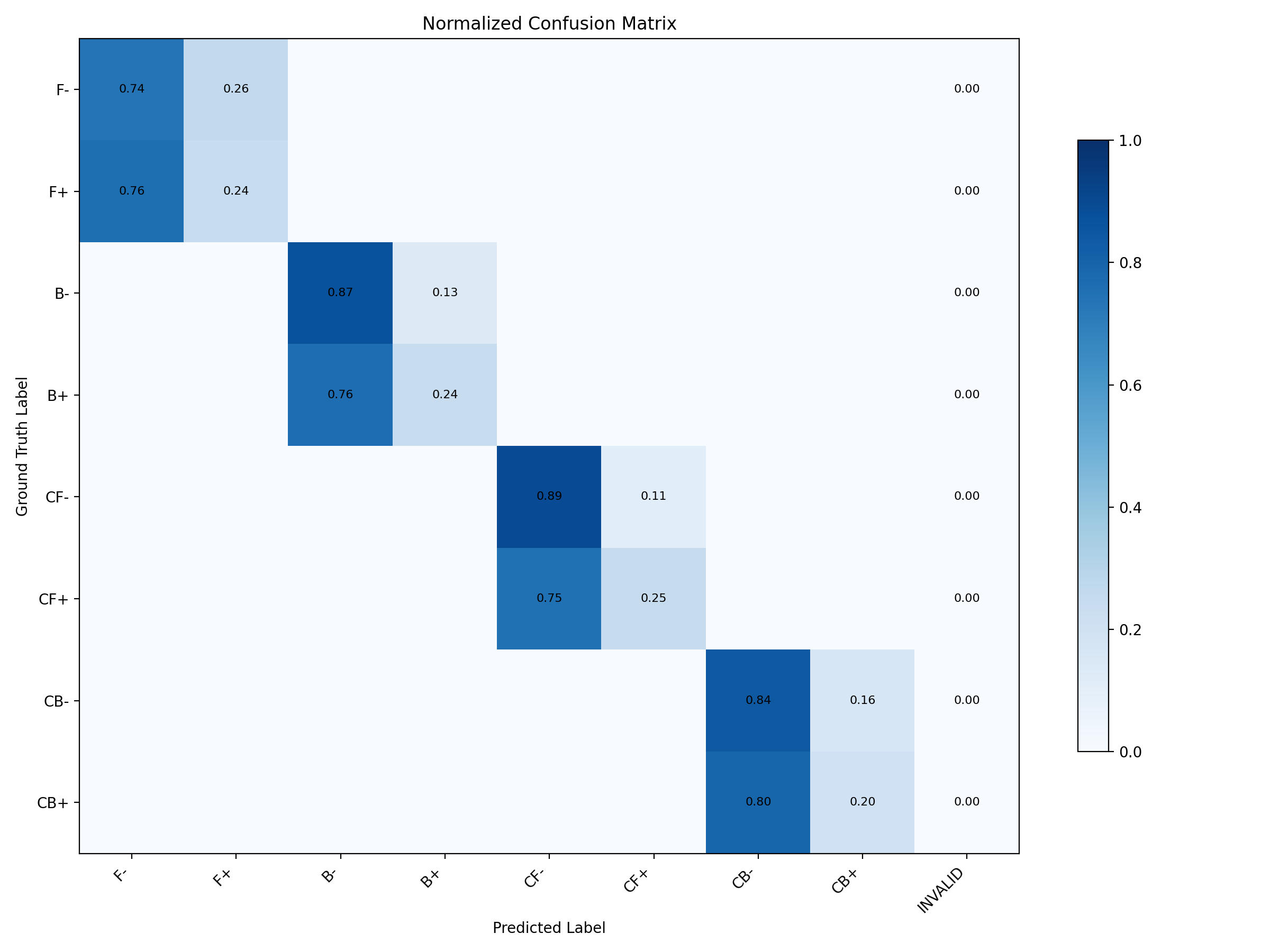}
    \caption{\textbf{Simplified Relation Type Confusion Matrix for Failure (All Models PBEBench-Lite)}: Confusion matrix showing the distribution of relation types in the model prediction vs ground truth. INVALID category indicates cases where the model fails to generate a valid cascade at all. The results are averaged across all models on PBEBench-Lite, and failure denotes Pass@1 = 0 with a sampling budget of 1.}
    \label{fig:conf_mat_all_models_failure_simplified}
\end{figure}

\begin{figure}[!tbh]
    \centering
    \includegraphics[width=0.5\textwidth]{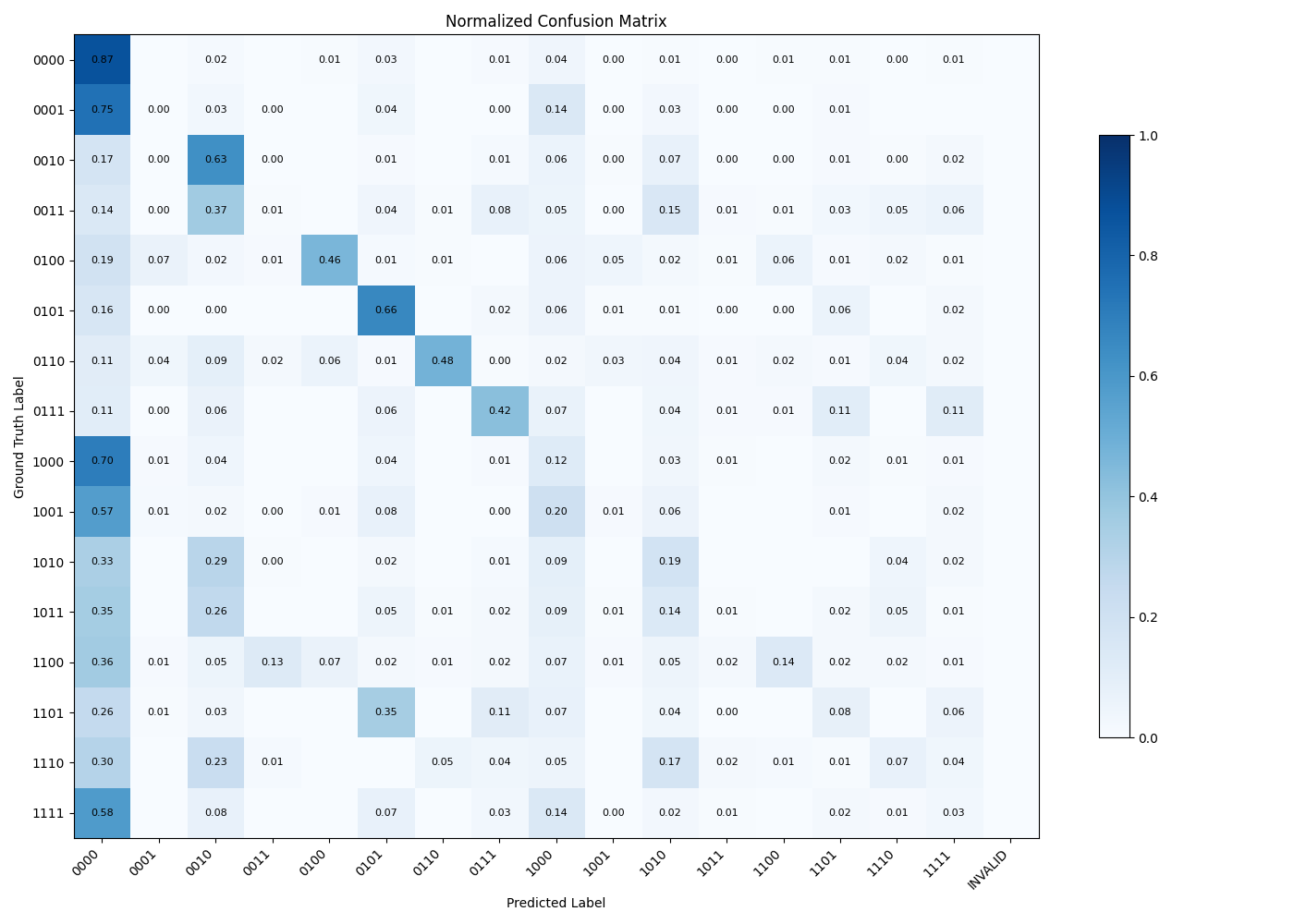}
    \caption{\textbf{Relation Type Confusion Matrix for Success (All Models PBEBench-Lite)}: Confusion matrix showing the distribution of relation types in the model prediction vs ground truth. INVALID category indicates cases where the model fails to generate a valid cascade at all. The results are averaged across all models on PBEBench-Lite, and success denotes Pass@1 = 1 with a sampling budget of 1.}
    \label{fig:conf_mat_all_models_success}
\end{figure}

\begin{figure}[!tbh]
    \centering
    \includegraphics[width=0.5\textwidth]{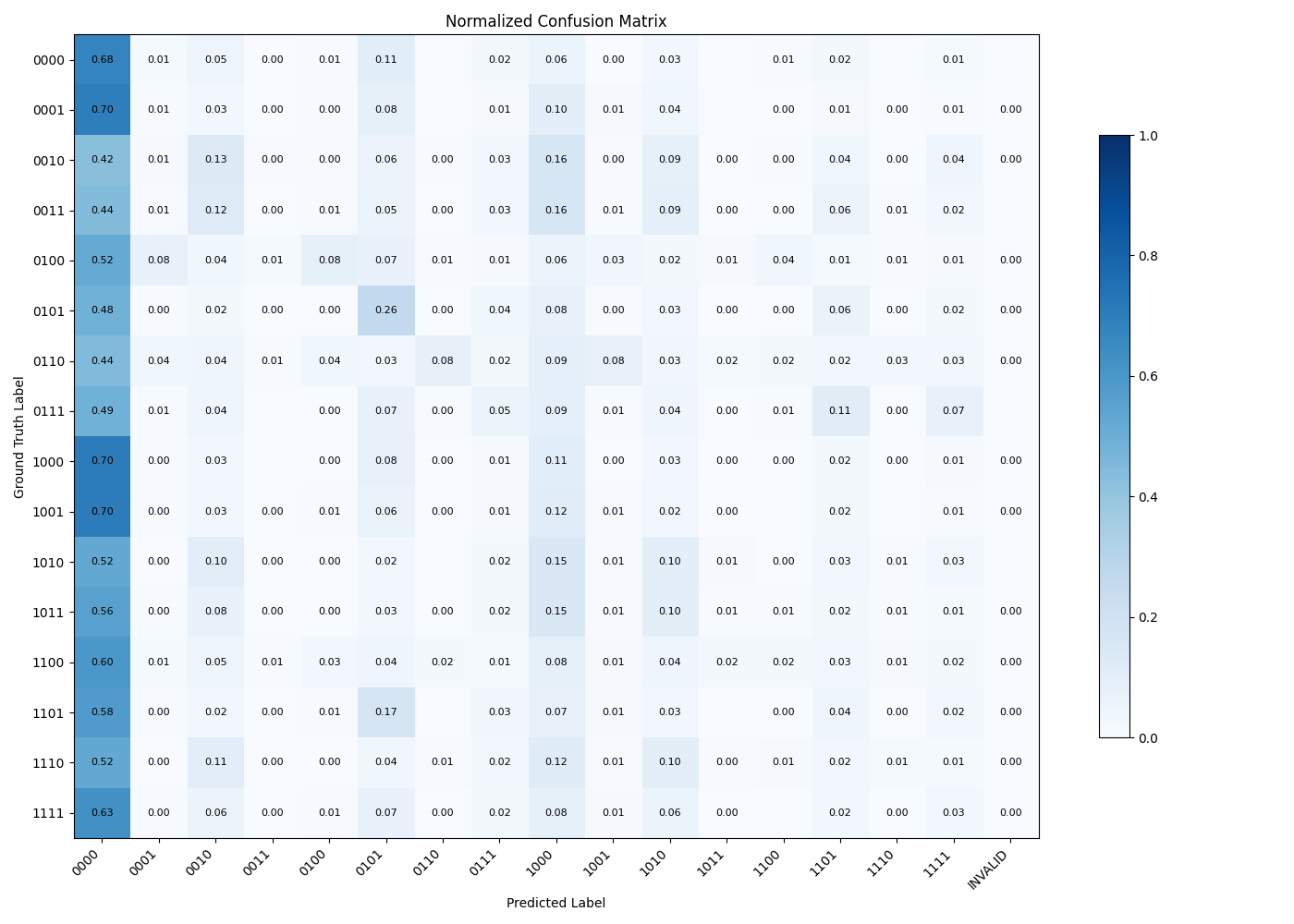}
    \caption{\textbf{Relation Type Confusion Matrix for Failure (All Models PBEBench-Lite)}: Confusion matrix showing the distribution of relation types in the model prediction vs ground truth. INVALID category indicates cases where the model fails to generate a valid cascade at all. The results are averaged across all models on PBEBench-Lite, and failure denotes Pass@1 = 0 with a sampling budget of 1.}
    \label{fig:conf_mat_all_models_failure}
\end{figure}

\begin{figure}[!tbh]
    \centering
    \includegraphics[width=0.5\textwidth]{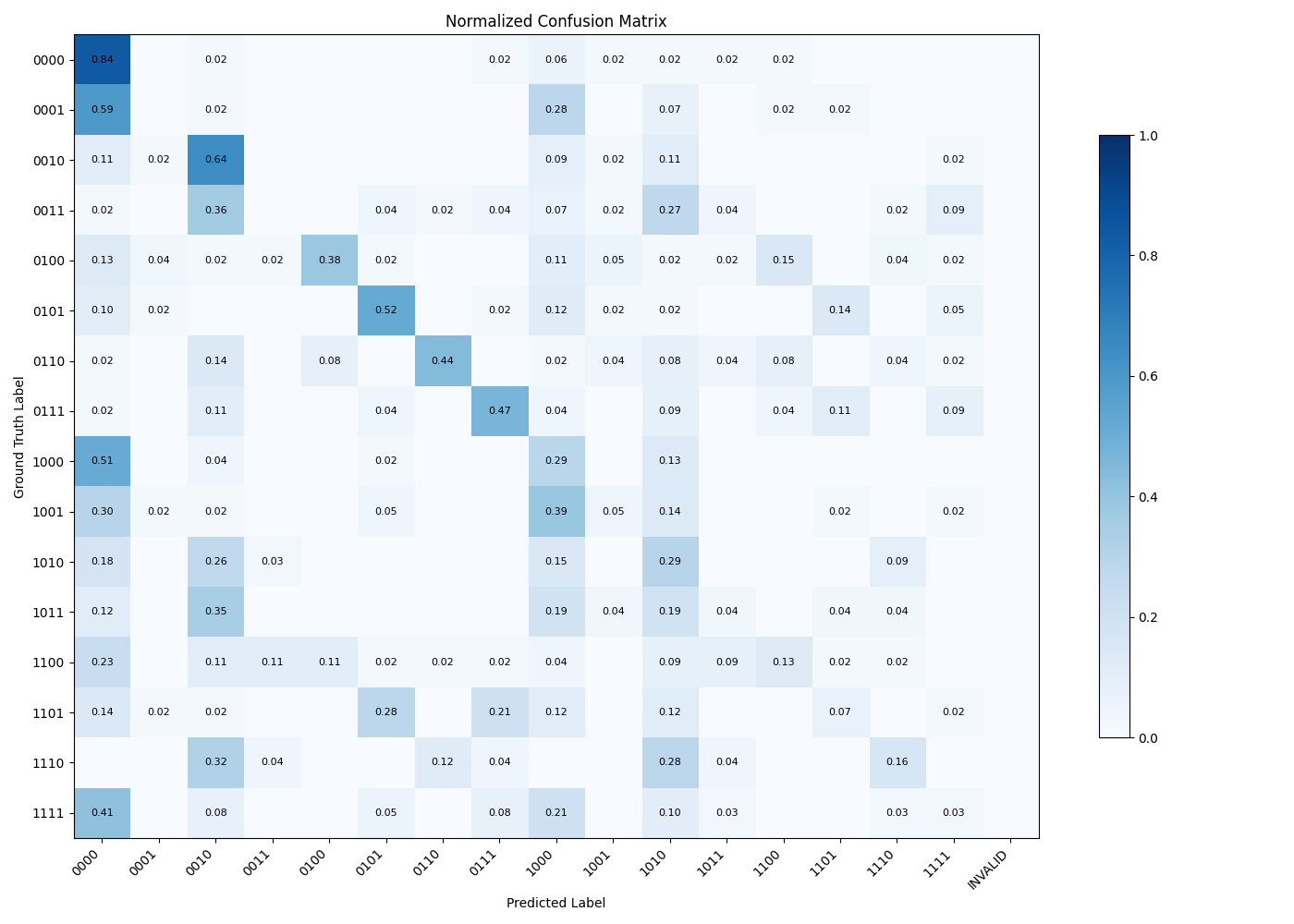}
    \caption{\textbf{Relation Type Confusion Matrix for Success (GPT-5 PBEBench-Lite)}: Confusion matrix showing the distribution of relation types in the model prediction vs ground truth. INVALID category indicates cases where the model fails to generate a valid cascade at all. The results are for GPT-5 on PBEBench-Lite, and success denotes Pass@1 = 1 with a sampling budget of 1.}
    \label{fig:conf_mat_gpt5_success}
\end{figure}

\begin{figure}[!tbh]
    \centering
    \includegraphics[width=0.5\textwidth]{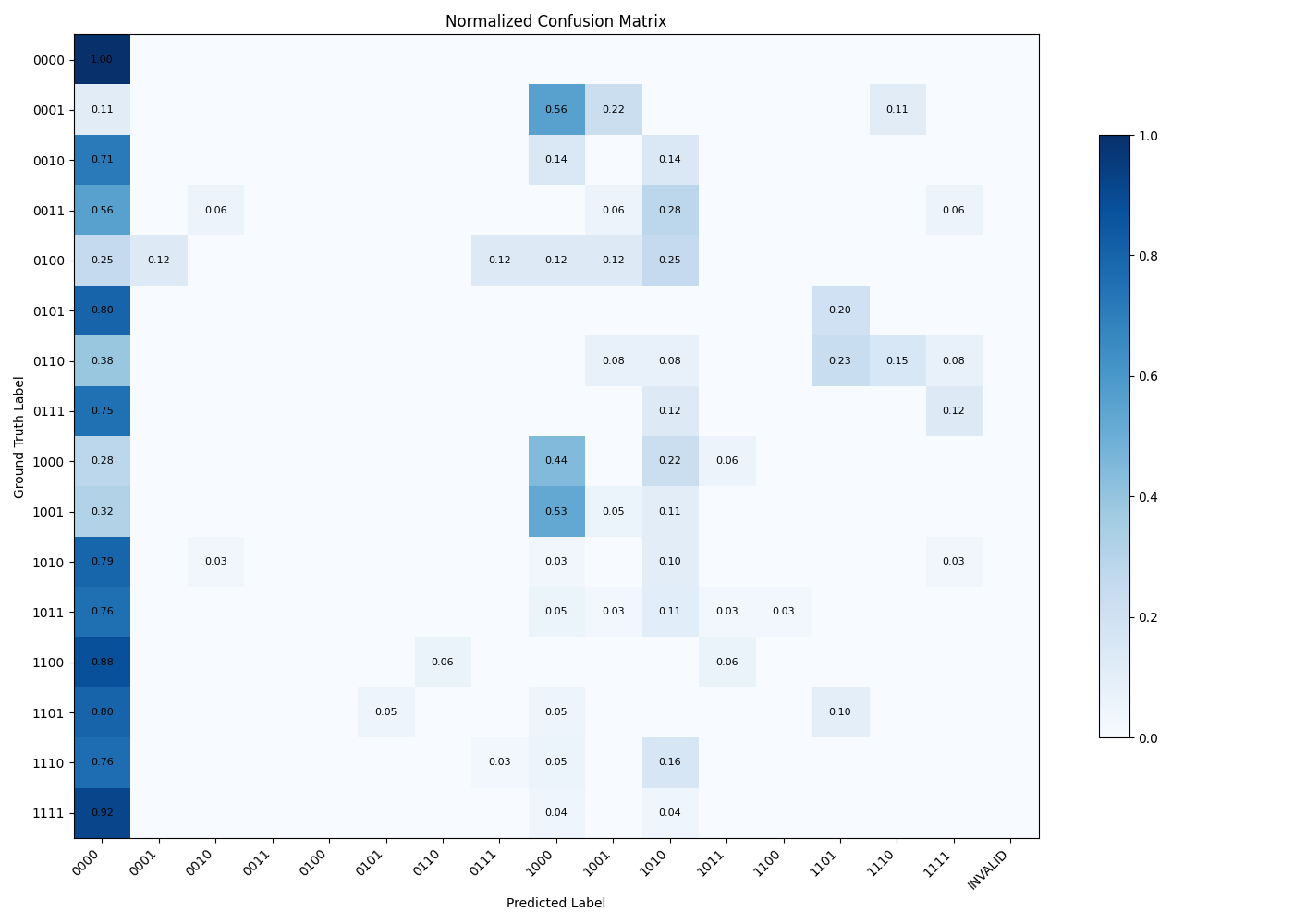}
    \caption{\textbf{Relation Type Confusion Matrix for Failure (GPT-5 PBEBench-Lite)}: Confusion matrix showing the distribution of relation types in the model prediction vs ground truth. INVALID category indicates cases where the model fails to generate a valid cascade at all. The results are for GPT-5 on PBEBench-Lite, and failure denotes Pass@1 = 0 with a sampling budget of 1.}
    \label{fig:conf_mat_gpt5_failure}
\end{figure}

\begin{figure}[!tbh]
    \centering
    \includegraphics[width=0.5\textwidth]{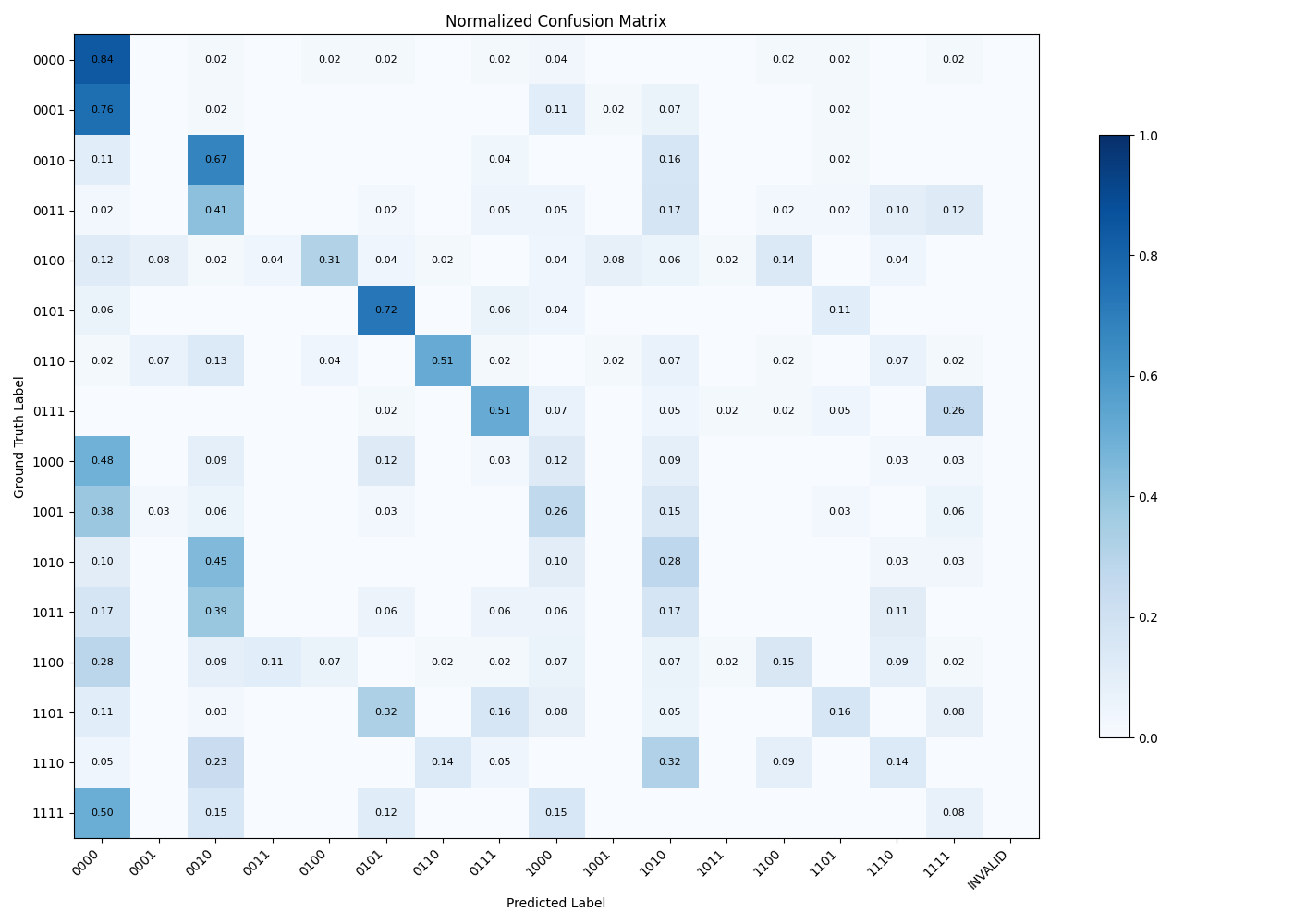}
    \caption{\textbf{Relation Type Confusion Matrix for Success (gpt-oss-120b PBEBench-Lite)}: Confusion matrix showing the distribution of relation types in the model prediction vs ground truth. INVALID category indicates cases where the model fails to generate a valid cascade at all. The results are for gpt-oss-120b on PBEBench-Lite, and success denotes Pass@1 = 1 with a sampling budget of 1.}
    \label{fig:conf_mat_gpt_oss_120b_success}
\end{figure}

\begin{figure}[!tbh]
    \centering
    \includegraphics[width=0.5\textwidth]{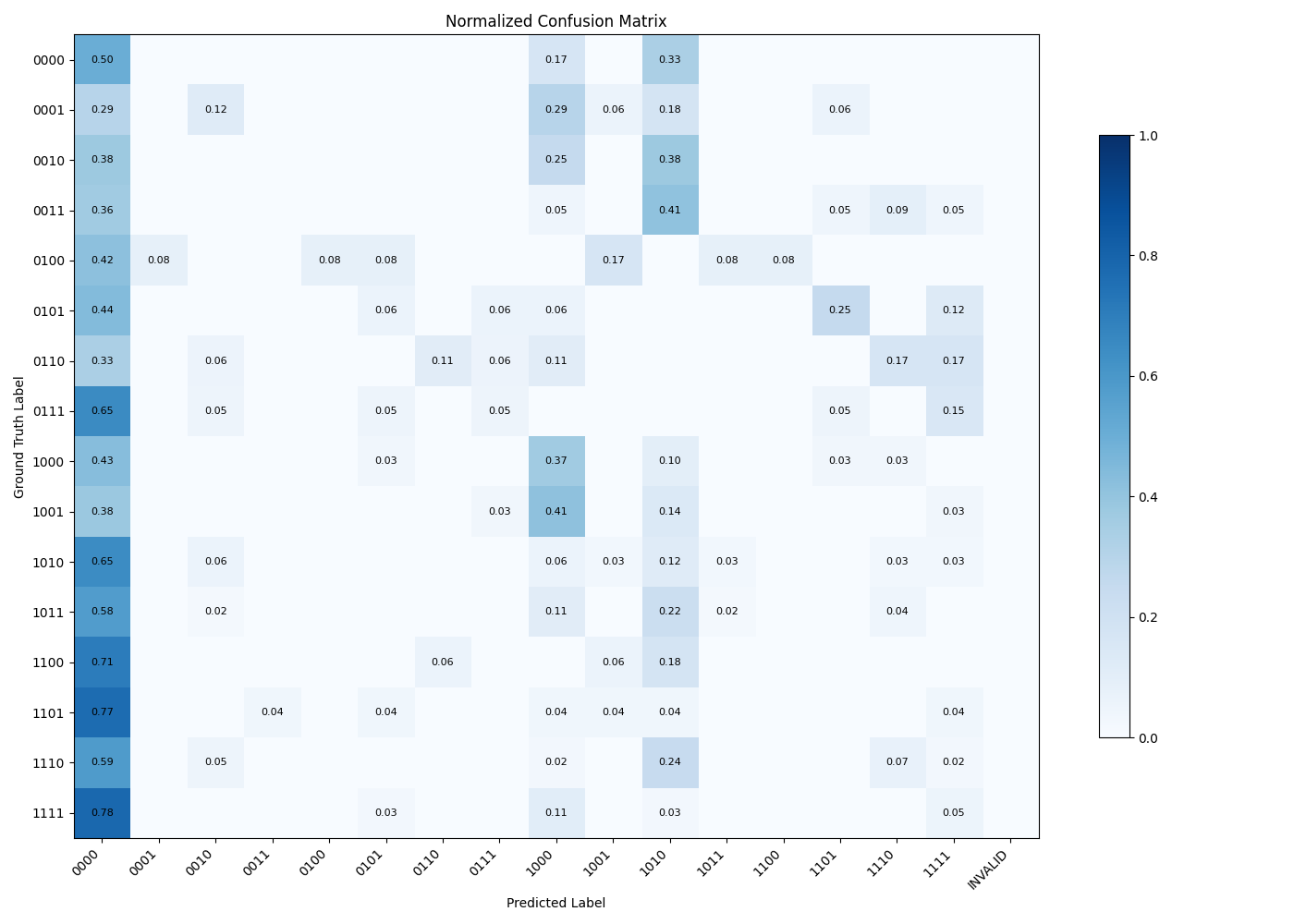}
    \caption{\textbf{Relation Type Confusion Matrix for Failure (gpt-oss-120b PBEBench-Lite)}: Confusion matrix showing the distribution of relation types in the model prediction vs ground truth. INVALID category indicates cases where the model fails to generate a valid cascade at all. The results are for gpt-oss-120b on PBEBench-Lite, and failure denotes Pass@1 = 0 with a sampling budget of 1.}
    \label{fig:conf_mat_gpt_oss_120b_failure}
\end{figure}

\begin{figure}[!tbh]
    \centering
    \includegraphics[width=0.5\textwidth]{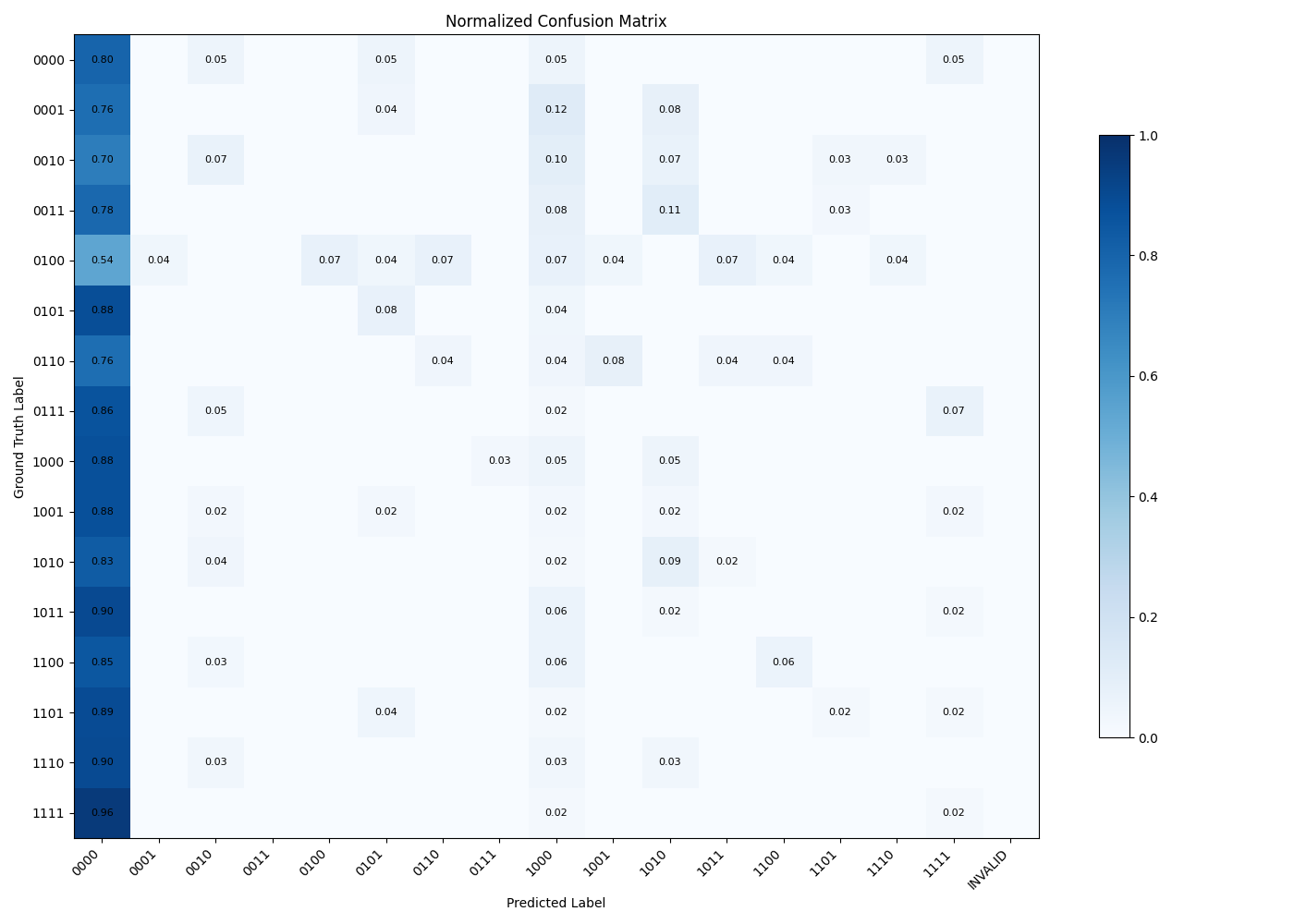}
    \caption{\textbf{Relation Type Confusion Matrix for Failure (gpt-oss-120b PBEBench-Lite)}: Confusion matrix showing the distribution of relation types in the model prediction vs ground truth. INVALID category indicates cases where the model fails to generate a valid cascade at all. The results are for gpt-oss-120b on PBEBench-Lite, and failure denotes Pass@1 = 0 with a sampling budget of 1.}
    \label{fig:conf_mat_gpt_oss_20b_failure}
\end{figure}

\begin{figure}[!tbh]
    \centering
    \includegraphics[width=0.5\textwidth]{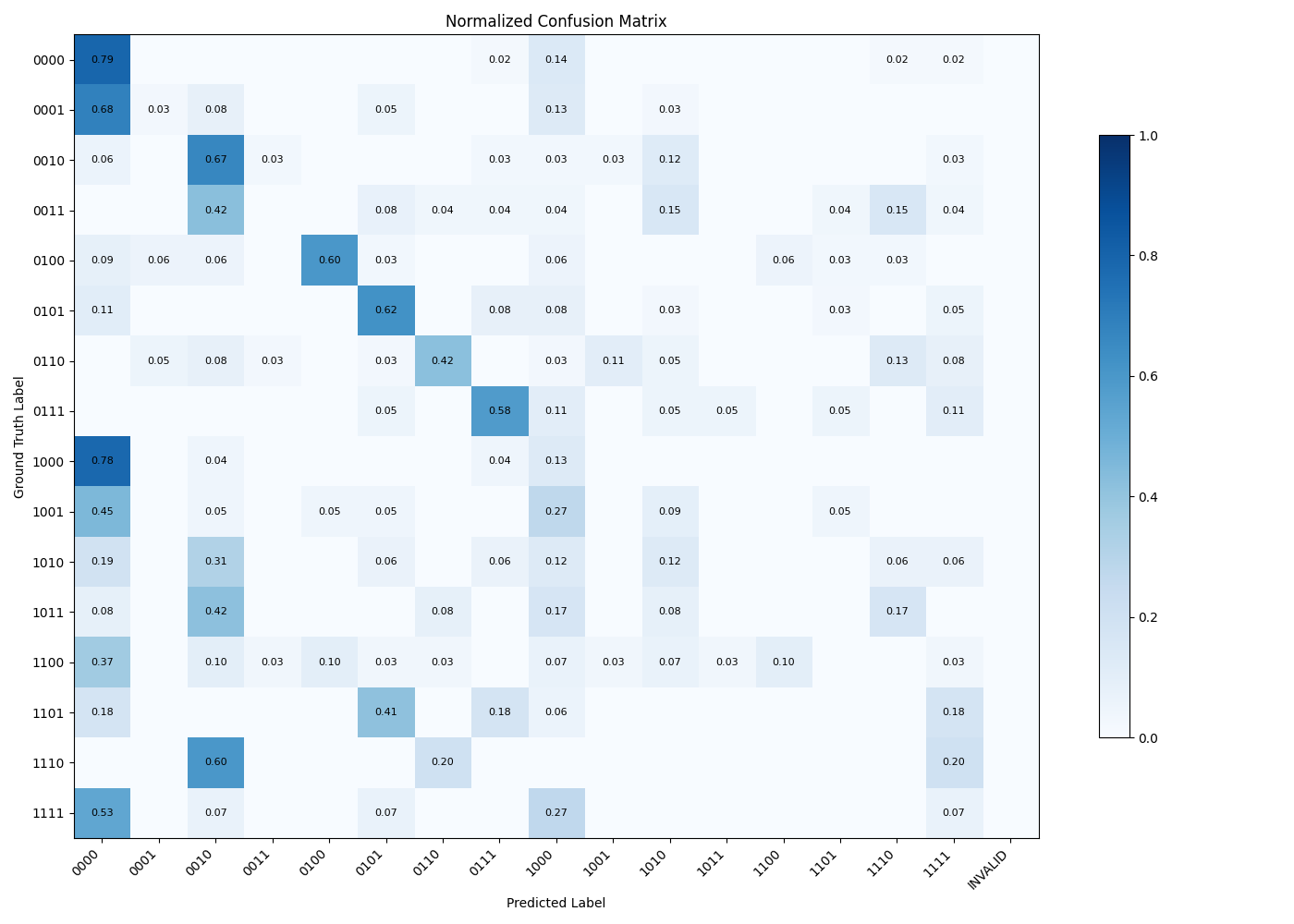}
    \caption{\textbf{Relation Type Confusion Matrix for Success (gpt-oss-120b PBEBench-Lite)}: Confusion matrix showing the distribution of relation types in the model prediction vs ground truth. INVALID category indicates cases where the model fails to generate a valid cascade at all. The results are for gpt-oss-120b on PBEBench-Lite, and success denotes Pass@1 = 1 with a sampling budget of 1.}
    \label{fig:conf_mat_gpt_oss_20b_success}
\end{figure}
\section{Use of AI Assistants}
We made limited use of AI assistants during the preparation of this manuscript, primarily for rephrasing, paraphrasing, and performing grammatical and stylistic checks. 
All technical content, analysis, and conclusions are solely those of the authors.

\end{document}